%% file: main.tex
\documentclass{article}

\usepackage{iclr2027_conference,times}
\usepackage{amsmath,amssymb,amsthm}
\usepackage{booktabs}
\usepackage{graphicx}
\usepackage{microtype}
\usepackage{multirow}
\usepackage{enumitem}
\usepackage{longtable}
\usepackage{xcolor}
\usepackage{tikz}
\usetikzlibrary{arrows.meta,positioning,shapes.geometric}
\usepackage{float}
\floatstyle{plain}
\newfloat{algorithm}{tbp}{loa}
\floatname{algorithm}{Algorithm}
\usepackage{hyperref}
\usepackage{url}
\usepackage{xurl}
\hypersetup{colorlinks=true,linkcolor=blue!45!black,citecolor=blue!45!black,
            urlcolor=blue!45!black}

\usepackage{thmtools,thm-restate}
\usepackage{makecell}

\newtheorem{theorem}{Theorem}

\newtheorem{corollary}{Corollary}
\theoremstyle{remark}
\newtheorem{remark}{Remark}
\theoremstyle{plain}

\newcommand{\E}{\mathbb{E}}
\newcommand{\R}{\mathbb{R}}
\newcommand{\tr}{\operatorname{tr}}
\newcommand{\diag}{\operatorname{diag}}
\newcommand{\cv}{\operatorname{CV}}

\newcommand{\Rch}{\mathcal{R}}
\newcommand{\Fr}{F^{\mathrm{R}}}

\newcommand{\Dab}{C_{ab}}

\title{Reducing the Adaptation Gap Through Reachable Fisher Geometry}

\author{
Wasif Jalal \\
Arizona State University \\
Tempe, Arizona, USA \\
\texttt{wjalal@asu.edu}
\And
Sachin Deb \\
Arizona State University \\
Tempe, Arizona, USA \\
\texttt{sachinde@asu.edu}
\And
Asif Salekin \\
Arizona State University \\
Tempe, Arizona, USA \\
\texttt{asif.salekin@asu.edu}
}

\iclrfinalcopy

\begin{document}
\maketitle

\begin{abstract} Parameter-efficient fine-tuning (PEFT) determines not only how many parameters are trained but which local directions a model can move, so similar adapters can affect subgroup losses differently. The relevant curvature matrices are infeasible to form at adapter scale, so practice uses scalar summaries, such as the Fisher trace. What can it reveal, and what does it discard? We answer through the \emph{reachable Fisher}: each subgroup's full-model Fisher pulled back through the adapter Jacobian, under likelihood losses the Gauss--Newton curvature the adapter can reach, whose trace is computable from score-gradient norms without forming the matrix. Under matched subgroup gradients, if the reachable-Fisher difference is positive definite on model-changing directions by a margin exceeding the Hessian--Fisher defect, every sufficiently small nonzero update along them increases the signed gap. The restricted operator norm of the Hessian difference instead determines worst-case quadratic change; a matrix-free Frobenius discrepancy upper-bounds its reachable-Fisher component, with defect and estimation errors kept separate. A trace cannot certify definiteness, nor do matched traces control magnitude: equality rules out a positive-definite difference yet permits the largest operator mismatch that trace allows. LoRA's factorization also precludes unrestricted definiteness in raw coordinates. Across 306 single-seed models, higher trace accompanies greater subgroup difficulty in 75.7\% of 1{,}218 eligible evaluations, and trace matching during training narrows the best--worst subgroup gap in all 30 dataset--encoder--adapter combinations. Yet a held-out audit exposes what it leaves uncontrolled: the observed-label operator discrepancy falls in 23 of these 30, the primary unbiased squared-Frobenius statistic in only 16. The trace is therefore a scalable diagnostic and training heuristic, not a certificate of local gap behavior or matrix alignment. \end{abstract}

\section{Introduction}

An adapter chooses more than how many parameters to train: it chooses the local directions in which the model can move. Two adapters with similar parameter counts and aggregate performance can therefore affect \emph{subgroups}, predefined groups of inputs such as skin-tone, age, or race groups, differently. This effect is inherently geometric. First-order differences depend on how subgroup losses slope along the directions available to the adapter; once those slopes match, their local behavior depends on how the losses curve along the same directions. Recent work accordingly uses curvature-related quantities in subgroup-aware training and auditing \citep{tran2022pruning, wang2023robust, sharma2026curvfed, dai2025fairsam, roy2026ethical}.

We study this geometry through the \emph{reachable Fisher}. Let an adapter be a smooth map $T$ from $p$ trainable parameters $\phi$ to full model parameters $\theta=T(\phi)$, with Jacobian $J=\partial T/\partial\phi$. For subgroup $a$ with full-model Fisher $F_a$, we define $\Fr_a = J^\top F_a J$. This pullback to the trainable coordinates retains the directions the adapter can express and excludes those it cannot reach~\citep{amari1998natural, martens2020new}. The exclusion matters: summaries computed from the full-model Fisher can create or hide an apparent reachable difference (Theorem~\ref{thm:onlyreachable}). Under our likelihood assumptions the Fisher is the positive-semidefinite Gauss--Newton component of the loss curvature; the Hessian in adapter coordinates differs from $\Fr_a$ by a \emph{Hessian--Fisher defect} collecting a model-curvature residual and a reparameterization term in the second derivatives of $T$, which we retain explicitly in every guarantee. The pullback does not remove coordinate dependence, so we compare traces only within a fixed model and parameterization.

At realistic adapter scale, forming the $p\times p$ reachable Fisher is infeasible, whereas its trace is computable from squared adapter-score norms without constructing the matrix. The trace measures total reachable curvature but discards its directional distribution. We therefore distinguish the normalized trace gap from the Frobenius discrepancy $\|\Fr_a-\Fr_b\|_F$, which measures full matrix mismatch. This distinction matters empirically: trace matching reduces the normalized trace gap in 20 of 30 audited cells, yet in 9 of those 20 the primary unbiased estimator of the squared Frobenius discrepancy increases. The scalar moved as intended even when the matrix-level estimate moved oppositely. We therefore ask: \emph{what does reachable-Fisher geometry determine about subgroup-loss changes under small PEFT updates, and what can it not determine?}
The answer separates direction from magnitude: when must every sufficiently small admissible update move a signed subgroup gap in the same direction, and how large can the worst-case local change be? These questions require different spectral information. Definiteness determines whether a quadratic curvature contribution has the same sign in every admissible direction, whereas an operator norm determines its largest absolute magnitude. A trace cannot certify definiteness or generally control the operator norm.


Factorized adapters add a complication. For LoRA \citep{hu2021lora}, $T$ adds a scaled rank-$r$ update $BA$ to the frozen weights, but $(B,A)$ and $(BG,G^{-1}A)$ produce the same update for any invertible $r\times r$ matrix $G$. This symmetry guarantees nonzero directions that leave the model unchanged. More generally, all directions that are inactive to first order form the Jacobian kernel, $\ker J=\{\delta:J\delta=0\}$. Directional statements must therefore be made on its orthogonal complement, the \emph{horizontal subspace} $\mathcal{H}_\phi=(\ker J)^\perp$, where every nonzero direction changes the model to first order. For subgroups $a$ and $b$ with losses $L_a$ and $L_b$, we study the signed surrogate-loss gap $L_a-L_b$. Its local change contains a linear slope term and a quadratic curvature term. When subgroup slopes match on the reachable directions, the linear term vanishes. If $\Fr_a-\Fr_b$ is positive definite on $\mathcal{H}_\phi$ with a margin larger than the Hessian--Fisher defect, then every sufficiently small nonzero horizontal additive update increases the signed gap; when $L_a-L_b\ge0$, it also widens the absolute gap. By contrast, the restricted operator norm of the Hessian difference determines the worst-case quadratic change; the reachable-Fisher operator norm and its Frobenius upper bound control it only up to the defect. Trace equality rules out the sufficient positive-definite regime without generally controlling magnitude.

This geometric question complements methods that optimize worst-group risk or impose outcome constraints \citep{sagawa2019distributionally,hardt2016equality,agarwal2018reductions}. Our theory motivates auditing trace matching at the matrix level rather than treating it as a guarantee of improved subgroup outcomes; empirically, we evaluate the trace as a diagnostic and training heuristic. Our contributions:
\begin{itemize}[leftmargin=*,itemsep=0pt,topsep=0pt]
    \item \textbf{We identify the relevant space for PEFT geometry.} We characterize the LoRA Jacobian kernel exactly, prove that its dimension is at least $r^2$ when $r\le\min(d_{\mathrm{in}},d_{\mathrm{out}})$, extend the obstruction to DoRA \citep{liu2024dora}, and show that definiteness must be evaluated on $\mathcal H_\phi$ (Section~\ref{sec:theory-setup}).
    \item \textbf{We separate direction from magnitude.} We give a defect-aware
    sufficient condition under which every sufficiently small horizontal update
    increases a signed subgroup gap (Theorem~\ref{thm:impossible},
    Corollary~\ref{cor:weyl}) and characterize the worst-case quadratic change
    exactly through an operator norm (Theorem~\ref{thm:worstcase}).
    Appendix~\ref{app:exact} verifies these identities on a small nonlinear
    network and illustrates how a non-negligible defect can prevent the
    reachable-Fisher certificate from applying.
    \item \textbf{We establish what common summaries miss.} Full-model summaries and spectra computed separately for each group can both fail (Theorem~\ref{thm:onlyreachable}), and trace matching has precise limitations (Theorem~\ref{thm:trace}). We also derive a matrix-free Gram identity that computes the Frobenius discrepancy between the two groups' sampled reachable Fishers without forming either matrix (Proposition~\ref{prop:gram}).
    \item \textbf{We test the resulting diagnostic and intervention.} Across 306 single-seed models and 1{,}218 eligible model--split--subgroup-axis evaluations, higher reachable-Fisher trace accompanies greater subgroup difficulty in 75.7\% of cases. Trace matching during training narrows the best--worst subgroup gap in all 30 paired task--encoder--adapter cells (median relative reduction 17.2\%) and the surrogate-loss gap in 27. In the held-out audit, the observed-label operator discrepancy decreases in 23 of these 30, whereas the primary unbiased squared-Frobenius statistic decreases in only 16.

\end{itemize}
\textbf{Takeaways.}
Within a fixed model and parameterization, the reachable-Fisher trace is a
scalable diagnostic and matrix-free training target, but not a certificate of
matrix alignment, local gap direction, or improved subgroup performance. The Gram
identity makes what it misses auditable, suggesting a practical workflow: use the
trace as a heuristic, then verify subgroup outcomes and matrix discrepancies. Our
guarantees are local, concern signed surrogate-loss gaps under horizontal
additive updates, and do not extend to rate-parity criteria such as equalized
odds (Appendix~\ref{app:eod}).

\section{Related work}
\label{sec:related}

\textbf{PEFT and subgroup performance.} Adapters differ in rank, scaling,
magnitude--direction split, initialization and basis
\citep{hu2021lora, kalajdzievski2023rank, liu2024dora, meng2024pissa, kopiczko2024vera},
and adapter-space flatness need not imply full-space flatness
\citep{li2024flat}. Benchmarks find large method--task variation in
utility and fairness \citep{dutt2023parameter, jin2024fairmedfm}, no consistent
ordering between LoRA and full fine-tuning in subgroup utility
\citep{ding2024fairness}, and fairness-aware PEFT methods that select components,
reweight groups or constrain representations
\citep{dutt2024fairtune, sukumaran2024fairlora, kamalaruban2026fairness, zhou2026fairnet}.

\textbf{Curvature objects and their estimators.} Four matrices are often called
curvature: the Hessian of the loss; the generalized Gauss--Newton matrix, which
drops the Hessian's term involving second derivatives of the network; the
\emph{expected} Fisher, which averages score outer products over labels drawn
from the model; and the \emph{observed-label (empirical)} Fisher, which uses the
dataset labels instead. They coincide only under specific conditions
\citep{martens2020new}, and the
empirical Fisher is generally neither the Fisher nor an unbiased estimate of it
\citep{kunstner2019limitations}. Matrix-free products, Lanczos methods and
structured approximations make curvature tractable at scale
\citep{pearlmutter1994fast, ghorbani2019investigation, yao2020pyhessian, martens2015optimizing, dangel2019backpack};
sharpness is not reparameterization invariant \citep{dinh2017sharp}, which
matters when one predictor has many adapter coordinates. Fisher-guided PEFT uses
Fisher information to choose \emph{where} to adapt
\citep{sung2021training, feng2026learning}; we ask how subgroups act on a
subspace once it is chosen.

\textbf{Curvature and subgroups.} Pruning enlarges both accuracy gaps and gaps in
gradient norm or top Hessian eigenvalue \citep{tran2022pruning}; CUMA matches
input-space curvature across groups \citep{wang2023robust}; CurvFed, FairSAM and
FairMerging use curvature statistics inside training or merging
\citep{sharma2026curvfed, dai2025fairsam, liufairmerging}; FLARE uses a
label-dependent Fisher penalty to discover latent groups \citep{roy2026ethical}.
These methods use curvature \emph{inside} a procedure and validate it
empirically. We instead ask what reachable curvature \emph{provably} controls,
distinguish the empirical training proxy from the expected Fisher, and show that
the trace these methods rely on mixes a mean-gradient term with a dispersion term.
Appendix~\ref{app:related} gives an extended review.

\input{sec_theory}

\section{From theory to computable statistics}
\label{sec:experiments}

Section~\ref{sec:theory} identifies matrix quantities governing local subgroup-gap changes, but a dense $p\times p$ reachable Fisher is infeasible at scale: with $p=1.6$M, one requires about 10~TB. We compute three matrix-free statistics from per-example gradients: a model-expected trace for diagnosis, an observed-label trace for training, and an observed-label Frobenius discrepancy for the held-out audit; the workflow is to match cheaply, then verify at the matrix level. The experiments evaluate these statistics, not the theorems' conditions: they do not establish slope matching, horizontal updates, or a small Hessian--Fisher defect. Theory addresses surrogate-loss gaps while applications prioritize predictive performance, so Sections~\ref{sec:results} and~\ref{sec:mitigation-results} report both.

\subsection{Post-training diagnostic: does trace rank subgroup difficulty?}
\label{sec:diagnostic}

The diagnostic tests whether subgroups to which the fitted model is more locally
sensitive are also harder for that model. For each input we compute the
model-expected reachable-Fisher trace contribution with respect to $\phi$,
\begin{equation}
t_i=\begin{cases}
\sum_{c=1}^{C}q_\theta(c\mid x_i)
\lVert\nabla_\phi\log q_\theta(c\mid x_i)\rVert_2^2,
& \text{categorical},\\[1mm]
\sigma^{-2}\lVert\nabla_\phi\mu_\theta(x_i)\rVert_2^2,
& \text{Gaussian regression},
\end{cases}
\label{eq:trace-estimator}
\end{equation}
where $q_\theta(c\mid x_i)$ is the predicted probability of class $c$ among $C$ classes and $\mu_\theta(x_i)$ the predicted mean, with $\sigma^2$ estimated from held-out residuals; we average within subgroups
to obtain $\widehat t_a=\tr\widehat F^{\mathrm{R},\mathrm{exp}}_a$, the trace of a sample estimate of $\Fr_a$ in~\eqref{eq:reachable-fisher}.
For each fixed model, data split and subgroup axis (the attribute defining the groups, e.g.\ skin tone), we correlate these traces with subgroup
difficulties across the groups on that axis:
\begin{equation}
r=\operatorname{corr}_a(\widehat t_a,d_a),
\qquad
d_a=1-\operatorname{BAcc}_a\ \text{(classif.)},\quad
d_a=\operatorname{MAE}_a\ \text{(regr.)},
\label{eq:correlation}
\end{equation}
with balanced accuracy (BAcc) and mean absolute error (MAE) computed per group on
the same evaluation split as $\widehat t_a$. This is an \emph{aligned} Pearson correlation: $r>0$ means higher
trace accompanies greater difficulty. Because a correlation across two groups is
necessarily $\pm1$, it is reported only for axes with at least three groups.
Correlations are averaged after Fisher's $z$ transformation. Correlation measures
alignment; the coefficient of variation (CV) of $\{\widehat t_a\}$ separately
measures how unevenly trace is spread across groups; it is comparable only
within a fixed model and parameterization (Lemma~\ref{lem:projection}).

\subsection{Training intervention: can trace matching narrow subgroup gaps?} \label{sec:objective}  
The intervention tests whether equalizing the subgroup traces during training narrows their performance gap. A group-balanced sampler draws equally from the constrained groups, reducing sampling noise for rare groups. Let $u_i=\nabla_\phi\ell_i$ be the per-example observed-label gradient, via vectorized automatic differentiation, and let $\mathcal A_B$ denote the groups in batch $B$. For each group, $c_a=\frac{1}{n_a}\sum_{i\in a}\|u_i\|_2^2=\tr\widehat F^{\mathrm{R},\mathrm{obs}}_a$. We compare $c_a$ with the detached mean $c_{\mathrm{ref}}=\operatorname{sg}\!\big[|\mathcal A_B|^{-1}\sum_{a\in\mathcal A_B}c_a\big]$ through $e_a=(c_a-c_{\mathrm{ref}})/\max\{c_{\mathrm{ref}},\epsilon_{\mathrm{den}}\}$, which normalizes traces that vary by orders of magnitude and avoids division by zero. The penalty $R(\phi,\mu)=\sum_a\mu_a\rho_\kappa(e_a)$ uses the Huber loss with $\kappa=1$, quadratic near matching and linear for outlying estimates. We optimize
\begin{equation} \mathcal L_{\mathrm{task}}(\phi)+\omega R(\phi,\mu). 
\label{eq:objective-main} 
\end{equation} 
The weights $\mu$ follow a bounded exponentiated-gradient update on moving averages of $e_a$, giving more weight to groups that remain above the reference. With $g_{\mathrm{task}}=\nabla_\phi\mathcal L_{\mathrm{task}}$ and $g_{\mathrm{pen}}=\nabla_\phi R$, we set $\omega=\min\{1,\beta\|g_{\mathrm{task}}\|/\|g_{\mathrm{pen}}\|\}$ with $\beta=\tfrac12$. This caps the penalty-gradient norm at half the task-gradient norm and guarantees $g_{\mathrm{task}}^\top(g_{\mathrm{task}}+\omega g_{\mathrm{pen}})\ge \tfrac12\|g_{\mathrm{task}}\|_2^2$, so before AdamW preconditioning the penalty may rotate but cannot reverse the task gradient (Appendix~\ref{sec:method} reports uncapped behavior). 
Retaining the task loss prevents the solution in which $R$ is reduced
by making the predictor insensitive.

\paragraph{What the intervention does not guarantee.} The penalty acts on the trace, which leaves three gaps between it and the theory. It does not enforce slope matching, so the linear term of~\eqref{eq:gap-expansion} is untouched, and $c_a=\|\hat g_a\|^2+\tr\widehat\Sigma_a$ mixes a mean-gradient term with within-group dispersion (Lemma~\ref{lem:trace-decomp}). It acts on the observed-label empirical Fisher rather than the model-expected Fisher of the theory. And a small trace gap bounds a matrix discrepancy only under Loewner ordering (Theorem~\ref{thm:trace}(iii)), which we do not measure. Its effect therefore has to be checked at the matrix level: after training we measure, for the baseline and trace-matched models on the same held-out group-balanced batches, the trace gap and the Frobenius discrepancy of Proposition~\ref{prop:gram}, which bounds the curvature channel up to defect and estimator error (Corollary~\ref{cor:closure}); the audit protocol is in Appendix~\ref{sec:frobenius}.

\subsection{Datasets and model grid}
\label{sec:data}

Table~\ref{tab:data-main} lists the four public datasets and the model grid.
The 17 pretrained image encoders span supervised, self-supervised,
convolutional, hierarchical and vision--language pretraining, each trained with
full fine-tuning, LoRA, DoRA, rsLoRA, PiSSA and VeRA (PiSSA and VeRA cannot
attach to ResNet-50), giving $306$ models; training details
are in Appendix~\ref{app:repro}.

\begin{table}[!h]
\vspace{-0.5em}
\caption{Diagnostic datasets and model grid. A subgroup axis is the attribute
defining the groups; group counts in parentheses. Full details in
Table~\ref{tab:data}.}
\label{tab:data-main}
\centering
\footnotesize
\resizebox{\linewidth}{!}{%
\begin{tabular}{@{}llll@{}}
\toprule
Dataset & Task & Subgroup axes (groups) & Models \\
\midrule
Fitzpatrick17k \citep{groh2021evaluating} & 3-class diagnosis & skin tone (7), binary tone (3) & 17 encoders \\
UTKFace \citep{zhang2017age} & age regression & race (5) & 17 encoders \\
FHIBE face \citep{xiang2025fair} & age regression & ancestry (6) & 17 encoders \\
Diabetes, 130 hospitals \citep{strack2014impact} & 30-day readmission & race (6) &
\shortstack[l]{FT-Transformer \\ \citep{gorishniy2021revisiting}} \\
\bottomrule
\end{tabular}
}
\vspace{-0.5em}
\end{table}

\subsection{Statistical units and uncertainty}
\label{sec:stats}

This subsection defines the 306-model diagnostic sweep; Section~\ref{sec:mitigation-results}
describes the 30 paired intervention cells separately. A diagnostic \emph{cell}
is one trained model evaluated on one split and one subgroup axis (the ``train''
split is an internal slice of the training partition never used for gradient
updates, so all splits are out-of-fit), for example a
ViT--LoRA model on the UTKFace test split across race groups; each eligible cell
contributes one correlation~\eqref{eq:correlation}. Excluding two-group axes leaves $1{,}218$ cells.
 The
\emph{positive-sign rate} is the fraction of cells with $r>0$. Cells are not
independent (they share encoders, datasets and splits), so cell shares are
descriptive and the 17 encoders are the repeated unit for image-task uncertainty
(2,000-replicate bootstrap). The performance gap of a model on an axis
is $\max_ad_a-\min_ad_a$. The sweep uses one training seed.

\section{Results}
\label{sec:results}

\textbf{Reachable Fisher tracks subgroup difficulty, heterogeneously.} Across 1{,}218 cells, $r$ is positive in 75.7\% and has a descriptive Fisher-$z$ mean of $0.637$. Averaging encoders equally (Table~\ref{tab:alignment}), alignment is strongest for UTKFace ($0.930$), intermediate for FHIBE face ($0.601$), positive but architecture-sensitive for Fitzpatrick17k ($0.320$), where one encoder is strongly negative (Figure~\ref{fig:results}, Appendix~\ref{app:additional}), and weak for the single tabular architecture ($0.117$), which has no encoder-level interval. Replacing task difficulty with the surrogate loss, the quantity the theorems concern, strengthens the pooled association to 85.5\% positive over 406 test-split cells, though not uniformly: Fitzpatrick17k nearly doubles to $0.611$ while FHIBE face falls to $0.411$. Method-specific and normalization checks are in Appendix~\ref{app:additional}. The sweep therefore supports the ranking premise behind the penalty but does not test a theorem; by Lemma~\ref{lem:projection}, traces are comparable only within a fixed model and parameterization.

\begin{table}[!h]
\vspace{-0.5em}
\caption{Subgroup Fisher--difficulty alignment. Image-task means weight encoders equally, with encoder-bootstrap 95\% intervals; sign rates are descriptive. Surrogate-loss results use the test split only, which is why their cell counts are smaller. The pooled row is a cell-level Fisher-$z$ mean and is not comparable with the encoder-cluster means.}
\label{tab:alignment}
\centering
\small
\resizebox{\linewidth}{!}{%
\begin{tabular}{lrrrrrr}
\toprule
 & \multicolumn{3}{c}{task-metric difficulty} & \multicolumn{3}{c}{surrogate-loss difficulty} \\
\cmidrule(lr){2-4}\cmidrule(lr){5-7}
Setting & cells & positive sign & encoder-cluster mean $r$ [95\% CI] & cells & positive sign & mean $r$ \\
\midrule
Diabetes FT-Transformer & 18 & 61.1\% & $0.117$ [not estimable] & 6 & 66.7\% & $+0.284$ \\
Fitzpatrick17k & 600 & 59.3\% & $0.320\ [0.184,\,0.424]$ & 200 & 81.5\% & $+0.611$ \\
UTKFace & 300 & 98.3\% & $0.930\ [0.901,\,0.950]$ & 100 & 98.0\% & $+0.886$ \\
FHIBE face & 300 & 86.7\% & $0.601\ [0.511,\,0.679]$ & 100 & 82.0\% & $+0.411$ \\
\midrule
All cells (pooled) & 1{,}218 & 75.7\% & $0.637$ & 406 & 85.5\% & $+0.668$ \\

\bottomrule
\end{tabular}
}
\vspace{-0.5em}
\end{table}

\input{sec_mitigation}


\section{Discussion and conclusion}
\label{sec:discussion}
\label{sec:conclusion}

Subgroup behaviour under PEFT depends on the curvature visible through the adapter, and the sign and magnitude of a gap change need different matrix information. The theory is sharp but conditional by construction: it needs slope matching, horizontal updates, and a small defect. On a small LoRA network where every matrix is formed exactly, the expansion and the worst-case term behave as proved while the reachable-Fisher certificate fails, the defect being large (Appendix~\ref{app:exact}); the distance between the theory's objects and the computable ones is measured here, not assumed (Appendix~\ref{sec:theory-scope}). Empirically, the trace orders subgroup difficulty within a model and, as a training signal, narrows gaps modestly and unevenly; we recommend it within a fixed parameterization, alongside direct outcome measurements. \textbf{Scope and outlook.} Closing that distance is the natural next step: estimating the defect directly, moving beyond the in-distribution regime, where Theorem~\ref{thm:bayesfloor} leaves loss geometry least room, and training against the audited discrepancy rather than its trace.

\subsection*{AI Use Statement}
We used generative AI tools for language editing, \LaTeX{} debugging, and consistency checks against experimental outputs. All AI-assisted work was reviewed and verified by the authors, who take full responsibility for the final content.

\subsection*{Ethics statement}

This is a secondary analysis of existing benchmark datasets and stored model outputs; it collects no new human-subject data. The attributes labeled skin tone, race, sex, and gender are dataset-specific measurements or annotations and should not be treated as interchangeable biological categories. Subgroup estimates for small cells can be unstable, and a diagnostic association must not be interpreted as evidence that a model is safe or equitable in deployment. Any release must follow the original datasets' licenses and governance requirements. Models evaluated here are research artifacts and are not validated for clinical use.

\subsection*{Reproducibility statement}

\textbf{All the code used in this study is made available through} \href{https://anonymous.4open.science/r/Optimization-Fisher-With-Low-Ranked-Adaptation-In-Deep-Learning-Models-461E}{this Anonymous GitHub repository}.\footnote{\url{https://anonymous.4open.science/r/Optimization-Fisher-With-Low-Ranked-Adaptation-In-Deep-Learning-Models-461E}}

Section~\ref{sec:theory} states the main formal results and their assumptions;
Appendix~\ref{app:theory} gives the extended statements and scope, and
Appendix~\ref{app:proofs} provides complete proofs and machine-precision checks
of the constructed examples and algebraic identities.
Section~\ref{sec:experiments} specifies the model grid, metrics and statistical
unit; Appendix~\ref{app:mitigation} lists every paired run and documents how the
reported results are regenerated; Appendix~\ref{sec:method} specifies the
training objective and matrix-audit procedure; and
Appendix~\ref{app:exact} presents the exact small-network verification.
Appendix~\ref{app:repro} records checkpoint identifiers, splits, estimator
details, hyperparameters, seeds, and exclusions. The supplemental repository
contains the experiment runner, saved outputs, analysis materials, and the
materials needed to regenerate the reported tables and figures.

\bibliography{references}
\bibliographystyle{iclr2027_conference}

\appendix

\input{app_theory}

\input{theory_proofs}

\input{methodology}

\section{Extended related work}
\label{app:related}

\subsection{Parameter-efficient adaptation and fair fine-tuning}

PEFT changes more than the number of trainable parameters: it changes the coordinates and directions through which a pretrained model can move. LoRA restricts each weight update to a low-rank product \citep{hu2021lora}; rsLoRA changes its rank-dependent scaling, DoRA separates magnitude from direction, PiSSA initializes from leading singular components, and VeRA learns scaling vectors over shared random bases \citep{kalajdzievski2023rank, liu2024dora, meng2024pissa, kopiczko2024vera}. Expressivity results make the role of rank explicit \citep{zeng2024expressive}, while Flat-LoRA shows that a solution that is flat in adapter coordinates can remain sharp in the full parameter space \citep{li2024flat}.

Broad evaluations show why this question should not be answered from one adapter or backbone. A medical-imaging benchmark spanning 17 PEFT methods and six datasets found strong performance in low-data regimes but substantial method--task variation \citep{dutt2023parameter}. FairMedFM similarly reports heterogeneous utility--fairness trade-offs across 20 foundation models, 17 medical datasets, and several adaptation protocols \citep{jin2024fairmedfm}. Direct comparisons of LoRA and full fine-tuning find no consistent ordering in subgroup utility or calibration \citep{ding2024fairness}, and full-model fine-tuning studies show that worst-group accuracy can depend sharply on balancing strategy and backbone scale \citep{labonte2024group}. Fairness-aware PEFT methods respond by selecting trainable components, regularizing group-conditioned losses, activating conditional low-rank updates, or removing sensitive information from the learned representation \citep{dutt2024fairtune, sukumaran2024fairlora, kamalaruban2026fairness, zhou2026fairnet, bhosale2026fairllava}. Our method is complementary: it regularizes subgroup sensitivity in the adapter space and evaluates outcome-level fairness separately.

\subsection{The Hessian, generalized Gauss--Newton matrix, and Fisher information}

Several matrices called ``curvature'' appear in deep learning, but they answer different questions. The Hessian $H=\nabla_\theta^2 L$ is the exact local second derivative of the chosen training objective. It can be indefinite in a nonconvex network and therefore records both positive and negative curvature. If the loss is a convex function of the model output, the generalized Gauss--Newton (GGN) matrix pulls the output-space loss Hessian back through the network Jacobian. The GGN is positive semidefinite and omits the residual term containing second derivatives of the network itself \citep{botev2017practical, martens2020new}. Thus, the Hessian and GGN coincide for a model linear in its parameters, and can be close when the omitted model-curvature term is small, but they are not generally identical.

For a conditional probabilistic model, the expected Fisher is the covariance of score gradients when the label is sampled from the model. It is positive semidefinite and defines the Riemannian metric used by natural-gradient descent \citep{amari1998natural, martens2020new}. Under standard regularity conditions, the model-expected negative log-likelihood Hessian equals the Fisher; for common exponential-family output models, the Fisher also equals the corresponding GGN \citep{martens2020new}. These identities explain why the Fisher can serve as a stable Hessian surrogate in likelihood models, but only for the relevant expectation and loss. The equality need not hold for a finite dataset, a misspecified model, an arbitrary loss, or a restricted parameterization.

A further distinction is crucial here. The empirical Fisher is the average outer product of gradients evaluated at the \emph{observed} labels. Despite its name, it is generally neither the Fisher under the model distribution nor an unbiased Monte Carlo estimate of that Fisher; it also need not approximate the Hessian well \citep{kunstner2019limitations}. Its trace is simply the mean squared norm of per-example observed-label gradients. That makes it inexpensive, nonnegative, and directly tied to the examples driving an optimizer, but it should be interpreted as an optimization-sensitivity statistic rather than as exact second-order curvature. Our training penalty uses this empirical trace. Our post-training diagnostic instead samples or sums over model-predicted labels to estimate the expected Fisher. We keep the two quantities separate throughout.

\subsection{Estimating curvature at neural-network scale}

Forming a dense Hessian or Fisher for $p$ parameters costs $O(p^2)$ storage, so most methods estimate only the structure needed by a downstream task. Pearlmutter's reverse-over-forward construction computes exact Hessian--vector products at roughly gradient cost without materializing the matrix \citep{pearlmutter1994fast}. Hessian-free optimization combines such products with iterative linear solvers \citep{martens2010deep}, while stochastic Lanczos quadrature uses products to estimate the eigenvalue density and leading spectral structure \citep{ghorbani2019investigation}. PyHessian packages trace, top-eigenvalue, and spectral-density estimation for modern networks \citep{yao2020pyhessian}. These matrix-free methods preserve more global structure than a diagonal estimate, but require repeated products.

Structured approximations trade some fidelity for cheaper storage and inversion. K-FAC approximates layerwise Fisher blocks by Kronecker products \citep{martens2015optimizing}; related block-diagonal Gauss--Newton methods propagate curvature through the network \citep{botev2017practical}. BackPACK extends backpropagation to recover per-example gradients, diagonal Hessian or GGN terms, and matrix-factor representations \citep{dangel2019backpack}. At the simplest end, diagonal Fisher, diagonal Hessian, and their traces retain only coordinatewise or aggregate sensitivity. Hutchinson's randomized estimator recovers a trace or diagonal from matrix--vector products \citep{hutchinson1989stochastic}; AdaHessian combines such diagonal estimation with spatial and temporal averaging inside an optimizer \citep{yao2021adahessian}; and low-rank or Lanczos approximations retain dominant eigendirections. The choice is task dependent: natural-gradient updates require an inverse or linear solve, sharpness studies emphasize extreme eigenvalues or neighborhoods, and pruning or importance scoring often needs only diagonal entries.

Our estimator occupies the inexpensive end of this spectrum. For each example, it obtains the gradient only with respect to the trainable DoRA and head parameters, then sums its squared entries. Averaging these scalars within a subgroup gives the trace of a subgroup empirical-Fisher matrix in the reachable adaptation coordinates. We do not form, invert, or eigendecompose a matrix. Differentiating the trace penalty during training does require second-order automatic differentiation, which motivates the chunked accumulation described in Section~\ref{sec:method}.

\subsection{What Fisher and Hessian estimates have been used for}

The oldest use is optimization. Newton and Hessian-free methods use loss curvature, whereas natural gradient preconditions the gradient by the Fisher metric and is approximately invariant to smooth reparameterization \citep{amari1998natural, martens2010deep, martens2020new}. K-FAC makes a structured natural-gradient approximation practical for neural networks \citep{martens2015optimizing}. Fisher SAM similarly defines the adversarial neighborhood of sharpness-aware minimization with information geometry rather than a Euclidean norm \citep{foret2020sharpness, kim2022fisher}.

Curvature also supports diagnosis and generalization studies. Work on flat minima relates local sensitivity to generalization \citep{hochreiter1997flat, keskar2016large}, and visualization and spectral studies examine Hessian slices, outliers, and bulk eigenvalue structure \citep{li2018visualizing, ghorbani2019investigation}. However, Euclidean sharpness can be changed by a function-preserving reparameterization \citep{dinh2017sharp}; information-geometric measures have therefore been proposed to obtain reparameterization-invariant notions of sensitivity \citep{liang2019fisher, jang2022reparametrization}. This caveat is especially relevant to PEFT, where the same predictor can be represented in different adapter coordinates.

A third line of work treats curvature as parameter importance. Optimal Brain Damage used diagonal second derivatives for pruning \citep{lecun1989optimal}; modern methods such as WoodFisher approximate an inverse empirical Fisher to predict the effect of deleting weights \citep{singh2020woodfisher}. Fisher-weighted penalties protect parameters important to old tasks in elastic weight consolidation \citep{kirkpatrick2017overcoming}, and Fisher-weighted averaging supports model merging \citep{matena2022merging}. Hessian inverses also underlie influence functions that approximate how training points affect predictions \citep{koh2017understanding}. In Bayesian deep learning, the Hessian, GGN, or Fisher supplies the local precision in Laplace approximations \citep{ritter2018scalable, immer2021improving}. These uses are related by a common local-sensitivity calculation, but they do not make the underlying matrices interchangeable.

Fisher-guided PEFT brings these ideas into restricted trainable subspaces. FISH Mask and FISH-Tuning select trainable parameters or modules \citep{sung2021training, xue2025fish}; Learning in the Fisher Subspace and FiLoRA select or initialize low-rank directions \citep{feng2026learning, han2026filora}; FI-LoRA allocates rank using importance, and FoRA selects adapted layers \citep{zeng2026fi, park2026fora}. These methods use Fisher information to decide \emph{where} adaptation should occur. Our question is different: once a trainable subspace has been chosen, do observed subgroups act on that subspace with comparable strength?

\subsection{Subgroup robustness with incomplete demographics}

When group annotations are available, group DRO optimizes worst-group risk, while equal opportunity and equalized odds compare conditional error rates; constrained reductions turn such criteria into trainable objectives \citep{sagawa2019distributionally, hardt2016equality, agarwal2018reductions}. Subgroup fairness, multicalibration, and multiaccuracy extend evaluation beyond a small set of coarse, predeclared groups \citep{kearns2018preventing, hebert2018multicalibration, kim2019multiaccuracy}. The choice of coarse, fine-grained, intersectional, or noisy group definitions can itself change which mitigation method appears effective \citep{alloula2025subgroups}. This is particularly important in medical imaging, where conclusions also vary with the dataset, backbone, fairness metric, and model-selection rule \citep{zong2022medfair, jin2024fairmedfm}.

Without complete demographic annotations, prior work replaces named groups with distributional, error-based, or representation-based surrogates. Distributionally robust and adversarially reweighted objectives protect high-loss subsets without observing their identities \citep{hashimoto2018fairness, lahoti2020fairness}. GEORGE and EIIL infer latent subclasses or environments, whereas Just Train Twice uses errors from an initial model to upweight likely minority examples \citep{sohoni2020no, creager2021environment, liu2021just}. More recent methods minimize training-loss variance without demographic priors or construct soft neighborhoods from observed-label gradients \citep{wang2024towards, luo2025fairness}. In medical imaging, feature-based subgroup discovery can expose performance gaps that are larger than those found from available metadata \citep{bissoto2025subgroup}. Our method does not infer latent groups: it assumes recorded subgroup labels for both aggregation and training. The expected-Fisher contribution is computed per input without a subgroup label, but forming a subgroup statistic still requires the labels at the aggregation step.

\subsection{Curvature and subgroup behavior}

Several fairness studies nevertheless use Hessian or Fisher-based quantities as subgroup signals. Pruning can enlarge both accuracy gaps and gaps in gradient norm or the top Hessian eigenvalue \citep{tran2022pruning}. CUMA matches distributions of input-space loss curvature across observed groups under shift \citep{wang2023robust}. CurvFed regularizes the top eigenvalue of a parameter-space Fisher surrogate without demographics in federated training \citep{sharma2026curvfed}; FairSAM modifies sharpness-aware training to balance robustness across groups \citep{dai2025fairsam}; and FairMerging uses subgroup gradient and curvature statistics to control disparities introduced by weight-space model merging \citep{liufairmerging}. Among fairness-without-demographics methods, the recent FLARE preprint is closest in motivation: it combines embeddings, cross-entropy loss, and a label-dependent Fisher penalty to discover latent behavioral clusters, then trains cluster-specific models \citep{roy2026ethical}. All of these methods use geometry inside an optimization, aggregation, or subgroup-discovery procedure.

Our contribution differs from all of these in what it establishes rather than in
which statistic it computes. Prior work uses curvature \emph{inside} an
optimization, aggregation, or subgroup-discovery procedure and validates it
empirically. We instead ask what curvature in the reachable subspace
\emph{provably} controls. Section~\ref{sec:theory} shows that,
once the slope channel is controlled, a reachable-Fisher difference that is
positive definite on the horizontal subspace, above the Hessian--Fisher defect,
forces the signed gap upward under every small horizontal update; that
full-model statistics that ignore the adapter Jacobian can err in both
directions; and that trace equality cannot control the operator norm that sets
the worst-case curvature channel. This offers one
possible explanation for why adapter-robustness findings conflict
\citep{ding2024fairness, sukumaran2024fairlora, li2024flat}, though we do not
claim it accounts for any particular disagreement in that literature. Unlike CUMA, our perturbation is
in parameter rather than input space. Unlike CurvFed and FLARE, we distinguish
the empirical training proxy from the expected-Fisher geometry used for
evaluation, and we prove that the trace both methods rely on conflates a
mean-gradient term with a dispersion term. Unlike work that uses curvature to
accelerate optimization, prune parameters, or approximate a posterior, our
target is a matrix discrepancy across known subgroups, with a matrix-free
estimator and a guarantee attached to it. Finally, we state a limit: the
performance gap converges to a Bayes-risk difference rather than being bounded
below by it. This lets the premise be tested without treating
geometric alignment as a fairness guarantee.

\section{Reproducibility details}
\label{app:repro}

\begin{table}[ht]
\caption{Evaluation settings. Split sizes are the stored fine-tuning train/validation/test partitions; diabetes additionally uses 42,741 earlier encounters for pretraining before applying FFT or PEFT. ``Axes'' lists the criteria of subgroups used in curvature-difficulty correlations.}
\label{tab:data}
\centering
\small
\resizebox{\linewidth}{!}{%
\begin{tabular}{lllrll}
\toprule
Dataset & task & output & train/val/test & axes (groups) & encoders \\
\midrule
Fitzpatrick17k & image classification & 3 classes & 11,592/2,484/2,484 & tone (7), binary tone (3) & 17 \\
UTKFace & image regression & age & 13,796/2,956/2,957 & race (5) & 17 \\
FHIBE face & image regression & age & 5,556/1,192/1,210 & ancestry (6) & 17 \\

Diabetes 130H& tabular classification & binary & 28,495/20,353/10,177 & race (6) & FT-T \\
\bottomrule
\end{tabular}
}
\end{table}

\subsection{Training}

Every model receives a fresh prediction head. Full fine-tuning updates all backbone and head parameters; PEFT updates the head and adapter only. LoRA-family rank is 8 with scale 16 and no adapter dropout; VeRA rank is 256. We use AdamW for at most 20 epochs, two warm-up epochs followed by cosine decay, gradient clipping at 1, and early stopping with patience 5. Image learning rates are $10^{-4}$ for full fine-tuning, $5\times10^{-4}$ for LoRA-family methods, and $5\times10^{-3}$ for VeRA; the tabular full-fine-tuning rate is $5\times10^{-4}$. Each training partition contains an internal 15\% held-out slice that is never used for gradient updates, so its reported ``train'' metrics are out-of-fit.

\subsection{Image roster}

The image roster comprises ViT, DeiT, CrossViT, RoPE-ViT, MaxViT, Swin, ResNet-50, ConvNeXt, ConvNeXt V2, SigLIP~2, MetaCLIP~2, DINOv2, DINOv3, I-JEPA, MAE, BEiT, and EVA-02. It spans supervised, self-supervised, convolutional, hierarchical, and vision--language pretraining families \citep{dosovitskiy2020image, touvron2021training, chen2021crossvit, heo2024rotary, tu2022maxvit, liu2021swin, he2016deep, liu2022convnet, woo2023convnext, tschannen2025siglip, chuang2026meta, oquab2023dinov2, simeoni2025dinov3, assran2023self, he2022masked, bao2021beit, fang2024eva}. Each compatible block is trained with full fine-tuning, LoRA, DoRA, rsLoRA, PiSSA, and VeRA. PiSSA and VeRA cannot attach to the convolution-only ResNet implementation, producing 100 models per image dataset rather than 102. With six models for the tabular architecture, the completed grid contains $3\times100+6=306$ models.

\subsection{Data preparation and leakage controls}

Fitzpatrick17k removes 17 rows explicitly marked as wrongly labeled, deduplicates byte-identical files by hash, and stratifies a 70/15/15 split on apparent skin tone crossed with the three-way target. The source does not provide lesion or patient identifiers; near-duplicate crops may therefore cross splits. FHIBE face keeps primary-subject rows only (7,958 crops from 1,651 subjects; the secondary rows carry another person's demographics) and splits 70/15/15 grouped by \texttt{subject\_id}, stratified on ancestry crossed with the subject's median age binned at $<$25/25--34/35--49/50+. No subject appears in more than one split, which matters because subjects contribute several photographs; the 15\% out-of-fit slice within Train is itself subject-grouped.
 UTKFace uses a 70/15/15 split stratified by the race--gender cross-product. The diabetes cohort is ordered by encounter identifier: 42\% pretraining, 28\% fine-tuning train, 20\% validation, and 10\% test. Both tabular models use identical rows. Image training partitions reserve 15\% of the nominal training split for out-of-fit reporting; the same principle is used by the tabular pipeline. Split seed, run seed, and bootstrap seed are all 12,345 but are represented separately in code.

\subsection{Exact model roster}

The image checkpoint identifiers are: \nolinkurl{google/vit-base-patch16-224}, \nolinkurl{facebook/deit-base-patch16-224}, \nolinkurl{crossvit_18_dagger_408.in1k}, \nolinkurl{vit_base_patch16_rope_reg1_gap_256.sbb_in1k}, \nolinkurl{maxvit_base_tf_384.in1k}, \nolinkurl{microsoft/swin-base-patch4-window7-224}, \nolinkurl{microsoft/resnet-50}, \nolinkurl{facebook/convnext-base-224}, \nolinkurl{facebook/convnextv2-base-22k-224}, \nolinkurl{google/siglip2-base-patch16-224}, \nolinkurl{facebook/metaclip-2-worldwide-b16}, \nolinkurl{facebook/dinov2-base}, \nolinkurl{facebook/dinov3-vitb16-pretrain-lvd1689m}, \nolinkurl{facebook/ijepa_vith14_1k}, \nolinkurl{facebook/vit-mae-base}, \nolinkurl{microsoft/beit-base-patch16-224-pt22k}, and \nolinkurl{eva02_base_patch14_448.mim_in22k_ft_in22k_in1k}. FT-Transformer uses numeric feature tokens, categorical embeddings, and Transformer blocks. Tabular preprocessing is fitted on the pretraining partition and then frozen to prevent leakage.

\subsection{Fisher computation}

Per-sample gradients are computed with functional automatic differentiation and vectorization, with a per-sample loop fallback for unsupported operators. For multiclass outputs, the implementation visits all classes and weights squared score gradients by the predicted class probabilities. For binary outputs it evaluates the Bernoulli expectation. For regression, $\sigma^2$ is estimated from held-out residuals and is never replaced with the observed target inside the score. A single unshuffled pass stores per-example traces; subgroup means and bootstrap intervals are obtained by masking this vector. Thus no subgroup is given a different Fisher pass.

\section{Additional empirical results}
\label{app:additional}

\paragraph{Cross-method dispersion does not track gaps.} Within each backbone
and axis we form the ratio of PEFT to full-fine-tuning Fisher CV, and likewise
for the gap, over the 16 encoders that support every method. These ratios
compare different parameterizations and are descriptive
(Lemma~\ref{lem:projection}). Ten of 15 method--task CV ratios are below one and 11 of 15 gap ratios are, with Spearman $\rho=-0.30$ across 15 non-independent pairs (Table~\ref{tab:peft}): a
lower cross-method CV does not accompany a smaller gap.

\begin{figure}[ht]
  \centering
  \includegraphics[width=0.58\linewidth]{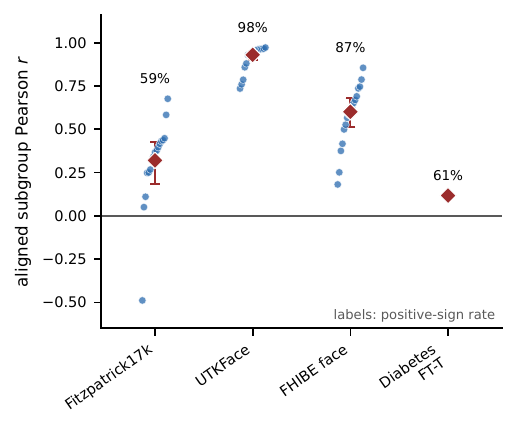}
  \caption{Encoder-level alignment by task. Blue points are encoder-level
  Fisher-$z$ means, red diamonds are equal-encoder task means, and error bars
  bootstrap encoders; percentages are descriptive positive-sign rates over
  cells. Summary values are in Table~\ref{tab:alignment} and
  method-conditioned means in Table~\ref{tab:method-alignment}.}
  \label{fig:results}
\end{figure}

\begin{table}[ht]
\caption{Balanced 16-encoder comparison to full fine-tuning. Fitzpatrick17k entries are balanced-accuracy changes in percentage points (higher is better); UTKFace and FHIBE face are MAE changes in years (lower is better). Ratios are geometric means across the three image tasks; values below one denote a reduction. Both ratios compare different parameterizations and are descriptive.}
\label{tab:peft}
\centering
\small
\begin{tabular}{lrrrrrr}
\toprule
MMethod & params (\%) & $\Delta$Fitz BAcc & $\Delta$UTK MAE & $\Delta$FHIBE MAE & Fisher-CV ratio & gap ratio \\
\midrule
LoRA   & 1.57 & $+2.81$ & $-0.34$ & $-1.33$ & 1.077 & 0.823 \\
DoRA   & 1.66 & $+3.21$ & $-0.31$ & $-1.38$ & 1.095 & 0.902 \\
rsLoRA & 1.57 & $+1.46$ & $-0.28$ & $-0.79$ & 1.123 & 0.901 \\
PiSSA  & 1.55 & $+1.01$ & $+0.00$ & $-0.88$ & 0.956 & 0.946 \\
VeRA   & 0.10 & $-0.19$ & $-0.04$ & $-1.23$ & 0.603 & 0.745 \\
\bottomrule
\end{tabular}
\end{table}

\subsection{Robustness checks}

Three design checks delimit the result. First, method-stratified alignment remains high on UTKFace and positive for every method on both dermatology tasks, so the headline is not produced by full fine-tuning alone. Second, replacing $\widehat t_a$ with $\widehat t_a/\widehat t_{\mathrm{pooled}}$ leaves every within-split correlation exactly unchanged; this verifies that normalization cannot manufacture the sign. Third, removing two-level sex and gender axes prevents the algebraic $r=\pm1$ artifact. The remaining axes contain three to fourteen groups. These checks do not address run-to-run training variation; multi-seed replication remains necessary.

\subsection{Per-task PEFT ratios}

Table~\ref{tab:full-ratios} expands the geometric means in Table~\ref{tab:peft}. The roster has 16 encoders because PiSSA and VeRA cannot attach to the convolution-only ResNet-50, which is therefore excluded from every cross-method comparison. For method $m$, task $t$ and encoder $e$, let $V_{m,t,e}$ be the Fisher CV and $G_{m,t,e}=\max_ad_a-\min_ad_a$ the test-split performance gap. The entries are $\bar V_{m,t}=\operatorname{median}_e\,V_{m,t,e}/V_{\mathrm{FT},t,e}$ and $\bar G_{m,t}=\operatorname{median}_e\,G_{m,t,e}/G_{\mathrm{FT},t,e}$, where a ratio is defined only when its full-fine-tuning denominator is positive, and Table~\ref{tab:peft} reports the geometric mean over the three tasks. 
The entries are medians over the 16 encoders; encoders are weighted equally, and none is dropped.
Fitzpatrick17k has two subgroup axes (skin tone and binary tone); for that task we average the two values of $G_{m,t,e}$, and likewise of $V_{m,t,e}$, before forming the ratio. The other tasks have a single axis.
Every full-fine-tuning denominator $G_{\mathrm{FT},t,e}$ was strictly positive, so no encoder-level ratio was dropped for a zero denominator.
These ratios compare different parameterizations and are descriptive (Lemma~\ref{lem:projection}). Fisher CV is reduced on Fitzpatrick17k and UTKFace but increased on FHIBE face for LoRA, DoRA and rsLoRA. Direct subgroup performance gaps do not follow the same pattern: DoRA increases the Fitzpatrick gap by 19.5\%, and PiSSA increases the UTKFace gap by 1.9\% despite reducing Fisher dispersion.

\begin{table}[ht]
\caption{Per-task ratios relative to full fine-tuning on the balanced 16-encoder roster.}
\label{tab:full-ratios}
\centering
\small
\begin{tabular}{llrrrrr}
\toprule
Quantity & task & LoRA & DoRA & rsLoRA & PiSSA & VeRA \\
\midrule
\multirow{3}{*}{Fisher CV}
& Fitzpatrick17k & 0.953 & 0.845 & 1.009 & 0.907 & 0.913 \\
& UTKFace        & 0.933 & 0.927 & 0.915 & 0.950 & 0.537 \\
& FHIBE face     & 1.405 & 1.676 & 1.536 & 1.016 & 0.446 \\
\midrule
\multirow{3}{*}{Performance gap}
& Fitzpatrick17k & 0.943 & 1.195 & 1.031 & 0.977 & 1.013 \\
& UTKFace        & 0.856 & 0.938 & 0.915 & 1.019 & 0.929 \\
& FHIBE face     & 0.690 & 0.656 & 0.777 & 0.851 & 0.440 \\

\bottomrule
\end{tabular}
\end{table}

\subsection{Method-conditioned difficulty alignment}

Table~\ref{tab:method-alignment} reports method-conditioned alignment. The tabular setting has only three correlations per entry, one per split, so its values are descriptive and should not be used to rank methods.

\begin{table}[ht]
\caption{Fisher-$z$ mean aligned Pearson $r$ by setting and method.}
\label{tab:method-alignment}
\centering
\small
\begin{tabular}{lrrrrrr}
\toprule
Setting & full FT & LoRA & DoRA & rsLoRA & PiSSA & VeRA \\
\midrule
Fitzpatrick17k          & 0.337 & 0.217 & 0.422 & 0.321 & 0.381 & 0.330 \\
UTKFace                 & 0.897 & 0.929 & 0.942 & 0.932 & 0.934 & 0.945 \\
FHIBE face              & 0.400 & 0.700 & 0.626 & 0.587 & 0.552 & 0.655 \\
Diabetes FT-T & 0.149 & 0.130 & $-0.039$ & 0.163 & 0.470 & $-0.206$ \\
\bottomrule
\end{tabular}
\end{table}

\input{app_mitigation}

\section{Equalized-rate criteria}
\label{app:eod}

Our claims concern the per-group performance gap. Rate-parity criteria condition on the label and are a different object. We define them here because they appear as a secondary outcome in Section~\ref{sec:mitigation-results}, and we give a construction showing why the curvature analysis should not be expected to bear on them.

\paragraph{Definitions.} Equal-opportunity difference (EOD) is the between-group difference in true-positive rates; average-odds difference (AOD) averages the true-positive and false-positive rate differences \citep{hardt2016equality}. Both condition on the label, so they apply only to the classification settings, Fitzpatrick17k and Diabetes 130H. We do not threshold the regression targets, and no rate metric is computed for UTKFace or FHIBE face. Two conventions make the classification case concrete. Fitzpatrick17k's three-way target is binarized one-vs-rest, each class in turn positive, and the three values are macro-averaged; diabetes readmission is already binary. Every axis has more than two groups (skin tone seven, binary tone three, diabetes race six), so we compute per-group rates and report the largest pairwise difference $\max_{g,g'}\lvert r_g-r_{g'}\rvert$, which is nonnegative and smaller is better.

\paragraph{Why no association should be expected.} Per-group task difficulty and
between-group rate parity are related but non-equivalent, and the
non-equivalence is exact.

\begin{theorem}[Loss geometry does not determine EOD]
\label{thm:eod}
There exist two subgroup distributions and a one-parameter model
$q_\phi(x_1)=\sigma(\phi)$, $q_\phi(x_2)=1-\sigma(\phi)$ such that for
\emph{every} $\phi$,
\[
L_a(\phi)=L_b(\phi),\quad g_a(\phi)=g_b(\phi),\quad H_a(\phi)=H_b(\phi),
\quad \Fr_a(\phi)=\Fr_b(\phi),\quad \eta_{ab}(\phi)=0,
\]
so the entire loss geometry is identical and the gap vanishes identically, yet
at the Bayes-optimal $\phi$ the rates satisfy $\mathrm{EOD}=0.4$ and
$\mathrm{AOD}\approx0.244$.
\end{theorem}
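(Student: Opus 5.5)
The plan is to use the reflection symmetry built into the model. Since $q_\phi(x_2)=1-\sigma(\phi)$, an example $(x_1,y)$ and an example $(x_2,1-y)$ have the same log-likelihood for every $\phi$: both give $-\log\sigma(\phi)$ when $(x,y)\in\{(x_1,1),(x_2,0)\}$, and both give $-\log(1-\sigma(\phi))$ otherwise. For a subgroup distribution $P_g$ over $(x,y)$, define
\[
c_g=P_g(x_1,y{=}1)+P_g(x_2,y{=}0).
\]
Then the subgroup surrogate loss is
\[
L_g(\phi)=-c_g\log\sigma(\phi)-(1-c_g)\log(1-\sigma(\phi)).
\]
It depends on $P_g$ only through the single scalar $c_g$. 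The construction therefore chooses two distributions with $c_a=c_b$ but different joint laws of $(x,y)$.

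First I would verify that the whole loss geometry coincides identically in $\phi$. Equality of $L_a$ and $L_b$ as functions gives $g_a=g_b$ and $H_a=H_b$ immediately, so the gap vanishes identically. The adapter is the identity in this one-parameter model, so $J=1$ and $\Fr_g=F_g$. The model-expected Fisher contribution is $\sigma(\phi)(1-\sigma(\phi))$ at both $x_1$ and $x_2$, because the Bernoulli score has the same variance under the reflection. Hence $\Fr_a=\Fr_b=\sigma(1-\sigma)$, whatever the input mix.

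Next I would show the defect vanishes. The Hessian of $L_g$ is $\sigma(1-\sigma)$ independently of $c_g$: the logistic log-likelihood is linear in its natural parameter, and $T$ is linear. So $H_g=\Fr_g$, and $\eta_{ab}=0$ by the definition in Section~\ref{sec:theory}. This is the step where I would state the lemma that the loss, and hence everything derived from it, factors through $c_g$.

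Second, I would fix the Bayes-optimal parameter. For the pooled loss, or either subgroup's loss, the minimizer satisfies $\sigma(\phi^\star)=c$. The thresholded classifier at $\phi^\star$ is then determined by whether $c>1/2$. If $c>1/2$, it predicts $1$ on $x_1$ and $0$ on $x_2$. Consequently $\mathrm{TPR}_g=P_g(x_1\mid y{=}1)$ and $\mathrm{FPR}_g=P_g(x_1\mid y{=}0)$. Both are determined by the joint law of $(x,y)$, which is not pinned down by $c_g$.

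Third, I would parametrize each group by three free masses, for example $P_g(x_1,1)=u_g$, $P_g(x_2,0)=c-u_g$, and $P_g(x_1,0)=v_g$. I would then solve for values giving $|\mathrm{TPR}_a-\mathrm{TPR}_b|=0.4$ and $\tfrac12\bigl(|\Delta\mathrm{TPR}|+|\Delta\mathrm{FPR}|\bigr)\approx0.244$. One workable choice keeps the label marginal simple and tunes $u_g$ and $v_g$. This leaves the system underdetermined, so exact hitting is a matter of choosing rational masses and checking that every probability lies in $[0,1]$ and that $c>1/2$.

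The main obstacle is only bookkeeping in this last step. I must land on the stated $0.4$ and $\approx0.244$ with admissible probabilities while keeping $c_a=c_b$. If the first guess fails, I would fix $\mathrm{TPR}_a-\mathrm{TPR}_b=0.4$ exactly, then solve the linear equation in the FPR difference (which should be about $0.088$). A final numerical check at machine precision would confirm all five identities and both rate values.
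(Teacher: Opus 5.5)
Your reduction is correct, and it is the paper's mechanism in more general form. Both atoms contribute the same loss and the same Fisher $\sigma(1-\sigma)$ for every $\phi$, so $L_g$ depends on the group law only through $c_g$; in the paper's construction $c_a=c_b=0.9$. Your observation that $H_g=\Fr_g$ exactly, so that each defect $\Xi_g$ vanishes, is also right. The gap is in your second and third steps, where you treat the in-model minimizer as Bayes-optimal and leave the witness unspecified.

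In your three-parameter family, the minimizer $\sigma(\phi^\star)=c$ is not in general Bayes-optimal. It is the Bayes predictor for both groups only if the model is well specified, that is, $P_g(y{=}1\mid x_1)=c$ and $P_g(y{=}1\mid x_2)=1-c$ for $g=a,b$. Tuning $u_g,v_g$ freely to hit the two rate values does not respect this. For example, take $c=0.8$, $(u_a,v_a)=(0.15,0.17)$ and $(u_b,v_b)=(0.078,0.098)$. This gives $c_a=c_b$, $\mathrm{TPR}_a-\mathrm{TPR}_b=0.4$ and $\mathrm{AOD}\approx0.244$ at $\sigma(\phi^\star)=0.8$. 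Yet $P_a(y{=}1\mid x_1)\approx0.469$ and $P_b(y{=}1\mid x_1)\approx0.443$, so each group's Bayes classifier predicts $0$ on both atoms, giving $\mathrm{EOD}=0$. The stated rates would then not be those of any Bayes-optimal predictor. Because the statement is existential with specific values, you must also exhibit the witness, which your proposal defers.

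The repair is to impose well-specification. With $m_g=P_g(x_1)$, set $u_g=c\,m_g$ and $v_g=(1-c)\,m_g$. This keeps $c_g=c$ automatically and forces $P_g(y{=}1\mid x_2)=1-c$, leaving only the input mixture free. Differing mixtures change the base rates $P_g(y{=}1)$, and that is what makes $\Delta\mathrm{FPR}\neq\Delta\mathrm{TPR}$. Common base rates of $\tfrac12$ would force $\Delta\mathrm{FPR}=\Delta\mathrm{TPR}$, hence $\mathrm{AOD}=\mathrm{EOD}=0.4$. Taking $c=0.9$, $m_a=\tfrac12$, $m_b=\tfrac1{10}$ gives $\mathrm{TPR}=0.9$ versus $0.5$ and $\mathrm{FPR}=0.1$ versus $0.01/0.82$, so $\mathrm{EOD}=0.4$ and $\mathrm{AOD}\approx0.244$. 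This is exactly the paper's construction: $\eta(x_1)=0.9$ and $\eta(x_2)=0.1$ common to both groups, with input masses $(\tfrac12,\tfrac12)$ and $(\tfrac1{10},\tfrac9{10})$.
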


\begin{proof}
Binary labels; two input atoms $x_1,x_2$ with $\eta(x_1)=\Pr(Y{=}1\mid x_1)=0.9$
and $\eta(x_2)=0.1$. Group $a$ places mass $(\tfrac12,\tfrac12)$ on
$(x_1,x_2)$; group $b$ places mass $(\tfrac1{10},\tfrac9{10})$. Take the
one-parameter model $q_\phi(x_1)=\sigma(\phi)$, $q_\phi(x_2)=1-\sigma(\phi)$,
where $q_\phi(x)$ is the predicted probability of $Y=1$ and $\sigma$ is the
logistic function.

\emph{The two atoms contribute the identical function of $\phi$.} Writing
$s=\sigma(\phi)$, the expected negative log-likelihood at $x_1$ is
$-[0.9\log s+0.1\log(1-s)]$, and at $x_2$, where the model predicts $1-s$ and
the truth is $\eta=0.1$, it is $-[0.1\log(1-s)+0.9\log s]$, which is the same
expression. Hence for \emph{any} mixture weights the group loss equals that
common function, so $L_a(\phi)=L_b(\phi)$ for all $\phi$, and consequently
$g_a=g_b$, $H_a=H_b$ and $\Delta_{ab}\equiv0$ with all derivatives zero.

\emph{The Fisher also coincides.} At $x_1$ the score is
$\partial_\phi\log s=1-s$ for $y=1$ and $\partial_\phi\log(1-s)=-s$ for $y=0$,
giving Fisher contribution $s(1-s)^2+(1-s)s^2=s(1-s)$. At $x_2$ the predicted
probability is $1-s$, the scores are $-s$ and $1-s$ respectively, and the
contribution is $(1-s)s^2+s(1-s)^2=s(1-s)$. Both atoms contribute $s(1-s)$, so
$F_a=F_b=s(1-s)$ for any mixture; with $p=1$ and $J=1$, $\Fr_a=\Fr_b$ and
$\eta_{ab}=|H_a-\Fr_a-(H_b-\Fr_b)|=0$.

\emph{Rates differ at the Bayes point.} The model is well specified: at
$\sigma(\phi)=0.9$ it reproduces $\eta$ exactly, so this is the global minimizer
of both group losses. Thresholding at $\tfrac12$ predicts $\hat y=1$ exactly on
$x_1$. Group $a$: $\Pr(Y{=}1)=\tfrac12(0.9)+\tfrac12(0.1)=0.5$ and
$\Pr(\hat y{=}1,Y{=}1)=\tfrac12(0.9)=0.45$, so $\mathrm{TPR}_a=0.9$;
$\Pr(Y{=}0)=0.5$ and $\Pr(\hat y{=}1,Y{=}0)=0.05$, so $\mathrm{FPR}_a=0.1$.
Group $b$: $\Pr(Y{=}1)=\tfrac1{10}(0.9)+\tfrac9{10}(0.1)=0.18$ and
$\Pr(\hat y{=}1,Y{=}1)=0.09$, so $\mathrm{TPR}_b=0.5$; $\Pr(Y{=}0)=0.82$ and
$\Pr(\hat y{=}1,Y{=}0)=0.01$, so $\mathrm{FPR}_b=0.01/0.82\approx0.0122$.
Therefore $\mathrm{EOD}=|0.9-0.5|=0.4$ and
$\mathrm{AOD}=\tfrac12(0.4+|0.1-0.0122|)\approx0.244$.
\end{proof}

Unlike the results of Section~\ref{sec:theory}, Theorem~\ref{thm:eod} needs no slope, defect or
smallness hypothesis: the loss geometry is identical in every order and for
every $\phi$, and the rate disparity is nevertheless $0.4$. The mechanism is
that $\mathrm{TPR}$ conditions on $Y=1$ and so depends on the group base rate,
and in this construction changing the mixture weights changes the
conditional-rate denominators but not the loss geometry, because both input
atoms contribute the identical loss and Fisher for every $\phi$. In particular,
the equal-mixture pair (both groups with mass $(\tfrac12,\tfrac12)$) has
exactly the same loss geometry as the pair above but $\mathrm{EOD}=0$: the
geometry does not determine EOD.

\section{Interpretive boundaries and failure modes}

Each boundary below is now a consequence of a stated result rather than a
caution, and we give the governing result in each case.

\paragraph{Coordinate dependence.}
Raw gradient norms depend on the chosen trainable coordinates, so changing the
adapter family, rank, or parameter scaling can change the trace without a
corresponding change in predictive behavior. Lemma~\ref{lem:projection} makes
this exact: under $\phi=S(\psi)$ with Jacobian $K$, $\Fr_a\mapsto K^\top\Fr_aK$,
so taking $K=cI$ rescales every trace by $c^2$ while leaving the predictor
untouched. The across-group Fisher CV is unaffected by that isotropic case,
since all group traces move together, but it is not invariant in general: with
$\Fr_a=\diag(1,0)$, $\Fr_b=\diag(0,1)$ and $K=\diag(1,2)$ the CV moves from $0$
to $0.6$. We therefore compare subgroup traces only within a fixed
model and parameterization, and note that the inertia of $\Fr_a-\Fr_b$ (its
numbers of positive, negative and zero eigenvalues) and the
generalized eigenvalues of the pencil $(\Fr_a,\Fr_b)$ are the coordinate-free
alternatives.

\paragraph{The trace is a summary, not the governing quantity.}
Theorem~\ref{thm:worstcase} shows the second-order term is governed by
$\|\Dab\|^{\mathcal{H}}_2$, which agrees with $\|\Fr_a-\Fr_b\|^{\mathcal{H}}_2$
only up to the defect $\eta^{\mathcal{H}}_{ab}$; and Theorem~\ref{thm:trace}
shows that matching traces leaves it as large as the common trace:
among positive semidefinite matrices of equal trace $\tau$, the spectral
discrepancy can be exactly $\tau$. Lemma~\ref{lem:trace-decomp} adds that the
empirical trace mixes $\|\hat g_a\|_2^2$ with $\tr\widehat\Sigma_a$, so a trace
penalty can succeed by trading one term against the other; equalizing
$\|\hat g_a\|^2$ does not imply $g_a=g_b$. Any conclusion drawn from trace or CV agreement alone is therefore weaker
than it appears; Section~\ref{sec:frobenius} gives a matrix discrepancy that does bound the
curvature channel's magnitude, though, like any scalar norm, not its sign.

\paragraph{Expected Fisher is not observed-label empirical Fisher.}
The training penalty uses $\nabla\ell_i\nabla\ell_i^\top$ at the observed label.
The post-training diagnostic in Equation~\ref{eq:trace-estimator} instead
averages score gradients over the model's predictive distribution. These
matrices can behave differently \citep{kunstner2019limitations}; we report them
separately and do not use the expected-Fisher interpretation to describe the
quantity optimized during training. Note that Assumption~\textbf{A2} of
Section~\ref{sec:theory} identifies the Gauss--Newton matrix with the
\emph{expected} Fisher only, which is why the defect $\Xi_a$ is carried
explicitly rather than discarded.

\paragraph{Difficulty is not fairness.}
Equation~\ref{eq:correlation} tests whether subgroup ordering agrees with one
task metric; EOD and AOD condition on the target and compare error rates, and
Theorem~\ref{thm:eod} shows that the loss-geometric quantities studied here do
not determine them.
Calibration, demographic parity, worst-group risk and intersectional harms
impose further, separate requirements that no scalar local sensitivity can
certify.

\paragraph{A matched geometry is not a small gap.}
As excess surrogate risks vanish the performance gap tends to
$\gamma_{ab}=|R_a^\star-R_b^\star|$ (Theorem~\ref{thm:bayesfloor}), so exact
geometric matching does not drive it to zero. Note that $\gamma_{ab}$ is
\emph{not} a lower bound: smaller gaps are attainable by degrading the
better-served group (Remark~\ref{rem:not-a-floor}). A reader should therefore not
treat a reduced discrepancy as evidence of improved outcomes, nor an unchanged
gap as evidence that the geometry did not move; the two are only loosely
coupled.

\paragraph{The curvature results are restricted to horizontal additive updates.}
By Proposition~\ref{prop:gauge} and Corollary~\ref{cor:gauge-G},
$\ker J(\phi)$ contains the tangent space of the $\mathrm{GL}(r)$ orbit
$BA=(Be^{tX})(e^{-tX}A)$ and, for $r\le\min(d_{\mathrm{in}},d_{\mathrm{out}})$,
has dimension at least $r^2$ everywhere, so
$\Fr_a-\Fr_b$ always has a null direction and is never positive definite on raw
$(A,B)$ coordinates. Every definiteness hypothesis in Section~\ref{sec:theory}
is therefore stated for $\delta\in\mathcal{H}_\phi$, and
$\mathcal{S}_\varepsilon=\{0\}$ means ``no nonzero horizontal additive update''.
This is a first-order restriction and nothing more:
Remark~\ref{rem:not-a-quotient} gives a LoRA point at the standard $B=0$
initialization where two updates differing by a kernel element produce
second-order gap changes of opposite sign. Nothing in Section~\ref{sec:method} projects onto $\mathcal{H}_\phi$. At a fixed point the reachable Fishers annihilate the gauge directions, but the trace and Frobenius statistics are not constant along a gauge orbit (Appendix~\ref{sec:frobenius}), so the penalty can also move the factors along function-preserving directions; this is not the same as verifying the hypotheses, and the optimizer's raw steps are not covered by the theorems at all.

\paragraph{The curvature results assume slope matching.}
Theorems~\ref{thm:impossible} and \ref{thm:feasibility}(ii), the forced-increase
reading of Theorem~\ref{thm:onlyreachable}(ii), and the gap consequence of trace
equality (Remark~\ref{rem:trace-magnitude}) all require $g_a=g_b$. Without it a
first-order term dominates and the conclusions fail outright
(Remark~\ref{rem:thm1-hyps}). Nothing in Section~\ref{sec:method} enforces slope
matching exactly, so the theory describes a channel the objective addresses, not
a property the trained models are shown to have.

\input{app_exact}

\end{document}

%% file: sec_theory.tex
\section{Reachable Fisher geometry of the subgroup gap}
\label{sec:theory}

We now make precise, and prove, the introduction's claim that different spectral properties of the reachable Fisher answer different questions about a subgroup gap, while a trace answers neither. The argument has four steps: the adapter fixes the reachable directions (Section~\ref{sec:theory-setup}); curvature along them raises two questions, direction and magnitude (Section~\ref{sec:theory-two}); common summaries lose information both need (Section~\ref{sec:theory-onlyreachable}); and all of it concerns surrogate losses (Section~\ref{sec:theory-bayes}). 
Section~\ref{sec:experiments} then derives their computable counterparts, which Sections~\ref{sec:results} and~\ref{sec:mitigation-results} evaluate. Proofs are in Appendix~\ref{app:proofs}.


\subsection{Setup and the master decomposition}
\label{sec:theory-setup}

\textbf{Adapter geometry.} Frozen weights $\theta_0\in\R^P$ are adapted
through a smooth map $T:\R^p\to\R^P$, $p\le P$, so $\theta=T(\phi)$; full
fine-tuning has $T(\phi)=\phi$ (so $J=I$), and for LoRA each adapted weight is
$W=W_0+s\,BA$ with $B\in\R^{d_{\mathrm{out}}\times r}$,
$A\in\R^{r\times d_{\mathrm{in}}}$, a fixed nonzero scale $s$ ($s=2$ in our runs; Appendix~\ref{app:repro}), and $\phi=(A,B)$. 
The Jacobian $J(\phi)=\partial T/\partial\phi$ maps adapter perturbations to
first-order changes of the full model, so locally the adapter moves $\theta$
only within $\operatorname{range}J$.

\textbf{Subgroup losses.} Let $\mathcal{A}$ be a finite set of subgroups with
distributions $\mathcal{D}_a$ over $z=(x,y)$, with input marginal $\mathcal{D}_a^X$, and write
$L_a(\phi)=\E_{\mathcal{D}_a}[\ell(T(\phi);z)]$, $g_a=\nabla_\phi L_a$ and
$H_a=\nabla^2_\phi L_a$. The \emph{signed gap} is $\Delta_{ab}=L_a-L_b$;
exchanging $a$ and $b$ reverses every directional conclusion below.

\textbf{Reachable Fisher.} For $\ell=-\log p_\theta(y\mid x)$ let $F_a$ be the
full-model Fisher of group $a$. Pulling it back through $J$ gives the
\emph{reachable Fisher}:
\begin{equation}
F_a=\E_{x\sim\mathcal{D}_a^X}\E_{y\sim p_\theta(\cdot\mid x)}
\!\big[\nabla_\theta\log p_\theta\,\nabla_\theta\log p_\theta^\top\big],
\qquad
\Fr_a(\phi)=J(\phi)^\top F_a(T(\phi))\,J(\phi).
\label{eq:reachable-fisher}
\end{equation}
$F_a\in\R^{P\times P}$ is never formed; $\Fr_a\in\R^{p\times p}$ is the object
of interest.


\textbf{Assumption A2 (likelihood form).} $p_\theta(y\mid x)$ is an exponential family whose natural parameter is the network output, with $\ell$ its negative log-likelihood, so the generalized Gauss--Newton matrix equals $F_a$ \citep{martens2020new}. A2 covers softmax cross-entropy and fixed-variance Gaussian regression but does \emph{not} give $H_a=\Fr_a$; the smoothness assumption \textbf{A1} needed for the expansion follows below.

\textbf{The horizontal subspace.} The sign certificate below requires positive
definiteness, which raw factorized coordinates cannot provide: they contain
directions that do not change the model to first order (\emph{gauge} directions). For rank-$r$ LoRA with
$s\ne0$ and $r\le\min(d_{\mathrm{in}},d_{\mathrm{out}})$,
Proposition~\ref{prop:gauge} gives, with $r_A=\operatorname{rank}A$ and
$r_B=\operatorname{rank}B$,
\[
\dim\ker J=d_{\mathrm{out}}(r-r_A)+d_{\mathrm{in}}(r-r_B)+r_Ar_B\ \ge\ r^2
\quad\text{at every }\phi,
\]
and the lower bound extends to DoRA (Corollary~\ref{cor:gauge-G}). Hence
$\lambda_{\min}(\Fr_a-\Fr_b)\le0$ everywhere. We therefore restrict every
directional statement to the \emph{horizontal subspace}
$\mathcal{H}_\phi=(\ker J(\phi))^\perp$, with
$\lambda^{\mathcal{H}}_{\min}(S)=\min\{v^\top Sv:v\in\mathcal{H}_\phi,\|v\|=1\}$
and $\|S\|^{\mathcal{H}}_2=\max\{|v^\top Sv|:v\in\mathcal{H}_\phi,\|v\|=1\}$, and
assume $d_{\mathcal{H}}:=\dim\mathcal{H}_\phi\ge1$. Unless noted, all
eigenvalues and norms in this section are these restricted ones.

\textbf{Caveat: a first-order restriction.} Every result concerns straight additive updates $\delta\in\mathcal{H}_\phi$. At the LoRA origin $A=B=0$ the subspace is $\{0\}$ and the statements are vacuous, and at the standard $B=0$ initialization two updates differing by an element of $\ker J$ can move the gap with opposite signs at second order (Remark~\ref{rem:not-a-quotient}); the results do not transfer to an unprojected optimizer.

\textbf{Assumption A1 (smoothness).} Each $L_a$ is $C^3$ (three times continuously differentiable) on the ball $B(\phi,\varepsilon_0)$ of adapter settings within distance $\varepsilon_0$ of $\phi$, with third derivatives bounded by $M$ (Appendix~\ref{app:a1} explains the role of each). It is what bounds the remainder below.

\begin{restatable}[Exact gap expansion]{proposition}{propExpansion}
\label{prop:expansion}
Under \textbf{A1}, for $\|\delta\|\le\varepsilon\le\varepsilon_0$,
\begin{equation}
\Delta_{ab}(\phi+\delta)-\Delta_{ab}(\phi)
=\underbrace{(g_a-g_b)^\top\delta}_{\textnormal{slope}}
+\underbrace{\tfrac12\delta^\top \Dab\,\delta}_{\textnormal{curvature}}
+R_{ab}(\delta),
\quad \Dab:=H_a-H_b,\ \ |R_{ab}|\le\tfrac{M}{3}\|\delta\|^3 .
\label{eq:gap-expansion}
\end{equation}
\end{restatable}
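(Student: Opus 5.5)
We plan to apply Taylor's theorem with integral (or Lagrange) remainder to the single scalar function $\Delta_{ab}=L_a-L_b$ along the segment from $\phi$ to $\phi+\delta$. Since $\|\delta\|\le\varepsilon\le\varepsilon_0$ and the ball $B(\phi,\varepsilon_0)$ is convex, the whole segment $\phi+t\delta$, $t\in[0,1]$, lies in the region where \textbf{A1} applies. Define $h(t)=\Delta_{ab}(\phi+t\delta)$. By \textbf{A1}, each $L_a$ is $C^3$ there, so $h$ is $C^3$ on $[0,1]$.

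The derivatives at $t=0$ follow from the chain rule:
\[
h'(0)=(g_a-g_b)^\top\delta,\qquad h''(0)=\delta^\top(H_a-H_b)\delta=\delta^\top\Dab\,\delta .
\]
The third derivative is $h'''(t)=D^3L_a(\phi+t\delta)[\delta,\delta,\delta]-D^3L_b(\phi+t\delta)[\delta,\delta,\delta]$. We read ``third derivatives bounded by $M$'' as the multilinear-norm bound $|D^3L_c(\psi)[u,u,u]|\le M\|u\|^3$ for every $\psi$ in the ball and each group $c$; this is the interpretation we would fix explicitly, consistent with Appendix~\ref{app:a1}. The triangle inequality then gives $|h'''(t)|\le 2M\|\delta\|^3$ for all $t\in[0,1]$.

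Taylor's formula gives $h(1)-h(0)=h'(0)+\tfrac12h''(0)+R_{ab}(\delta)$ with
\[
R_{ab}(\delta)=\tfrac12\int_0^1(1-t)^2h'''(t)\,dt ,
\]
or equivalently the Lagrange form $h'''(\xi)/6$. We then bound
\[
|R_{ab}|\le\tfrac16\cdot 2M\|\delta\|^3=\tfrac M3\|\delta\|^3 ,
\]
which is exactly the constant stated in~\eqref{eq:gap-expansion}. Substituting $h(1)-h(0)=\Delta_{ab}(\phi+\delta)-\Delta_{ab}(\phi)$ yields the identity, with the slope and curvature terms as labelled.

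There is no real obstacle here. The only care points are, first, pinning down the norm in which $M$ bounds the third derivative, so that the factor $2$ from differencing the two groups combines with the $1/6$ to give $M/3$. Second, the expansion holds in raw adapter coordinates for every $\delta$, not only horizontal ones, so no restriction to $\mathcal H_\phi$ is needed at this stage.
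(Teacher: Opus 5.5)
Your proof is correct and is essentially the paper's argument: the paper applies Taylor's theorem with Lagrange remainder to $L_a$ and $L_b$ separately, bounds each remainder by $\tfrac{M}{6}\|\delta\|^3$, and subtracts. You instead expand the difference $h(t)=\Delta_{ab}(\phi+t\delta)$ directly and bound $|h'''|\le 2M\|\delta\|^3$, which gives the same $\tfrac{M}{3}$ constant under the same reading of $M$ stated in Appendix~\ref{app:a1}.
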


Under \textbf{A2}, $H_a=\Fr_a+\Xi_a$, where the \emph{Hessian--Fisher defect}
$\Xi_a$ collects a reparameterization term (second derivatives of $T$) and a
model-curvature residual, so the curvature matrix splits as
\begin{equation}
\Dab=\underbrace{(\Fr_a-\Fr_b)}_{\text{reachable Fisher}}
+\underbrace{(\Xi_a-\Xi_b)}_{\text{defect}},
\qquad
\eta^{\mathcal{H}}_{ab}=\|\Xi_a-\Xi_b\|^{\mathcal{H}}_2\le\eta_{ab}=\|\Xi_a-\Xi_b\|_2 .
\label{eq:master}
\end{equation}
For LoRA-type adapters, the reparameterization part is bounded by $|s|\,\|\nabla_W\ell_a-\nabla_W\ell_b\|_F$ (Proposition~\ref{prop:defect}), where $\nabla_W\ell_a$ is the gradient at the deployed weight $W=W_0+sBA$ and $|s|$ is the adapter scale fixed above, so this half of the defect is small exactly when the two subgroup gradients agree there. The model-curvature residual is not bounded, so $\eta_{ab}$ must be assumed small or measured. This expansion leads to two different questions: when must the
curvature term have the same sign for every admissible direction, and how large
can it be in the worst case?

\subsection{Two questions: direction and magnitude}
\label{sec:theory-two}
At a point where the subgroup slopes match ($g_a=g_b$), the problem becomes spectral. If $\Dab$ is positive in every horizontal direction, every sufficiently small update increases the signed gap; if it is indefinite, the chosen direction decides the sign, so the restricted minimum eigenvalue answers the directional question. If instead we ask only how large the quadratic change can be, the relevant quantity is the largest absolute eigenvalue, the restricted operator norm.
Table~\ref{tab:two-questions} summarizes the answers. Both describe the curvature term, which governs the gap only under slope
matching: $g_a=g_b$ makes the linear term vanish for every update, not just
one. This is stronger than the one-step constraint $(g_a-g_b)^\top\delta=0$, which removes the linear term only along the chosen $\delta$: a slope can be removed by a single constraint, whereas no nonzero update can eliminate a positive-definite quadratic form. We first state the directional result for the exact Hessian difference.

\begin{table}[!h]
\vspace{-0.5em}
\caption{What controls direction and magnitude after slope matching. All quantities are restricted to $\mathcal{H}_\phi$; without slope matching, the linear term generally dominates small updates.}
\label{tab:two-questions}
\centering
\small
\begin{tabular}{lll}
\toprule
Question & Exact governing quantity & Reachable-Fisher certificate \\
\midrule
Curvature term positive in every direction? &
$\lambda^{\mathcal{H}}_{\min}(\Dab)>0$ &
$\lambda^{\mathcal{H}}_{\min}(\Fr_a-\Fr_b)>\eta^{\mathcal{H}}_{ab}$ \\
Worst-case curvature change? &
$\tfrac12\|\Dab\|^{\mathcal{H}}_2\varepsilon^2$ &
at most $\tfrac12\big(\|\Fr_a-\Fr_b\|^{\mathcal{H}}_2+\eta^{\mathcal{H}}_{ab}\big)\varepsilon^2$ \\
\bottomrule
\end{tabular}
\vspace{-0.5em}
\end{table}

\begin{restatable}[Forced increase on the horizontal subspace]{theorem}{thmImpossible}
\label{thm:impossible}
Write $\rho_{\min}:=\lambda^{\mathcal{H}}_{\min}(\Dab)$ for the smallest eigenvalue of the curvature difference restricted to $\mathcal{H}_\phi$. Assume \textbf{A1}, $g_a=g_b$, and $\rho_{\min}>0$, that is, $\Dab$ is positive definite \emph{on $\mathcal{H}_\phi$}. Then for every nonzero
$\delta\in\mathcal{H}_\phi$ with
$\|\delta\|\le\min(\varepsilon_0,3\rho_{\min}/4M)$, where $\varepsilon_0,M$ are from \textbf{A1} and the second bound is
$+\infty$ when $M=0$,
\[
\Delta_{ab}(\phi+\delta)-\Delta_{ab}(\phi)\;\ge\;\tfrac14\rho_{\min}\|\delta\|^2\;>\;0 .
\]
The signed gap therefore strictly increases for every admissible horizontal
additive update. If in addition $\Delta_{ab}(\phi)\ge0$, the absolute gap
$|\Delta_{ab}|$ strictly widens.
\end{restatable}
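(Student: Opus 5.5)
The plan is to apply the exact gap expansion of Proposition~\ref{prop:expansion} directly and bound each of its three terms for a nonzero horizontal $\delta$. Fix $\delta\in\mathcal{H}_\phi$ with $0<\|\delta\|\le\min(\varepsilon_0,3\rho_{\min}/4M)$. Since $\|\delta\|\le\varepsilon_0$, \textbf{A1} places $\phi+\delta$ in the ball where the expansion \eqref{eq:gap-expansion} holds with $|R_{ab}(\delta)|\le\tfrac{M}{3}\|\delta\|^3$. The slope term $(g_a-g_b)^\top\delta$ vanishes identically by the hypothesis $g_a=g_b$. This holds for every $\delta$, not only horizontal ones.

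For the curvature term, I will use the definition of the restricted minimum eigenvalue. The unit vector $v=\delta/\|\delta\|$ lies in $\mathcal{H}_\phi$, so $v^\top\Dab v\ge\lambda^{\mathcal{H}}_{\min}(\Dab)=\rho_{\min}$. Scaling gives $\tfrac12\delta^\top\Dab\delta\ge\tfrac12\rho_{\min}\|\delta\|^2$. Here horizontality is essential, and it is the only place it enters. If $\mathcal{H}_\phi$ is closed under scaling (it is a subspace), then the Rayleigh-quotient lower bound applies to $\delta$ itself.

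Combining the three terms gives
\[
\Delta_{ab}(\phi+\delta)-\Delta_{ab}(\phi)\ge\tfrac12\rho_{\min}\|\delta\|^2-\tfrac{M}{3}\|\delta\|^3
=\|\delta\|^2\big(\tfrac12\rho_{\min}-\tfrac{M}{3}\|\delta\|\big).
\]
The radius condition $\|\delta\|\le 3\rho_{\min}/(4M)$ gives $\tfrac{M}{3}\|\delta\|\le\tfrac14\rho_{\min}$, so the bracket is at least $\tfrac14\rho_{\min}$. The right-hand side is then $\ge\tfrac14\rho_{\min}\|\delta\|^2>0$, using $\rho_{\min}>0$ and $\delta\ne0$. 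When $M=0$ the remainder vanishes and the bound $\tfrac12\rho_{\min}\|\delta\|^2\ge\tfrac14\rho_{\min}\|\delta\|^2$ holds trivially, which matches the convention that the second radius is $+\infty$.

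The last claim is immediate. If $\Delta_{ab}(\phi)\ge0$, then $\Delta_{ab}(\phi+\delta)>\Delta_{ab}(\phi)\ge0$, so both values are nonnegative and $|\Delta_{ab}(\phi+\delta)|>|\Delta_{ab}(\phi)|$.

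There is no real obstacle. The only point needing care is to check that the constant in the remainder bound of Proposition~\ref{prop:expansion} is exactly $M/3$ in the normalization used by \textbf{A1}, since the radius $3\rho_{\min}/4M$ is calibrated to it. I will also remark that the hypothesis is on $\Dab$ itself, not on $\Fr_a-\Fr_b$. The reachable-Fisher version, $\lambda^{\mathcal{H}}_{\min}(\Fr_a-\Fr_b)>\eta^{\mathcal{H}}_{ab}$, then follows separately from \eqref{eq:master} and Weyl's inequality, which is deferred to Corollary~\ref{cor:weyl}.
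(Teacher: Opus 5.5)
Your proposal is correct and matches the paper's proof essentially line for line: expand via Proposition~\ref{prop:expansion}, drop the slope term using $g_a=g_b$, bound the quadratic term below by $\tfrac12\rho_{\min}\|\delta\|^2$ using horizontality, and use the radius $3\rho_{\min}/(4M)$ to keep the $\tfrac{M}{3}\|\delta\|^3$ remainder at most $\tfrac14\rho_{\min}\|\delta\|^2$; the $M=0$ case and the orientation argument for the absolute gap are handled the same way. The $M/3$ constant you planned to check is indeed what the paper's Taylor argument gives, namely two Lagrange remainders of $\tfrac{M}{6}\|\delta\|^3$ each.
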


The margin $\rho_{\min}$ controls both halves of the statement: it sets how far the conclusion extends, through the step bound $3\rho_{\min}/4M$, and how fast the gap grows, through $\tfrac14\rho_{\min}\|\delta\|^2$. A larger margin buys a larger admissible step and a faster guaranteed increase; the step bound is where the cubic remainder of~\eqref{eq:gap-expansion} still costs less than half the quadratic term (Appendix~\ref{app:exact} reports a measured value). Each assumption is necessary. Slope matching removes the first-order channel;
the horizontal restriction removes parameterization-null directions; and the
sign of the initial gap is needed only to turn signed-gap growth into
absolute-gap widening. Remark~\ref{rem:thm1-hyps} and
Proposition~\ref{prop:gauge} give counterexamples when any assumption is dropped. The exact Hessian difference is generally unavailable;
Corollary~\ref{cor:weyl} replaces it with the reachable-Fisher difference at the
price of the defect tolerance.

\begin{restatable}[Reachable-Fisher sufficient condition]{corollary}{corWeyl}
\label{cor:weyl}
Write $\eta^{\mathcal{H}}_{ab}=\|\Xi_a-\Xi_b\|^{\mathcal{H}}_2\le\eta_{ab}$. If
$\lambda^{\mathcal{H}}_{\min}(\Fr_a-\Fr_b)>\eta^{\mathcal{H}}_{ab}$ then
$\lambda^{\mathcal{H}}_{\min}(\Dab)>0$; if \emph{additionally} $g_a=g_b$, then
Theorem~\ref{thm:impossible} applies with margin $\rho_{\min}\ge\lambda^{\mathcal{H}}_{\min}(\Fr_a-\Fr_b)-\eta^{\mathcal{H}}_{ab}$.
By Proposition~\ref{prop:gauge} the restriction to $\mathcal{H}_\phi$ is
necessary: the unrestricted condition $\lambda_{\min}(\Fr_a-\Fr_b)>\eta_{ab}$
is unsatisfiable for factorized adapters.
\end{restatable}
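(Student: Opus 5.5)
The corollary has three parts: a definiteness transfer, a margin bound, and a necessity claim. I will treat each in turn, starting from the master decomposition~\eqref{eq:master}, $\Dab=(\Fr_a-\Fr_b)+(\Xi_a-\Xi_b)$.

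\textbf{Definiteness transfer.} Fix a unit $v\in\mathcal{H}_\phi$. The quadratic form splits as
\[
v^\top\Dab v=v^\top(\Fr_a-\Fr_b)v+v^\top(\Xi_a-\Xi_b)v .
\]
The first term is at least $\lambda^{\mathcal{H}}_{\min}(\Fr_a-\Fr_b)$ by definition of the restricted minimum. The second is at least $-\eta^{\mathcal{H}}_{ab}$, since by definition $\|S\|^{\mathcal{H}}_2$ bounds $|v^\top Sv|$ over unit horizontal $v$. Minimizing over $v$ gives the restricted Weyl-type inequality
\[
\lambda^{\mathcal{H}}_{\min}(\Dab)\ \ge\ \lambda^{\mathcal{H}}_{\min}(\Fr_a-\Fr_b)-\eta^{\mathcal{H}}_{ab}.
\]
Under the hypothesis, the right-hand side is strictly positive. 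Because the restricted quantities are defined variationally, no eigendecomposition of a compressed matrix is needed.

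\textbf{Margin bound.} Adding $g_a=g_b$ and \textbf{A1} meets every hypothesis of Theorem~\ref{thm:impossible}, with $\rho_{\min}=\lambda^{\mathcal{H}}_{\min}(\Dab)$. The inequality above is exactly the claimed margin bound. The inequality $\eta^{\mathcal{H}}_{ab}\le\eta_{ab}$ holds because restricting the maximizing set can only shrink the maximum, and $\max_{\|v\|=1}|v^\top Sv|=\|S\|_2$ for symmetric $S$. Symmetry of $\Xi_a-\Xi_b$ follows because it is a difference of symmetric matrices: $H_a$ and $\Fr_a$ are each symmetric.

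\textbf{Necessity of the restriction.} This is the step needing care, because the failure comes from the kernel of $J$ rather than from any property of the Fisher. By Proposition~\ref{prop:gauge} (and Corollary~\ref{cor:gauge-G} for DoRA), $\ker J(\phi)$ is nontrivial at every $\phi$, since its dimension is at least $r^2\ge1$. Take nonzero $u\in\ker J$. Then $\Fr_a u=J^\top F_a J u=0$, and likewise $\Fr_b u=0$. Hence
\[
u^\top(\Fr_a-\Fr_b)u=0,\qquad\text{so}\qquad \lambda_{\min}(\Fr_a-\Fr_b)\le0 .
\]
Since $\eta_{ab}\ge0$ as a norm, the unrestricted condition $\lambda_{\min}(\Fr_a-\Fr_b)>\eta_{ab}$ would require $0<\eta_{ab}<\lambda_{\min}\le0$, which is impossible.

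The main subtlety is bookkeeping rather than technique. The defect must be measured in the same restricted norm in which the Fisher minimum is taken; otherwise the inequality $\eta^{\mathcal{H}}_{ab}\le\eta_{ab}$ gives only the weaker unrestricted version. Nothing deeper than the variational characterization and the kernel argument should be required.
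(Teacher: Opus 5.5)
Your proof is correct and matches the paper's. The paper likewise starts from $\Dab=(\Fr_a-\Fr_b)+(\Xi_a-\Xi_b)$ and applies Weyl's inequality to the compressions onto $\mathcal{H}_\phi$; your variational argument is exactly that inequality. For necessity the paper simply cites Proposition~\ref{prop:gauge}, and you spell out the step $\Fr_c u=J^\top F_cJu=0$ for nonzero $u\in\ker J$.

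One cosmetic fix: in the final chain write $0\le\eta_{ab}$ rather than $0<\eta_{ab}$, since the norm can vanish. The contradiction is unaffected.
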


\emph{Takeaway.} Reachable-Fisher definiteness certifies the sign only when its restricted minimum eigenvalue exceeds the Hessian--Fisher defect; it says nothing about magnitude.

\begin{restatable}[Worst-case growth of the two channels]{theorem}{thmWorstcase}
\label{thm:worstcase}
Over $\delta\in\mathcal{H}_\phi$ with $\|\delta\|\le\varepsilon$, let
$\mathcal{Q}_{ab}(\varepsilon)=\max\tfrac12|\delta^\top\Dab\delta|$ be the
worst-case second-order term and
$\mathcal{G}_{ab}(\varepsilon)=\max|\Delta_{ab}(\phi+\delta)-\Delta_{ab}(\phi)|$
the worst-case gap change. Then
\[
\mathcal{Q}_{ab}(\varepsilon)=\tfrac12\|\Dab\|^{\mathcal{H}}_2\varepsilon^2,
\qquad
\mathcal{G}_{ab}(\varepsilon)\le\|g_a-g_b\|\varepsilon
+\tfrac12\|\Dab\|^{\mathcal{H}}_2\varepsilon^2+\tfrac{M}{3}\varepsilon^3,
\]
the first attained at a dominant eigenvector of $\Dab$ restricted to
$\mathcal{H}_\phi$. If $g_a=g_b$ the bound is tight to third order:
$|\mathcal{G}_{ab}(\varepsilon)-\tfrac12\|\Dab\|^{\mathcal{H}}_2\varepsilon^2|
\le\tfrac{M}{3}\varepsilon^3$.
\end{restatable}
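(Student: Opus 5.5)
The plan is to use the exact expansion of Proposition~\ref{prop:expansion} together with the variational characterization of the restricted operator norm. For the identity for $\mathcal{Q}_{ab}(\varepsilon)$, write any $\delta\in\mathcal{H}_\phi$ with $0<\|\delta\|\le\varepsilon$ as $\delta=tv$, where $v\in\mathcal{H}_\phi$ is a unit vector and $0<t\le\varepsilon$. Then $\tfrac12|\delta^\top\Dab\delta|=\tfrac12t^2|v^\top\Dab v|\le\tfrac12\varepsilon^2\|\Dab\|^{\mathcal{H}}_2$, directly from the definition of $\|\cdot\|^{\mathcal{H}}_2$ as a maximum of $|v^\top Sv|$ over unit horizontal vectors.

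For attainment, let $P$ be the orthogonal projector onto $\mathcal{H}_\phi$ and consider the compression $P\Dab P$ restricted to $\mathcal{H}_\phi$. This is a symmetric operator on a space of dimension $d_{\mathcal H}\ge1$. Its quadratic form on unit horizontal vectors coincides with that of $\Dab$, so by Rayleigh--Ritz the maximum of $|v^\top\Dab v|$ is its largest absolute eigenvalue. That maximum is attained at a corresponding unit eigenvector $v_\star$, the ``dominant eigenvector of $\Dab$ restricted to $\mathcal{H}_\phi$''. Taking $\delta=\varepsilon v_\star$ then gives equality. The unit sphere of $\mathcal{H}_\phi$ is compact, so the max in the definition is attained, which justifies writing max rather than sup.

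For the bound on $\mathcal{G}_{ab}$, apply \eqref{eq:gap-expansion} with $\|\delta\|\le\varepsilon\le\varepsilon_0$, which is implicitly required for A1 to apply. The triangle inequality gives three terms. Cauchy--Schwarz controls the slope term, $|(g_a-g_b)^\top\delta|\le\|g_a-g_b\|\varepsilon$. The previous step controls the curvature term by $\tfrac12\|\Dab\|^{\mathcal H}_2\varepsilon^2$, and the remainder is at most $\tfrac M3\varepsilon^3$. Taking the max over $\delta$ gives the stated inequality.

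For tightness when $g_a=g_b$, the slope term vanishes identically, so the upper bound reads $\mathcal{G}_{ab}\le\tfrac12\|\Dab\|^{\mathcal H}_2\varepsilon^2+\tfrac M3\varepsilon^3$. For the lower bound, evaluate at $\delta_\star=\varepsilon v_\star$: the expansion gives $|\Delta_{ab}(\phi+\delta_\star)-\Delta_{ab}(\phi)|\ge\tfrac12\|\Dab\|^{\mathcal H}_2\varepsilon^2-\tfrac M3\varepsilon^3$ by the reverse triangle inequality. Combining the two sides yields $|\mathcal{G}_{ab}(\varepsilon)-\tfrac12\|\Dab\|^{\mathcal H}_2\varepsilon^2|\le\tfrac M3\varepsilon^3$. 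Existence of the max defining $\mathcal{G}_{ab}$ follows from continuity of $\Delta_{ab}$ on the compact set $\{\delta\in\mathcal{H}_\phi:\|\delta\|\le\varepsilon\}$.

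The only step needing care is the attainment claim. The paper's $\|S\|^{\mathcal H}_2$ is defined via the quadratic form rather than as an operator norm, so the argument must connect it to an eigenvector of the compression $P\Dab P|_{\mathcal H_\phi}$ and not of $\Dab$ itself. An eigenvector of $\Dab$ need not lie in $\mathcal{H}_\phi$, so the phrase ``restricted to $\mathcal{H}_\phi$'' has to be read as referring to the compression. With that reading, everything else is routine.
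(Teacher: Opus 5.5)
Your proof is correct and follows essentially the same route as the paper: compress $\Dab$ to $\mathcal{H}_\phi$, read off $\mathcal{Q}_{ab}(\varepsilon)$ and its attainment from the extreme eigenvalues of that compression, then bound $\mathcal{G}_{ab}(\varepsilon)$ by applying the triangle inequality to the expansion, with Cauchy--Schwarz on the slope term and the cubic remainder bound. Your explicit lower bound at $\delta_\star=\varepsilon v_\star$ via the reverse triangle inequality, and your remark that $\varepsilon\le\varepsilon_0$ is implicitly required, just spell out what the paper compresses into ``the same two-sided argument.''
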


\emph{Takeaway.} By \eqref{eq:master}, $\|\Dab\|^{\mathcal{H}}_2$ and
$\|\Fr_a-\Fr_b\|^{\mathcal{H}}_2$ differ by at most $\eta^{\mathcal{H}}_{ab}$, so
matching reachable Fishers does less than matching Hessians: it leaves a
second-order gap change of up to $\tfrac12\eta_{ab}\varepsilon^2$
(Theorem~\ref{thm:sufficiency}; exact Hessian matching is treated in
Theorem~\ref{thm:feasibility}).

\subsection{What practical summaries can reveal}
\label{sec:theory-onlyreachable}
Two distinct failures separate a practical statistic from the quantities above: it may be computed in the wrong space, or it may scalarize away the directional information that survives. We take them in turn, then give a matrix-level remedy. We write $S\succeq0$ for positive semidefinite $S$. A \emph{Jacobian-blind} statistic is computed from the full-model Fishers $F_a,F_b$ without $J$, such as a norm of $F_a-F_b$ or a spectral summary computed separately for each group. Such a statistic cannot determine reachable geometry because it does not know how the adapter Jacobian is oriented: a large full-model difference may lie entirely outside $\operatorname{range}J$, while two full-model Fishers with identical spectra may interact differently with it.

\begin{restatable}[Full-model summaries do not determine reachable geometry]{theorem}{thmOnlyreachable}
\label{thm:onlyreachable}
\emph{(i)} For every $c>0$ there exist $F_a,F_b\succeq0$ and a Jacobian
$J$ with $\|F_a-F_b\|_2=c$ yet $\Fr_a=\Fr_b$.
\emph{(ii)} There exist $F_a,F_b\succeq0$ with identical spectra such that
$\Fr_a-\Fr_b$ is positive definite on $\mathcal{H}_\phi$.
\end{restatable}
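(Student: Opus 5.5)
Both parts are existence statements, so I would prove them with explicit low-dimensional constructions. I would take $P=2$ and choose $J$ to be injective, so that $\ker J=\{0\}$ and $\mathcal{H}_\phi$ is all of $\R^p$. Since the Fishers are given directly as positive semidefinite matrices, all that has to be checked is the algebra of the pullback $J^\top F J$. I would treat both parts as linear algebra at a fixed point $\phi$, with $F_a$, $F_b$ playing the role of $F_a(T(\phi))$, $F_b(T(\phi))$. If a realization by an actual model is wanted, a linear-Gaussian model whose input covariance per group equals the prescribed matrix suffices, since its Fisher is $\sigma^{-2}\E[xx^\top]$.

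\emph{Part (i).} Take $p=1$, $J=e_1\in\R^{2\times1}$, $F_a=\diag(1,c)$ and $F_b=\diag(1,0)$. Both are positive semidefinite. Their difference is $\diag(0,c)$, so $\|F_a-F_b\|_2=c$. The pullbacks are $\Fr_a=e_1^\top F_ae_1=1=\Fr_b$. This construction shows that the entire full-model difference lies in $(\operatorname{range}J)^\perp$, which is exactly the mechanism the theorem describes.

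\emph{Part (ii).} Keep $p=1$ and $J=e_1$, so $\mathcal{H}_\phi=\R$ and positive definiteness on $\mathcal{H}_\phi$ reduces to a positive scalar. Take $F_a=\diag(1,0)$ and $F_b=\diag(0,1)=Q F_a Q^\top$, where $Q$ is the coordinate swap. The two matrices then have identical spectrum $\{1,0\}$ and are both positive semidefinite. The pullbacks are $\Fr_a=1$ and $\Fr_b=0$, so $\Fr_a-\Fr_b=1>0$ on $\mathcal{H}_\phi$. If a higher-dimensional example is preferred, the same idea works with $F_b=QF_aQ^\top$ for an orthogonal $Q$ that rotates the dominant eigenspace of $F_a$ out of $\operatorname{range}J$. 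In that version, $J^\top(F_a-F_b)J$ is positive definite whenever $F_a$ restricted to $\operatorname{range}J$ dominates $F_b$ restricted to $\operatorname{range}J$.

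The main subtlety is not a calculation. It is making sure the horizontal restriction is nondegenerate, i.e.\ $d_{\mathcal H}\ge1$. Choosing $J$ injective makes $\mathcal{H}_\phi=\R^p$ and sidesteps the gauge issue of Proposition~\ref{prop:gauge}. I would add a remark that the same constructions embed into a LoRA parameterization at a point with $A$ and $B$ of full rank, by composing with a $J$ whose range contains $e_1$. I would also note that (ii) does not yet give a forced increase of the gap: that requires, in addition, $g_a=g_b$ and a defect below the margin, as in Corollary~\ref{cor:weyl}.
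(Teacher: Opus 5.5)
Your proof is correct and is essentially the paper's: part (i) is the identical construction ($J=e_1$, $F_a=\diag(1,c)$, $F_b=\diag(1,0)$), and part (ii) uses the same coordinate-swap idea with injective $J=e_1$. Your singular pair $\diag(1,0),\diag(0,1)$ works just as well as the paper's $\diag(2,1),\diag(1,2)$. The only loose point is your optional LoRA remark: a range that merely contains $e_1$ is not enough, because positive definiteness on $\mathcal{H}_\phi$ requires the compression of $F_a-F_b$ to be positive definite on all of $\operatorname{range}J$. That aside is not needed for the theorem.
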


A full-model audit can thus report a large disparity of which nothing survives
the projection, or miss a reachable difference that meets the premise of
Corollary~\ref{cor:weyl}; without $J$, no such statistic identifies the
reachable projection.

Moving to the correct space is not enough. A trace records the sum of eigenvalues, not how curvature is distributed across directions, so two reachable Fishers can have equal traces and very different geometry.

\begin{restatable}[Limits of trace matching]{theorem}{thmTrace}
\label{thm:trace}
For $\Fr_a,\Fr_b\succeq0$ of size $p\ge2$:
\emph{(i) Necessary condition.} $\Fr_a=\Fr_b$ implies $\tr\Fr_a=\tr\Fr_b$; the
converse fails.
\emph{(ii) General failure.} For any $\tau>0$,
$\sup\{\|\Fr_a-\Fr_b\|_2:\tr\Fr_a=\tr\Fr_b=\tau\}=\tau$, attained by an
indefinite difference: equal traces permit the largest spectral mismatch the
trace budget allows.
\emph{(iii) Structured positive result.} If $\Fr_a-\Fr_b\succeq0$ (Loewner
ordering), then $\|\Fr_a-\Fr_b\|_2\le\tr\Fr_a-\tr\Fr_b$, so equal traces force
equal matrices.
\end{restatable}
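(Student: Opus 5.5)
The three parts are elementary facts about positive semidefinite matrices, so I will take them in order. The matrix-level statement needs nothing from the adapter structure: $\Fr_a,\Fr_b$ enter only as arbitrary PSD matrices of size $p\ge2$. The norm here is the unrestricted spectral norm on $\R^p$, not the restricted one from Section~\ref{sec:theory-setup}.

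\emph{Part (i).} The forward direction is immediate, since the trace is a function of the matrix. For the failure of the converse I will use an explicit pair, $\Fr_a=\diag(1,0,\dots,0)$ and $\Fr_b=\diag(0,1,0,\dots,0)$. This needs $p\ge2$. Both matrices have trace $1$ and they differ. The same pair, rescaled, will serve in (ii).

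\emph{Part (ii).} I will prove an upper bound and then a matching construction.

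\begin{itemize}
\item \emph{Upper bound.} For a unit vector $v$, the PSD hypothesis gives $0\le v^\top\Fr_a v\le\lambda_{\max}(\Fr_a)\le\tr\Fr_a=\tau$, and likewise for $\Fr_b$. Hence $|v^\top(\Fr_a-\Fr_b)v|\le\max(v^\top\Fr_a v,\,v^\top\Fr_b v)\le\tau$, because the difference of two numbers in $[0,\tau]$ lies in $[-\tau,\tau]$. The difference is symmetric, so its spectral norm is the maximum of $|v^\top(\Fr_a-\Fr_b)v|$ over unit $v$. This gives the supremum $\le\tau$.
\item \emph{Attainment.} Take $\Fr_a=\tau e_1e_1^\top$ and $\Fr_b=\tau e_2e_2^\top$. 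Both have trace $\tau$, and the difference is $\tau\,\diag(1,-1,0,\dots)$, which has norm $\tau$. It has eigenvalues $+\tau$ and $-\tau$, so it is indefinite, as the statement claims.
\end{itemize}

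\emph{Part (iii).} Let $D=\Fr_a-\Fr_b\succeq0$. Its eigenvalues are nonnegative, so $\|D\|_2=\lambda_{\max}(D)\le\sum_i\lambda_i(D)=\tr D=\tr\Fr_a-\tr\Fr_b$. If the traces are equal, then $\|D\|_2\le0$, so $D=0$ and the matrices coincide.

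I expect no real obstacle. The only point needing care is in (ii), where the bound must be $\tau$ and not the naive $2\tau$ from the triangle inequality. The quadratic-form argument above handles this: both forms lie in the same interval $[0,\tau]$, so their difference can be at most $\tau$ in absolute value.
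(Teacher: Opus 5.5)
Your proof is correct and follows the paper's argument: you use the same diagonal pair $\tau e_1e_1^\top$, $\tau e_2e_2^\top$ for attainment and for the failure of the converse, and the same eigenvalue-sum bound $\lambda_{\max}\le\tr$ in (iii). Your Rayleigh-quotient upper bound in (ii) is the unpacked form of the paper's Weyl step $\lambda_{\max}(\Fr_a-\Fr_b)\le\lambda_{\max}(\Fr_a)-\lambda_{\min}(\Fr_b)\le\tr\Fr_a=\tau$, which the paper applies in both orientations.
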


\begin{restatable}[A small normalized trace gap excludes the sufficient regime]{corollary}{corTraceExclude}
\label{cor:trace-exclude}
If $\lambda^{\mathcal{H}}_{\min}(\Fr_a-\Fr_b)>0$ then
$\tr\Fr_a-\tr\Fr_b\ge d_{\mathcal{H}}\,\lambda^{\mathcal{H}}_{\min}(\Fr_a-\Fr_b)$.
Hence $|\tr\Fr_a-\tr\Fr_b|/d_{\mathcal{H}}\le\eta^{\mathcal{H}}_{ab}$ excludes the
hypothesis of Corollary~\ref{cor:weyl} in both orientations.
\end{restatable}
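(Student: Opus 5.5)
The plan is to exploit the fact that both reachable Fishers annihilate the gauge directions, so the trace of $\Fr_a-\Fr_b$ is carried entirely by $\mathcal{H}_\phi$. Write $D:=\Fr_a-\Fr_b=J^\top(F_a-F_b)J$, a symmetric $p\times p$ matrix.

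The first step is to check that $D$ vanishes on $\ker J$. For $v\in\ker J$ we have $Dv=J^\top(F_a-F_b)Jv=0$. Since $D$ is symmetric, its range lies in $(\ker D)^\perp\subseteq(\ker J)^\perp=\mathcal{H}_\phi$. So $D$ is block-diagonal with respect to the orthogonal decomposition $\R^p=\mathcal{H}_\phi\oplus\ker J$, and its block on $\ker J$ is zero.

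The second step evaluates the trace in an adapted basis. Choose an orthonormal basis $v_1,\dots,v_{d_{\mathcal{H}}}$ of $\mathcal{H}_\phi$ and complete it by an orthonormal basis $w_1,\dots,w_k$ of $\ker J$. The trace is basis-independent for orthonormal bases, so
\[
\tr\Fr_a-\tr\Fr_b=\tr D=\sum_{i=1}^{d_{\mathcal{H}}}v_i^\top Dv_i+\sum_{j=1}^{k}w_j^\top Dw_j .
\]
The second sum vanishes by the first step. Each unit vector $v_i\in\mathcal{H}_\phi$ satisfies $v_i^\top Dv_i\ge\lambda^{\mathcal{H}}_{\min}(D)$ by the definition of the restricted minimum eigenvalue. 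Hence $\tr D\ge d_{\mathcal{H}}\,\lambda^{\mathcal{H}}_{\min}(D)$. This inequality holds whenever $d_{\mathcal{H}}\ge1$; positivity of $\lambda^{\mathcal{H}}_{\min}(D)$ is what makes the lower bound strictly positive.

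The exclusion follows by contraposition. Suppose the hypothesis of Corollary~\ref{cor:weyl} held, so $\lambda^{\mathcal{H}}_{\min}(D)>\eta^{\mathcal{H}}_{ab}\ge0$. The first part then gives
\[
\tr\Fr_a-\tr\Fr_b\ge d_{\mathcal{H}}\lambda^{\mathcal{H}}_{\min}(D)>d_{\mathcal{H}}\,\eta^{\mathcal{H}}_{ab},
\]
which contradicts $|\tr\Fr_a-\tr\Fr_b|/d_{\mathcal{H}}\le\eta^{\mathcal{H}}_{ab}$.

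For the other orientation, exchange $a$ and $b$. The matrix $D$ becomes $-D$, the trace difference changes sign, and neither the absolute value nor the defect changes. The defect is invariant because $\eta^{\mathcal{H}}_{ab}=\|\Xi_a-\Xi_b\|^{\mathcal{H}}_2=\|\Xi_b-\Xi_a\|^{\mathcal{H}}_2=\eta^{\mathcal{H}}_{ba}$, and $\mathcal{H}_\phi$ depends only on $J$. So the same argument rules out $\lambda^{\mathcal{H}}_{\min}(\Fr_b-\Fr_a)>\eta^{\mathcal{H}}_{ba}$.

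The only delicate point is the first step: $\tr D$ is a full-space trace, whereas the hypothesis constrains $D$ only on $\mathcal{H}_\phi$. The argument works only because $D$ has no mass, diagonal or off-diagonal, on $\ker J$. This is a consequence of the pullback form $J^\top(\cdot)J$; it would fail for the Hessian difference $\Dab$, whose defect part need not vanish on the gauge directions.
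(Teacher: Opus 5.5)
Your proposal is correct and follows essentially the same argument as the paper. Both show that $\Fr_a-\Fr_b$ vanishes on $\ker J$, take the trace in an orthonormal basis of $\mathcal{H}_\phi$ extended by one of $\ker J$, and bound each diagonal term below by the restricted minimum eigenvalue (the paper also notes that this works for any symmetric matrix vanishing on $\ker J$). Both then obtain the exclusion by contraposition, applied to $\pm(\Fr_a-\Fr_b)$ for the two orientations.
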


Trace therefore has two asymmetric uses. A sufficiently small normalized trace
gap rules out the sufficient positive-definiteness regime; without Loewner
ordering it cannot certify matrix matching or bound an indefinite mismatch. The
observed-label trace (of the empirical reachable Fisher
$\widehat F^{\mathrm{R},\mathrm{obs}}_a$, the average of
$\nabla_\phi\ell_i\nabla_\phi\ell_i^\top$ over group $a$'s examples, with mean
gradient $\hat g_a$ and gradient covariance $\widehat\Sigma_a$) also mixes the
channels:
$\tr\widehat F^{\mathrm{R},\mathrm{obs}}_a=\|\hat g_a\|^2+\tr\widehat\Sigma_a$,
a mean-gradient term that is related to but does not determine $g_a-g_b$, plus
within-group dispersion (Lemma~\ref{lem:trace-decomp}). A matrix discrepancy
avoids the second failure, and can be evaluated without forming a $p\times p$
matrix.

\begin{restatable}[Matrix-free Gram identity]{proposition}{propGram}
\label{prop:gram}
For groups with $n_a$ and $n_b$ examples, any factors
$G_a\in\R^{m_a\times p}$, $G_b\in\R^{m_b\times p}$ (one or more rows per
example) and
$\widehat S_c=\frac1{n_c}G_c^\top G_c$,
\begin{equation}
\big\|\widehat S_a-\widehat S_b\big\|_F^2
=\frac{\|G_aG_a^\top\|_F^2}{n_a^2}
-\frac{2\|G_aG_b^\top\|_F^2}{n_an_b}
+\frac{\|G_bG_b^\top\|_F^2}{n_b^2},
\label{eq:gram}
\end{equation}
computable in $O\big((m_a^2+m_am_b+m_b^2)p\big)$ time with no $p\times p$ matrix
formed.
\end{restatable}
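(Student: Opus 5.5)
The plan is to expand the squared Frobenius norm through the trace inner product and then move every trace into the small Gram space by cyclicity. Write $\langle X,Y\rangle_F=\tr(X^\top Y)$. By bilinearity,
\[
\|\widehat S_a-\widehat S_b\|_F^2=\langle\widehat S_a,\widehat S_a\rangle_F-2\langle\widehat S_a,\widehat S_b\rangle_F+\langle\widehat S_b,\widehat S_b\rangle_F .
\]
Here $\widehat S_c=\frac1{n_c}G_c^\top G_c$ is symmetric, so $\langle\widehat S_a,\widehat S_b\rangle_F=\langle\widehat S_b,\widehat S_a\rangle_F$ and the cross terms combine. Nothing about positive semidefiniteness or the number of rows per example is used, which is why arbitrary factors $G_a,G_b$ are allowed.

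The key step is the single identity
\[
\tr\!\big(G_a^\top G_a\,G_b^\top G_b\big)=\tr\!\big((G_bG_a^\top)^\top(G_bG_a^\top)\big)=\|G_bG_a^\top\|_F^2=\|G_aG_b^\top\|_F^2 .
\]
It follows by regrouping cyclically as $\tr\big(G_aG_b^\top G_bG_a^\top\big)$ and noting that $G_aG_b^\top=(G_bG_a^\top)^\top$. Applying it with $(a,b)$, with $(a,a)$ and with $(b,b)$ gives the three terms of \eqref{eq:gram}. The scalar factors $1/n_a^2$, $1/(n_an_b)$ and $1/n_b^2$ come out of the bilinear form.

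For the cost, the plan is to form the three Gram blocks $G_aG_a^\top$, $G_aG_b^\top$ and $G_bG_b^\top$ directly from the $p$-dimensional rows. Entry $(i,j)$ of each block is one inner product of length $p$, so the blocks cost $O(m_a^2p)$, $O(m_am_bp)$ and $O(m_b^2p)$ respectively. Squaring and summing the entries costs only $O(m_a^2+m_am_b+m_b^2)$ more. The only matrices ever stored are $m_a\times m_a$, $m_a\times m_b$ and $m_b\times m_b$, so no $p\times p$ matrix is formed.

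There is no real obstacle here. The one point to state carefully is the cyclic-trace step for the cross term, where the symmetry of $\widehat S_c$ is what makes the two cross terms equal.
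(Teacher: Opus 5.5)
Your proof is correct and follows the same route as the paper. You expand the squared Frobenius norm into $\tr(\widehat S_a^2)-2\tr(\widehat S_a\widehat S_b)+\tr(\widehat S_b^2)$, use cyclicity to turn each trace into the squared Frobenius norm of a small Gram block, and cost the three blocks at $O((m_a^2+m_am_b+m_b^2)p)$; your closing attribution needs one correction: the two cross terms agree because the real Frobenius inner product is always symmetric ($\tr(X^\top Y)=\tr(Y^\top X)$), and the symmetry of $\widehat S_c$ only serves to drop the transpose in $\tr(\widehat S_a^\top\widehat S_b)$.
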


Per-example gradients supply such factors for both empirical reachable Fishers
(Appendix~\ref{sec:method}), which is what lets Section~\ref{sec:mitigation-results}
audit trained models at the matrix level. The Frobenius discrepancy upper-bounds the operator
norm, and so bounds the curvature channel up to defect and estimator error
(Corollary~\ref{cor:closure}); like any norm, it does not certify definiteness.

\subsection{Scope: surrogate-loss gaps versus predictive-performance gaps}
\label{sec:theory-bayes}

What do these loss-geometric results imply for predictive-performance gaps? Less than a direct translation: surrogate-loss differences do not determine balanced accuracy, mean absolute error, or rate-based criteria. The result below gives the bridge available under classification calibration and near-Bayes convergence.
For \emph{binary}
classification let $R_a$ be group-$a$ $0$--$1$ risk, $R_a^\star$ its Bayes risk,
$\ell$ classification-calibrated (driving its excess risk to zero drives the
excess $0$--$1$ risk to zero) with calibration function $\psi$ (which converts excess surrogate risk into
excess $0$--$1$ risk), and
$\epsilon_a=L_a-L_a^\star$, where $L_a^\star$ is the group-$a$ Bayes surrogate
risk over all measurable predictors, not the minimum over the PEFT class
\citep{bartlett2006convexity}.

\begin{restatable}[Near-Bayes convergence, not a floor]{theorem}{thmBayesfloor}
\label{thm:bayesfloor}
For every $\phi$, with $\gamma_{ab}=|R_a^\star-R_b^\star|$,
\[
\big|\,|R_a-R_b|-\gamma_{ab}\,\big|
\;\le\;\max\{\psi^{-1}(\epsilon_a),\psi^{-1}(\epsilon_b)\} ,
\]
and hence
\[
|R_a-R_b|\;\ge\;\max\big\{0,\;\gamma_{ab}-\max\nolimits_c\psi^{-1}(\epsilon_c)\big\}.
\]
Consequently $|R_a-R_b|\to\gamma_{ab}$ along any sequence of predictors whose
excess surrogate risks both tend to zero.
\end{restatable}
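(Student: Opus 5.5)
The plan is to prove the pointwise two-sided bound first and then read off the lower bound and the limit. The only tool needed is classification calibration in the form of \citet{bartlett2006convexity}. For each group $c\in\{a,b\}$, the excess $0$--$1$ risk satisfies $\psi(R_c-R_c^\star)\le L_c-L_c^\star=\epsilon_c$, where $\psi$ is nondecreasing with $\psi(0)=0$. Write $\psi^{-1}(t)$ for the generalized inverse $\sup\{u\in[0,1]:\psi(u)\le t\}$. Then $0\le R_c-R_c^\star\le\psi^{-1}(\epsilon_c)$. Applying the calibration inequality to each group separately is legitimate because $L_c^\star$ is the Bayes surrogate risk over all measurable predictors under $\mathcal D_c$, not the minimum over the PEFT class. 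The predictor induced by $T(\phi)$ is one such measurable predictor, so the inequality holds for every $\phi$ with no approximation assumption.

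Next comes a triangle-inequality step. Set $u_c=R_c-R_c^\star\in[0,\psi^{-1}(\epsilon_c)]$. Then
\[
R_a-R_b=(R_a^\star-R_b^\star)+(u_a-u_b),
\]
and the reverse triangle inequality gives
\[
\big||R_a-R_b|-|R_a^\star-R_b^\star|\big|\le|u_a-u_b|.
\]
Because $u_a$ and $u_b$ are both nonnegative, $|u_a-u_b|\le\max\{u_a,u_b\}$ rather than the cruder $u_a+u_b$. This one-sidedness is the only slightly nontrivial point, and it is what gives the $\max$ in the statement. It yields the first display.

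For the second display, rearrange $|R_a-R_b|\ge\gamma_{ab}-\max_c\psi^{-1}(\epsilon_c)$ and combine it with $|R_a-R_b|\ge0$. For the convergence claim, I will use that $\psi^{-1}(t)\to0$ as $t\downarrow0$. This follows from classification calibration, which makes $\psi(u)>0$ for $u>0$, together with monotonicity. So along any sequence with $\epsilon_a,\epsilon_b\to0$ the right side of the first display vanishes.

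The main obstacle is the bookkeeping around $\psi^{-1}$ rather than the inequality chain itself. $\psi$ need not be strictly increasing or invertible, so I would define the generalized inverse explicitly and check two things. First, that $\psi(u)\le\epsilon$ implies $u\le\psi^{-1}(\epsilon)$. Second, that $\psi^{-1}(\epsilon)\downarrow0$, using the Bartlett--Jordan--McAuliffe fact that $\psi$ is positive on $(0,1]$ for calibrated losses. Since the statement is about binary classification only, I will simply invoke their theorem rather than reprove it.
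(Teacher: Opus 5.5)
Your proposal is correct and follows the paper's proof essentially step for step. Both use the same decomposition $R_a-R_b=(R_a^\star-R_b^\star)+(u_a-u_b)$, the calibration bound $0\le u_c\le\psi^{-1}(\epsilon_c)$, the observation that nonnegativity gives $|u_a-u_b|\le\max\{u_a,u_b\}$, and the reverse triangle inequality. Your explicit generalized inverse only tidies the paper's appeal to $\psi$ being continuous, convex and vanishing only at $0$, and either justification of $\psi^{-1}(\epsilon)\to0$ suffices.
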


Near Bayes optimality the risk gap approaches the Bayes-risk difference, which is
a limit, not a floor (Remark~\ref{rem:not-a-floor}). The result extends to
balanced error and multi-group maxima (Remark~\ref{rem:balanced}), not to
multiclass balanced accuracy or MAE; Appendix~\ref{sec:theory-scope} gives the
full scope.

%% file: sec_mitigation.tex
\input{tables/mitigation_verified_e1}

\section{Does reachable-Fisher trace matching narrow subgroup gaps?}
\label{sec:mitigation-results}

\paragraph{Arms and baseline.} We compare the trace-matching objective of
Section~\ref{sec:objective} with \emph{group-balanced task-loss training}: the same objective without the penalty, using the same sampler, schedule, seed, and initialization.  The sampler draws equal counts from every constrained group and the task loss is the unweighted mean over the batch, so both arms minimize the group-balanced risk (uniform $p_a$); pooled utility is nevertheless reported over all test examples, so it is a population quantity. Because the two arms share the sampler and the loss, their difference isolates the curvature penalty rather than the effect of oversampling.

\paragraph{Protocol.} Three image tasks (Fitzpatrick17k, UTKFace, and the face
age-regression task of FHIBE) are crossed with ConvNeXt, ViT and SigLIP~2 encoders and
LoRA, DoRA and PiSSA adapters; a tabular task (diabetes readmission) is crossed
with the same adapters on an FT-Transformer. This gives \mvPn{} cells, each
trained once per arm. Batch size is tuned per cell as a hyperparameter (64--1024); both arms of a cell share it. The retained checkpoint is selected by the pooled validation metric, never by a subgroup gap (Appendix~\ref{sec:method}). The primary outcome is
the test-split best--worst subgroup gap $G$ in balanced accuracy or MAE,
summarized as $R_{\mathrm{gap}}=100\,(G_{\mathrm P}-G_{\mathrm H})/G_{\mathrm P}$
(P plain, H Huber-penalized trace matching; positive favours trace matching). The
secondary outcome is pooled utility over all test examples. EOD and AOD are in Appendix~\ref{app:mitigation}.

\paragraph{Results.} Cells share tasks, encoders and adapters and are
single-seed, so all statistics are descriptive.
\begin{itemize}[leftmargin=*,itemsep=0pt,topsep=0pt]
    \item \emph{Gap.} Trace matching narrows the gap in all \mvPn{} cells, with
    median $R_{\mathrm{gap}}=\mvPmed\%$ (task-blocked bootstrap
    $[\mvBlo,\,\mvBhi]\%$). The effect is present in every task and adapter:
    medians are \mvFitzmed\% (Fitzpatrick17k), \mvUTKmed\% (UTKFace),
    \mvFacemed\% (FHIBE face) and \mvDiabmed\% (diabetes), and \mvLoRAmed\%,
    \mvDoRAmed\% and \mvPiSSAmed\% for LoRA, DoRA and PiSSA.
    \item \emph{Utility.} \mvQboth{} cells narrow the gap without a loss of
    pooled utility and \mvQgaponly{} at a small cost
    (Figure~\ref{fig:mit-winrates}); median changes are
    $+\mvUclsmed$~pp balanced accuracy and $\mvUregmed$~years MAE. The largest
    classification loss is $\mvUclsworst$~pp (\mvUclsworstcell), where the gap
    falls from \mvUclsworstgp{} to \mvUclsworstgh{}.
    \item \emph{Surrogate-loss gap.} The gap in the training loss, the quantity
    the theorems concern, narrows in 27 of 30 cells (median $-22.8\%$,
    bootstrap $[-32.2,-16.1]\%$) and moves with the performance gap
    (Spearman $\rho=+0.57$).
    \item \emph{Matrix audit.} On identical held-out group-balanced batches (Appendix~\ref{sec:method}), trace matching lowers the normalized trace gap in 20 of 30 cells (median $-12.0\%$) and the observed-label empirical operator discrepancy, which Theorem~\ref{thm:trace}(ii) says the trace cannot bound, in 23 (median $-10.2\%$, bootstrap $[-18.5,-3.5]\%$). The primary unbiased squared-Frobenius statistic falls in only 16 (median $-4.3\%$, interval spanning zero): of the 20 cells whose trace gap fell, it rose in 9. Using the plug-in norms, the median operator-to-Frobenius ratio is $0.72$, so that bound is loose by about a third. Per-cell values are in Table~\ref{tab:mit-audit}.
    \item \emph{Exploratory association.} Cells whose expected-Fisher CV fell more also
    narrowed the gap more (Spearman $\rho=+0.23$ on relative changes; the CV
    fell in 25 of 30 cells).
\end{itemize}
\textbf{Reading the results against the theory.} Theorem~\ref{thm:impossible} motivates the penalty: under slope matching, a reachable-Fisher difference definite above the defect forces the signed gap up, so shrinking it is the lever measurable at scale. A trace penalty reaches that condition only under Loewner ordering (Theorem~\ref{thm:trace} (iii)), which we do not measure, so these outcomes are compatible with the theory without being predicted by it. Appendix~\ref{app:scope} compares adapter and full-model penalty scope.

%% file: tables/mitigation_verified_e1.tex
\newcommand{\mvPn}{30}
\newcommand{\mvPwins}{30}
\newcommand{\mvPpct}{100}
\newcommand{\mvPmed}{17.2}
\newcommand{\mvPmean}{20.2}
\newcommand{\mvPlo}{12}
\newcommand{\mvPhi}{21}
\newcommand{\mvBlo}{11}
\newcommand{\mvBhi}{29}
\newcommand{\mvPsignp}{1.86e-09}
\newcommand{\mvPwilp}{1.86e-09}
\newcommand{\mvQboth}{23}
\newcommand{\mvQgaponly}{7}
\newcommand{\mvQutilonly}{0}
\newcommand{\mvQneither}{0}
\newcommand{\mvPeodn}{12}
\newcommand{\mvPeodw}{7}
\newcommand{\mvPaodw}{6}
\newcommand{\mvPaodn}{12}
\newcommand{\mvUclsworst}{-2.4}
\newcommand{\mvUclsworstcell}{Fitzpatrick17k/ViT/PiSSA}
\newcommand{\mvUregworst}{+0.26}
\newcommand{\mvUregworstcell}{FHIBE face/SigLIP~2/LoRA}
\newcommand{\mvAn}{70}
\newcommand{\mvAwins}{44}
\newcommand{\mvApct}{63}
\newcommand{\mvAmed}{6.4}
\newcommand{\mvUclsmed}{0.5}
\newcommand{\mvUclsbig}{0}
\newcommand{\mvUregmed}{-0.09}
\newcommand{\mvUclsworstgp}{0.260}
\newcommand{\mvUclsworstgh}{0.139}
\newcommand{\mvLoRAw}{10 of 10}
\newcommand{\mvLoRAmed}{17.2}
\newcommand{\mvDoRAw}{10 of 10}
\newcommand{\mvDoRAmed}{17.3}
\newcommand{\mvPiSSAw}{10 of 10}
\newcommand{\mvPiSSAmed}{15.5}
\newcommand{\mvFitzw}{9 of 9}
\newcommand{\mvFitzmed}{25.7}
\newcommand{\mvUTKw}{9 of 9}
\newcommand{\mvUTKmed}{14.3}
\newcommand{\mvFacew}{9 of 9}
\newcommand{\mvFacemed}{11.1}
\newcommand{\mvDiabw}{3 of 3}
\newcommand{\mvDiabmed}{23.0}
\newcommand{\mvLOTOpctlo}{100}
\newcommand{\mvLOTOpcthi}{100}
\newcommand{\mvLOTOmedlo}{15.2}
\newcommand{\mvLOTOmedhi}{19.3}

%% file: app_theory.tex
\section{Extended theory}
\label{app:theory}

This appendix collects the discussion that accompanies Section~\ref{sec:theory}:
what the restriction to $\mathcal{H}_\phi$ means, why it is forced, the remarks
that follow each result, and a full statement of scope. Proofs follow in
Appendix~\ref{app:proofs}.

\subsection{Reading assumption A1}
\label{app:a1}

Assumption \textbf{A1} makes the second-order analysis of
Section~\ref{sec:theory} precise. Each of its three parts has one job.

\textbf{The ball $B(\phi,\varepsilon_0)$.} This is the set
$\{\phi'\in\R^p:\|\phi'-\phi\|\le\varepsilon_0\}$ of adapter settings within
Euclidean distance $\varepsilon_0$ of the current point $\phi$. All results are
local: they concern updates $\delta$ with $\|\delta\|\le\varepsilon\le\varepsilon_0$,
so that $\phi+\delta$ stays inside the ball where the smoothness and the bound
$M$ hold. The value of $\varepsilon_0$ need not be known; it only has to exist.

\textbf{$C^3$ smoothness.} Each subgroup loss $L_a$ is three times continuously
differentiable on the ball, so its gradient $g_a$, Hessian $H_a$ and third
derivative exist and vary continuously. This is what licenses a Taylor expansion
to second order with an explicit third-order remainder. It holds for smooth
activations and losses (for example $\tanh$, GELU and softmax cross-entropy);
for piecewise-linear activations such as ReLU it holds only away from their
kinks, so for such networks \textbf{A1} is an idealization.

\textbf{The constant $M$.} $M$ bounds the third derivative on the ball:
$|D^3L_a(\phi')[\delta,\delta,\delta]|\le M\|\delta\|^3$ for all $\phi'$ in the
ball and all $\delta$. It measures how fast the curvature itself changes. By
Taylor's theorem with Lagrange remainder, each $L_a$ differs from its quadratic
model by at most $\tfrac{M}{6}\|\delta\|^3$, so the gap $\Delta_{ab}=L_a-L_b$
differs from its slope-plus-curvature model by at most
$\tfrac{M}{3}\|\delta\|^3$ (Proposition~\ref{prop:expansion}). A small $M$ means
the quadratic model stays accurate over larger steps. $M$ also sets the step
size in Theorem~\ref{thm:impossible}: for $\|\delta\|\le3\rho_{\min}/(4M)$ the
cubic remainder is too small to overturn the quadratic increase. $M$ is not
estimated in our experiments; Appendix~\ref{app:exact} checks on a small network
that the remainder indeed scales as $\|\delta\|^3$.

\subsection{The horizontal restriction}

$\mathcal{H}_\phi=(\ker J(\phi))^\perp$ is a Euclidean choice of representative
for the first-order quotient by the \emph{entire} first-order kernel
$\ker J(\phi)$, which contains the $\mathrm{GL}(r)$ gauge tangent and can be
strictly larger at rank-deficient points (Proposition~\ref{prop:gauge}(ii)).

\textbf{What this restriction does and does not mean.} $\mathcal{H}_\phi$ is a
first-order object. Every curvature statement of Section~\ref{sec:theory} concerns \emph{straight additive
updates} $\delta\in\mathcal{H}_\phi$, and we do \emph{not} identify such a
$\delta$ with an equivalence class of updates: Remark~\ref{rem:not-a-quotient}
exhibits a LoRA point at the standard $B=0$ initialization where two raw updates
differing by an element of $\ker J$, and hence representing the same first-order
predictor change, produce gap changes of \emph{opposite sign} at second
order. A conclusion of the form ``$\mathcal{S}_\varepsilon=\{0\}$'' therefore
means \emph{no nonzero horizontal additive update}, not ``no nonzero quotient
class''. Building a genuine second-order quotient would require a regular
full-rank stratum and a smooth local horizontal section, machinery we do not
develop and which in any case does not cover the $B=0$ initialization our
experiments use.

Proposition~\ref{prop:gauge} is the reason every curvature statement in Section~\ref{sec:theory} is restricted
to $\mathcal{H}_\phi$. Read on the full coordinate space $\R^p$, the hypotheses
of Theorem~\ref{thm:impossible} and Corollary~\ref{cor:weyl} are \emph{empty}
for exactly the adapters this paper is about: the $\mathrm{GL}(r)$ symmetry
$BA=(Be^{tX})(e^{-tX}A)$ guarantees a null direction, so no reachable Fisher
difference is ever positive definite and no $\Dab$ is either once slopes match.
Restricting to horizontal additive updates removes those directions and restores
satisfiability: on $\mathcal{H}_\phi$ the definiteness conditions can hold, and
we verify this numerically in the supplement. The obstruction is not a
technicality about degenerate points: whenever $r\le\min(d_{\mathrm{in}},d_{\mathrm{out}})$,
which holds for every adapter we train, \eqref{eq:kerbound} gives it at every
$\phi$. The repair, however, is correspondingly modest: it buys a well-posed class
of updates to quantify over, not a quotient manifold on which to do second-order
geometry (Remark~\ref{rem:not-a-quotient}).

\subsection{Remarks on the main results}

Corollary~\ref{cor:weyl} is the only route from reachable Fisher geometry to the
forced-increase conclusion, and it is conditional on three things: slope
matching, the defect being smaller than the reachable gap \emph{on
$\mathcal{H}_\phi$}, and the restriction to $\mathcal{H}_\phi$ itself, without
which Proposition~\ref{prop:gauge} makes it vacuous. We do not claim $\Fr_a-\Fr_b$ is the \emph{only} lever
on $\Dab$: by Proposition~\ref{prop:defect} the defect $\Xi_a-\Xi_b$ is
another, and a method that changed it would change $\Dab$ too. We claim it is
the lever that is measurable at scale and that the estimator of
Section~\ref{sec:frobenius} acts on.

Minimizing the worst-case \emph{second-order} term is thus exactly minimizing
$\|\Dab\|^{\mathcal{H}}_2$, which by Proposition~\ref{prop:defect} agrees with
minimizing $\|\Fr_a-\Fr_b\|^{\mathcal{H}}_2$ to within $\eta^{\mathcal{H}}_{ab}$.
This is an additive approximation, not an equivalence, since the defect varies with
$\phi$ and can partly cancel the Fisher term.

Part (ii) offers one concrete explanation for why adapter-robustness audits
conflict \citep{ding2024fairness, sukumaran2024fairlora, li2024flat}: two
studies measuring full-model sharpness on models whose reachable geometries
differ as in (ii) record identical curvature. We do not claim this is the cause
of any particular disagreement in that literature.

Three cautions. The extremal difference $\diag(\tau,-\tau,0,\dots)$ is
\emph{indefinite}, so Theorem~\ref{thm:impossible} does not apply to it and no
forced-increase reading is available there. $\mathcal{Q}_{ab}$ is governed by
$\|\Dab\|^{\mathcal{H}}_2$, not by $\|\Fr_a-\Fr_b\|_2$, which is why the
consequence carries $\eta^{\mathcal{H}}_{ab}$. And $\mathcal{Q}_{ab}$ is only the
quadratic channel: the displayed bound is on the gap itself only because slope
matching removes the linear term. Combined with the
decomposition
$\tr\widehat F^{\mathrm{R},\mathrm{obs}}_a=\|\hat g_a\|_2^2+\tr\widehat\Sigma_a$
(Lemma~\ref{lem:trace-decomp}), which shows a trace penalty mixes a mean-gradient
term (related to, but not determining, the slope channel $g_a-g_b$) with a
within-group dispersion term, this is a concrete
deficiency of the objective implemented in Section~\ref{sec:method}.

Three terms separate the computed penalty from the gap, and none is cosmetic.
The \emph{slope} term cannot be dropped: for a fixed-variance Gaussian model at a
point where $\Fr_a=\Fr_b=H_a=H_b$ and
$\widehat{\mathcal{P}}=\zeta_c=\eta_{ab}=M=0$ but $g_a-g_b=1$, the exact
worst-case gap change is $\varepsilon$, while a bound omitting the slope term
would give $0$. The \emph{defect} $\eta^{\mathcal{H}}_{ab}$ is bounded only in
part (Proposition~\ref{prop:defect}). And the \emph{estimator} terms $\zeta_c$
are not sampling noise alone: the observed-label empirical Fisher is in general
neither the model-expected Fisher nor an unbiased estimate of it
\citep{kunstner2019limitations}, so $\zeta_c$ carries a systematic component
that does not vanish with batch size. This is why the Frobenius discrepancy of
Section~\ref{sec:frobenius} is used as a matrix-level audit and not presented as
a gap guarantee.

\subsection{Scope: what is and is not established}
\label{sec:theory-scope}

\textbf{Established, unconditionally.} The projection identity and invariance
classification (Lemma~\ref{lem:projection}); the gauge redundancy of factorized
adapters (Proposition~\ref{prop:gauge}); the trace extremum
and the ordered-case control by the trace gap (Theorem~\ref{thm:trace}) and the
decomposition (Lemma~\ref{lem:trace-decomp}); the
Gram identity (Proposition~\ref{prop:gram}); the matrix constructions of
Theorem~\ref{thm:onlyreachable}; and, for rate-parity criteria outside our
scope, Theorem~\ref{thm:eod} in Appendix~\ref{app:eod}.

\textbf{Established, conditionally.} For additive updates
$\delta\in\mathcal{H}_\phi$, under slope matching $g_a=g_b$ and a $\Dab$ positive
definite on $\mathcal{H}_\phi$, every such small nonzero update increases the signed
gap (Theorem~\ref{thm:impossible}) and the admissible set is $\{0\}$
(Theorem~\ref{thm:feasibility}(ii)); these become statements about the reachable
Fisher only through Corollary~\ref{cor:weyl}, which needs
$\lambda^{\mathcal{H}}_{\min}(\Fr_a-\Fr_b)>\eta^{\mathcal{H}}_{ab}$. Worst-case
second-order growth is $\tfrac12\|\Dab\|^{\mathcal{H}}_2\varepsilon^2$
(Theorem~\ref{thm:worstcase}), and the Frobenius discrepancy bounds that channel up
to $\zeta_a+\zeta_b+\eta^{\mathcal{H}}_{ab}$, not the gap
(Corollary~\ref{cor:closure}).

\textbf{Not established.} That the \emph{definiteness and forced-increase}
claims have content on raw LoRA/DoRA coordinates; by
Proposition~\ref{prop:gauge} and Corollary~\ref{cor:gauge-G} they do not.
(Theorems~\ref{thm:worstcase} and \ref{thm:sufficiency} remain meaningful there;
it is specifically positive definiteness that is unattainable.) That
$\mathcal{H}_\phi$ is a second-order quotient, or that a horizontal result
transfers to other representatives of the same first-order class, or to an
unprojected optimizer (Remark~\ref{rem:not-a-quotient}). That
$\Fr_a-\Fr_b$ is the \emph{only} lever on $\Dab$; the defect is another and we
bound only its reparameterization half (Proposition~\ref{prop:defect}). That
$\eta^{\mathcal{H}}_{ab}$ or $\zeta_c$ is small for deep networks, or that
either was measured in our runs. That reducing $\widehat{\mathcal{P}}$ reduces
the \emph{loss} gap, let alone a \emph{performance} gap. That $\gamma_{ab}$ is a
floor (Remark~\ref{rem:not-a-floor}). That the defect bound covers DoRA, one
of the three adapters in Section~\ref{sec:mitigation-results}. That the trace
penalty narrows gaps reliably: Section~\ref{sec:mitigation-results} finds a
modest, heterogeneous effect that is not statistically established. That any scalar norm of $\Fr_a-\Fr_b$ (trace,
Frobenius or spectral) certifies the definiteness condition of
Corollary~\ref{cor:weyl}. That Theorem~\ref{thm:bayesfloor} covers multiclass
balanced accuracy or MAE; it is stated for binary $0$--$1$ risk and, by
Remark~\ref{rem:balanced}, binary balanced error.

\textbf{The claim we defend.} \emph{For straight additive updates constrained to
a horizontal complement of the local Jacobian kernel, and once the slope channel
is controlled: if the reachable-Fisher difference is positive definite on that
complement with restricted minimum eigenvalue above the Hessian--Fisher defect,
every sufficiently small nonzero such update increases the signed subgroup loss
gap. Separately, the worst-case size of the curvature channel is an operator
norm; trace equality cannot control it, and the Frobenius discrepancy bounds it,
at a factor-$m$ cost and up to an estimator gap we do not bound. Neither scalar
certifies the definiteness condition. Whether the curvature channel matters for a
measured performance gap is not established here, and our own measurements do
not show that it does.}

\textbf{Open.} Finite-sample error in $\widehat F^{\mathrm{R}}_a$ for small groups;
whether a matching penalty run to convergence reaches a matched point;
non-smooth adapters; bounding $\mathcal{E}_a-\mathcal{E}_b$; and distribution
shift, where the relevant object is the reachable curvature under the target
rather than the source distribution.

%% file: theory_proofs.tex

\section{Additional results and proofs}
\label{app:proofs}

Statements that appear in Section~\ref{sec:theory} are restated here with their
original numbers; results that appear only in this appendix are stated in full
where they are proved. Where a hypothesis is load-bearing we give the
counterexample that shows it cannot be dropped. All constructions, the
positive ones and the counterexamples alike, are checked to machine precision by
\texttt{verify\_theory.py}, included in the supplementary material at the root of
the submission archive, which is written so that it fails if any statement is
strengthened back to a form we had to retract; running \texttt{python
verify\_theory.py} prints one line per check and a final verdict.

\subsection{The gap expansion and the Hessian--Fisher defect}

\propExpansion*

\begin{proof}
Apply Taylor's theorem with Lagrange remainder to $L_a$ and to $L_b$ separately
at $\phi$: for $c\in\{a,b\}$,
$L_c(\phi+\delta)=L_c(\phi)+g_c^\top\delta+\frac12\delta^\top H_c\delta+r_c$
with $|r_c|\le\frac{M}{6}\|\delta\|^3$ by \textbf{A1}. Subtracting the two
expansions and setting $R_{ab}=r_a-r_b$ gives \eqref{eq:gap-expansion}, with
$|R_{ab}|\le\frac{M}{6}\|\delta\|^3+\frac{M}{6}\|\delta\|^3=\frac{M}{3}\|\delta\|^3$.
\end{proof}

\begin{restatable}[Hessian--Fisher defect]{proposition}{propDefect}
\label{prop:defect}
Under \textbf{A2}, $H_a=\Fr_a+\Xi_a$ with
\[
\Xi_a=\underbrace{\sum\nolimits_{k}[\nabla_\theta\ell_a]_k\nabla^2_\phi T_k}_{\mathcal{T}_a\ \textnormal{(reparameterization)}}
\;+\;\underbrace{J^\top(\nabla^2_\theta\ell_a-F_a)J}_{\mathcal{E}_a\ \textnormal{(model curvature)}},
\qquad
\Dab=(\Fr_a-\Fr_b)+(\Xi_a-\Xi_b).
\]
Write $\eta_{ab}=\|\Xi_a-\Xi_b\|_2$. For LoRA, rsLoRA and PiSSA,
$T(\phi)=\theta_0+s\,BA$ with fixed scale $s$ ($s=\alpha/r$ for LoRA and PiSSA,
$s=\alpha/\sqrt r$ for rsLoRA); let $\nabla_W\ell_a$ denote the gradient with
respect to the \emph{deployed} weight $W=W_0+sBA$, i.e.\ before $s$ is absorbed.
Then, for a single adapted block,
\[
\|\mathcal{T}_a-\mathcal{T}_b\|_2\;\le\;|s|\,\|\nabla_W\ell_a-\nabla_W\ell_b\|_F ,
\]
and with several blocks the bound holds with the maximum of the right-hand side
over blocks.
\end{restatable}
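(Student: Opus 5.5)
The plan is to differentiate the composite loss $\ell(T(\phi);z)$ twice by the chain rule, average over $\mathcal{D}_a$, and then identify the Gauss--Newton piece with the reachable Fisher using \textbf{A2}. For one example, $\nabla_\phi\ell = J^\top\nabla_\theta\ell$, and differentiating again gives
\[
\nabla^2_\phi\ell=J^\top(\nabla^2_\theta\ell)J+\sum\nolimits_k[\nabla_\theta\ell]_k\nabla^2_\phi T_k .
\]
Taking expectations (with $J$ and $\nabla^2_\phi T_k$ independent of $z$), $H_a=J^\top\E_a[\nabla^2_\theta\ell]J+\mathcal{T}_a$, where $\mathcal{T}_a$ is the group-averaged reparameterization term. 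Adding and subtracting $J^\top F_aJ=\Fr_a$ gives $H_a=\Fr_a+\mathcal{T}_a+\mathcal{E}_a$ with $\mathcal{E}_a=J^\top(\nabla^2_\theta\ell_a-F_a)J$. This is purely algebraic, so the decomposition itself holds without \textbf{A2}. \textbf{A2} is what makes $\mathcal{E}_a$ interpretable: the GGN equals $F_a$, so $\mathcal{E}_a$ reduces to the pulled-back residual involving second derivatives of the network outputs. Subtracting the $a$ and $b$ decompositions yields $\Dab=(\Fr_a-\Fr_b)+(\Xi_a-\Xi_b)$.

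For the LoRA bound, I will compute $\mathcal{T}_a$ explicitly. Since $T$ is bilinear in $(A,B)$, its second derivative is constant. For directions $\delta=(\dot A,\dot B)$, $\delta'=(\dot A',\dot B')$ we have
\[
D^2T[\delta,\delta']=s(\dot B\dot A'+\dot B'\dot A),
\]
and contracting with the gradient gives
\[
\delta^\top\mathcal{T}_a\delta'=s\langle G_a,\dot B\dot A'+\dot B'\dot A\rangle_F,
\]
where $G_a=\nabla_W\ell_a$ is the gradient at the deployed weight. The parameterization $T(\phi)=\theta_0+sBA$ is what carries the explicit factor $s$. The difference $\mathcal{T}_a-\mathcal{T}_b$ is therefore this same bilinear form with $G=G_a-G_b$. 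Its operator norm is the maximum of $|2s\langle G,\dot B\dot A\rangle|$ over unit $\delta$. Applying Cauchy--Schwarz and submultiplicativity gives
\[
|\langle G,\dot B\dot A\rangle|\le\|G\|_F\|\dot B\dot A\|_F\le\|G\|_F\|\dot B\|_F\|\dot A\|_F ,
\]
and AM--GM gives $2\|\dot A\|_F\|\dot B\|_F\le\|\dot A\|_F^2+\|\dot B\|_F^2=1$. Together these yield $\|\mathcal{T}_a-\mathcal{T}_b\|_2\le|s|\,\|G\|_F$. The bound is actually attained up to using the spectral norm of $G$, which confirms that the constant is right.

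For several adapted blocks, $\nabla^2_\phi T$ is block-diagonal across blocks, since each block's $(A,B)$ enters only its own $W$. Hence $\mathcal{T}_a-\mathcal{T}_b$ is block-diagonal, and its operator norm is the maximum over blocks of the single-block bound. The scale conventions for rsLoRA and PiSSA only change $s$; PiSSA's initialization does not matter because $\theta_0$ absorbs the residual.

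The main obstacle is bookkeeping rather than conceptual. I must make sure the bilinear form is evaluated with the correct symmetrization, since the $\delta\delta'$ cross terms produce the factor $2$ that AM--GM then absorbs. I must also confirm that gradients "before $s$ is absorbed" make the chain rule contribute exactly one factor of $s$. A secondary check is that the head parameters, which enter linearly when full-rank and are trained directly, contribute zero to $\mathcal{T}$, so they do not spoil the block-maximum statement.
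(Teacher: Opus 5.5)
Your proposal is correct and follows the paper's proof essentially step for step. The shared steps are: the chain rule applied twice, then adding and subtracting $J^\top F_a J$, with \textbf{A2} used only to interpret $\mathcal{E}_a$ as the model-curvature residual; the constant second differential $2s\,\dot B\dot A$ of the bilinear map, contracted against $\nabla_W\ell_a-\nabla_W\ell_b$ at the deployed weight; Cauchy--Schwarz with $\|\dot B\dot A\|_F\le\tfrac12$; and block-diagonality across adapted blocks. Your side remark is a correct sharpening that the paper does not state: the single-block operator norm equals exactly $|s|\,\|\nabla_W\ell_a-\nabla_W\ell_b\|_2$, the spectral norm, which is at most the Frobenius bound.
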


We bound only the reparameterization half of the defect. The model-curvature
residual $\mathcal{E}_a-\mathcal{E}_b$ is \emph{not} bounded here: it vanishes
for models linear in $\theta$ and is small near interpolation, but for a deep
network we treat $\eta_{ab}$ as a quantity to be assumed small or measured, not
one we have shown to be small. For DoRA the magnitude--direction split
contributes a further term of the same type which we also do not bound. This is
the principal gap between the geometry we can compute, $\Fr_a-\Fr_b$, and the
geometry that governs the expansion, $\Dab$.

\begin{proof}
Differentiating $L_a(\phi)=\ell_a(T(\phi))$ twice by the chain rule gives
\[
H_a=J^\top\big(\nabla^2_\theta\ell_a\big)J
+\sum\nolimits_{k=1}^{P}\big[\nabla_\theta\ell_a\big]_k\nabla^2_\phi T_k .
\]
Adding and subtracting $J^\top F_aJ=\Fr_a$ inside the first term gives
$H_a=\Fr_a+\mathcal{T}_a+\mathcal{E}_a$ with $\mathcal{T}_a,\mathcal{E}_a$ as
stated, where \textbf{A2} is used only to identify $F_a$ with the generalized
Gauss--Newton matrix so that $\mathcal{E}_a=J^\top(\nabla^2_\theta\ell_a-F_a)J$
is exactly the model-curvature residual. Subtracting the identity for $b$ gives
the expression for $\Dab$.

For the LoRA-family bound, fix one adapted block with
$T(\phi)=\theta_0+s\,BA$, $W=W_0+sBA$. The map is bilinear in $\phi=(A,B)$, and
its second differential in direction $(\dot A,\dot B)$ is $2s\,\dot B\dot A$, so
\[
\mathcal{T}_a[(\dot A,\dot B),(\dot A,\dot B)]
=2s\,\langle\nabla_W\ell_a,\dot B\dot A\rangle ,
\]
with $\nabla_W\ell_a$ the gradient with respect to the deployed weight. For
$\|\dot A\|_F^2+\|\dot B\|_F^2=1$, submultiplicativity and
$2\|\dot B\|_F\|\dot A\|_F\le\|\dot A\|_F^2+\|\dot B\|_F^2=1$ give
$\|\dot B\dot A\|_F\le\tfrac12$, so Cauchy--Schwarz yields
$|(\mathcal{T}_a-\mathcal{T}_b)[(\dot A,\dot B),(\dot A,\dot B)]|
\le|s|\,\|\nabla_W\ell_a-\nabla_W\ell_b\|_F$; since
$\mathcal{T}_a-\mathcal{T}_b$ is symmetric this is the stated operator-norm
bound. The constant is $|s|$ for every member of the family; only the value of
$s$ differs ($\alpha/r$ for LoRA and PiSSA, which differ only in initialization,
$\alpha/\sqrt r$ for rsLoRA). If instead the gradient is taken with respect to
the unscaled product $\Delta W=BA$, then $\nabla_{\Delta W}\ell=s\nabla_W\ell$ and
the bound reads $\|\nabla_{\Delta W}\ell_a-\nabla_{\Delta W}\ell_b\|_F$ with
constant $1$. With several blocks, $\mathcal{T}_a-\mathcal{T}_b$ is block
diagonal across blocks (each block's $T$ depends only on its own factors), so
its operator norm is the maximum of the blockwise bounds.
\end{proof}

\begin{remark}[What the bound does and does not cover]
\label{rem:defect-scope}
The bound controls $\mathcal{T}_a-\mathcal{T}_b$ only. The residual difference
$\mathcal{E}_a-\mathcal{E}_b$ is not bounded: it vanishes when the network is
linear in $\theta$, and is small near interpolation because
$\nabla^2_\theta\ell_a-F_a$ carries the factor $\nabla_f\ell_a$, but for a deep
network at a general point we have no bound. Consequently $\eta_{ab}$ is a
quantity to be assumed small or measured, not one shown to be small, and every
statement converting between $\Dab$ and $\Fr_a-\Fr_b$ carries it explicitly.
For DoRA the magnitude--direction reparameterization contributes a further term
of the same type, which we also do not bound; this matters because DoRA is one of
the three adapters in Section~\ref{sec:mitigation-results}, whereas the bound does
cover the LoRA and PiSSA runs.
\end{remark}

\begin{restatable}[Gauge redundancy of factorized adapters]{proposition}{propGauge}
\label{prop:gauge}
Let $T(\phi)=\theta_0+s\,BA$ with $\phi=(A,B)$, $B\in\R^{d_{\mathrm{out}}\times r}$,
$A\in\R^{r\times d_{\mathrm{in}}}$ and fixed scale $s\ne0$ (the LoRA, rsLoRA and
PiSSA parameterization), and assume $1\le r\le\min(d_{\mathrm{in}},d_{\mathrm{out}})$.
Write $r_A=\operatorname{rank}A$, $r_B=\operatorname{rank}B$.
\begin{enumerate}[label=(\roman*),leftmargin=*,itemsep=1pt,topsep=2pt]
\item \emph{Gauge tangent.} For every $X\in\R^{r\times r}$ the curve
$A(t)=e^{-tX}A$, $B(t)=Be^{tX}$ satisfies $B(t)A(t)=BA$ for all $t$, and its
tangent $v=(-XA,BX)$ lies in $\ker J(\phi)$. The gauge tangent space
$\{(-XA,BX)\}$ has dimension exactly $r^2-(r-r_A)(r-r_B)$.
\item \emph{Kernel.} Exactly,
\begin{equation}
\dim\ker J(\phi)=d_{\mathrm{out}}(r-r_A)+d_{\mathrm{in}}(r-r_B)+r_Ar_B\;\ge\;r^2 ,
\label{eq:kerbound}
\end{equation}
and $\dim\ker J(\phi)$ exceeds the gauge tangent dimension by exactly
$(r-r_A)(d_{\mathrm{out}}-r_B)+(r-r_B)(d_{\mathrm{in}}-r_A)$. The kernel
\emph{can} be strictly larger than the gauge tangent at rank-deficient points,
but need not be: for $r=d_{\mathrm{in}}=1$, $d_{\mathrm{out}}=2$, $A=[1]$, $B=0$
both are one-dimensional.
\item \emph{Consequences.} For all groups $a,b$ and every $\phi$ there is a
nonzero $v\in\ker J(\phi)$, so
\[
(\Fr_a-\Fr_b)\,v=0
\qquad\text{and so}\qquad
\lambda_{\min}(\Fr_a-\Fr_b)\le0 ,
\]
and if in addition $g_a=g_b$ then some nonzero $v$ has $v^\top\Dab v=0$, so
$\Dab\not\succ0$. (At the origin $A=B=0$ the gauge tangent is zero; there any
nonzero pure-$A$ direction $v=(U,0)$ serves, since $B\,(A+tU)=0$ along the line.)
\end{enumerate}
Without $r\le\min(d_{\mathrm{in}},d_{\mathrm{out}})$ the lower bound $r^2$ in
\eqref{eq:kerbound} fails: $d_{\mathrm{in}}=d_{\mathrm{out}}=1$, $r=2$ gives
$\dim\ker J=3$ at generic points.
\end{restatable}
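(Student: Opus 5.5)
The plan is to linearize the parameterization and reduce every claim to rank counting for one linear map. For one block, the differential of $T(\phi)=\theta_0+s\,BA$ is $J(\phi)(\dot A,\dot B)=s(\dot B A+B\dot A)$. Since $s\ne0$, we have $\ker J(\phi)=\ker\mathcal L$ with $\mathcal L(\dot A,\dot B)=\dot B A+B\dot A$. The domain of $\mathcal L$ has dimension $r(d_{\mathrm{in}}+d_{\mathrm{out}})$, so everything reduces to computing $\operatorname{rank}\mathcal L$.

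\textbf{Part (i).} For the gauge curve, $B(t)A(t)=Be^{tX}e^{-tX}A=BA$. Differentiating at $t=0$ gives $\mathcal L(-XA,BX)=BXA-BXA=0$, so the tangent lies in $\ker J$. The gauge tangent space is the image of the linear map $X\mapsto(-XA,BX)$ on $\R^{r\times r}$. Its kernel is $\{X:XA=0,\ BX=0\}$. The condition $XA=0$ forces every row of $X$ into the left null space of $A$, which has dimension $r-r_A$. The condition $BX=0$ forces every column of $X$ into $\ker B$, which has dimension $r-r_B$. After a change of basis on each side, this set is the tensor product of the two subspaces, so it has dimension $(r-r_A)(r-r_B)$. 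Rank--nullity then gives the stated dimension $r^2-(r-r_A)(r-r_B)$.

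\textbf{Part (ii).} The image of $\mathcal L$ is $U_1+U_2$, where
\[
U_1=\{\dot BA\}=\{M:\operatorname{row}M\subseteq\operatorname{row}A\},\qquad
U_2=\{B\dot A\}=\{M:\operatorname{col}M\subseteq\operatorname{col}B\}.
\]
These have dimensions $d_{\mathrm{out}}r_A$ and $r_Bd_{\mathrm{in}}$. Their intersection consists of matrices with column space in $\operatorname{col}B$ and row space in $\operatorname{row}A$, so it has dimension $r_Ar_B$. The cleanest way to verify all three counts is to choose bases adapted to $\operatorname{col}B$ and $\operatorname{row}A$, after which the subspaces become coordinate blocks. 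Hence $\operatorname{rank}\mathcal L=d_{\mathrm{out}}r_A+d_{\mathrm{in}}r_B-r_Ar_B$, and subtracting from the domain dimension gives \eqref{eq:kerbound}.

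Two pieces of algebra finish part (ii).
\begin{itemize}
\item Subtracting the gauge dimension from \eqref{eq:kerbound} and expanding $(r-r_A)(r-r_B)$ yields the stated excess $(r-r_A)(d_{\mathrm{out}}-r_B)+(r-r_B)(d_{\mathrm{in}}-r_A)$.
\item For the lower bound, $\dim\ker J-r^2=(r-r_A)(d_{\mathrm{out}}-r)+(r-r_B)(d_{\mathrm{in}}-r_A)$. Both terms are nonnegative because $r\le\min(d_{\mathrm{in}},d_{\mathrm{out}})$ and $r_A\le r$.
\end{itemize}
The example $r=d_{\mathrm{in}}=1$, $d_{\mathrm{out}}=2$, $A=[1]$, $B=0$ is then a direct substitution: the kernel is $2\cdot0+1\cdot1+0=1$ and the gauge tangent is $1-0=1$. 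For the final counterexample, with $d_{\mathrm{in}}=d_{\mathrm{out}}=1$, $r=2$ and generic $r_A=r_B=1$, the formula gives $1+1+1=3$, which exceeds... rather, is less than $r^2=4$, so the $r^2$ bound fails.

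\textbf{Part (iii).} Since $\dim\ker J\ge r^2\ge1$, there is a nonzero $v\in\ker J$. Then $\Fr_c v=J^\top F_c Jv=0$ for each group, which gives $(\Fr_a-\Fr_b)v=0$. The Rayleigh quotient at $v$ is zero, so $\lambda_{\min}(\Fr_a-\Fr_b)\le0$.

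The step I expect to be the main obstacle is the claim about $\Dab$, because the defect is not annihilated by $\ker J$. By Proposition~\ref{prop:defect} we have $v^\top\Dab v=v^\top(\Xi_a-\Xi_b)v$. The model-curvature part $\mathcal E_c$ contains $J$ on both sides, so it vanishes on $v$. Only the reparameterization part remains, equal to $2s\langle G_a-G_b,\dot B\dot A\rangle$ with $G_c=\nabla_W\ell_c$.

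I would choose $v$ to be a gauge tangent, for which $\dot B\dot A=-BX^2A$. By the chain rule the slopes are $g_c=(sB^\top G_c,\ sG_cA^\top)$, so slope matching gives $B^\top(G_a-G_b)=0$. Therefore
\[
\langle G_a-G_b,BX^2A\rangle=\langle B^\top(G_a-G_b),X^2A\rangle=0.
\]
This argument needs the gauge tangent to be nonzero. At the origin it is zero, and there I take the pure-$A$ direction $v=(U,0)$ instead: it is in the kernel because $B=0$, and $\dot B\dot A=0$, so the defect term vanishes. In either case some nonzero $v$ has $v^\top\Dab v=0$, and hence $\Dab\not\succ0$.
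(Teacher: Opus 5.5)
Your proof is correct. Parts (i), (ii) and the $\Fr_a-\Fr_b$ half of (iii) follow the paper essentially line for line: the tensor-product description of $\{X:XA=0,\ BX=0\}$, and the range of $J$ as $\{\dot BA\}+\{B\dot A\}$ with intersection $\operatorname{col}B\otimes\operatorname{row}A$. Your grouping $(r-r_A)(d_{\mathrm{out}}-r)+(r-r_B)(d_{\mathrm{in}}-r_A)$ is just a rearrangement of the paper's $r^2+(r-r_A)(r-r_B)$ estimate.

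The genuinely different step is $v^\top\Dab v=0$.
\begin{itemize}
\item The paper never computes a second derivative of $T$. It takes $v$ as the velocity of a curve $\phi(t)$ along which $T$, and hence $\Delta_{ab}$, is constant. Differentiating twice gives $0=v^\top\Dab v+(g_a-g_b)^\top\phi''(0)$, and slope matching removes the second term.
\item You instead reduce $v^\top\Dab v$ to the reparameterization term $2s\langle\nabla_W\ell_a-\nabla_W\ell_b,\dot B\dot A\rangle$. You then kill it with $B^\top(\nabla_W\ell_a-\nabla_W\ell_b)=0$.
\end{itemize}
These are the same identity seen from opposite sides. For the gauge curve, $\phi''(0)=(X^2A,BX^2)$, so $(g_a-g_b)^\top\phi''(0)=2s\langle\nabla_W\ell_a-\nabla_W\ell_b,BX^2A\rangle$, which is exactly minus your expression.

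Each route buys something different. The curve argument uses only that $T$ is constant along the curve. That is why Corollary~\ref{cor:gauge-G} can assert part (iii) verbatim for DoRA without recomputing $D^2T$. Your computation needs $D^2T$ worked out for each parameterization. In exchange, it pinpoints $\mathcal{T}_a-\mathcal{T}_b$ as the only surviving contribution, and it shows that only the $A$-block of the slope-matching hypothesis is used.

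Your reliance on Proposition~\ref{prop:defect} is harmless. The identity you use is pure chain rule and does not need A2.

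One point should be made explicit. Away from the origin you need a nonzero gauge tangent. This follows from part (i): if $(A,B)\neq(0,0)$, then $r^2-(r-r_A)(r-r_B)=rr_A+r_B(r-r_A)\ge r\max(r_A,r_B)\ge1$.
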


\begin{proof}
Since $\tfrac{d}{dt}e^{tX}=Xe^{tX}$, the curve
$A(t)=e^{-tX}A$, $B(t)=Be^{tX}$ gives
$B(t)A(t)=Be^{tX}e^{-tX}A=BA$ for every $t$, so $T(\phi(t))=T(\phi)$ identically.
Its tangent at $t=0$ is $v=(\dot A,\dot B)=(-XA,BX)$, and by the product rule
\[
J(\phi)v=s\big(\dot BA+B\dot A\big)=s\big(BXA-BXA\big)=0 ,
\]
so $v\in\ker J(\phi)$. The map $X\mapsto(-XA,BX)$ is linear with kernel
$\{X:XA=0,\ BX=0\}$, so the gauge orbit contributes
$r^2-\dim\{X:XA=0,BX=0\}$ dimensions; if $A$ has full row rank then $XA=0$ forces
$X=0$, and symmetrically for $B$ of full column rank, so in that case the gauge
orbit alone already has dimension $r^2$.

\emph{Gauge tangent dimension.} The map $X\mapsto(-XA,BX)$ is linear, with
kernel $\{X:XA=0,\ BX=0\}$. The condition $XA=0$ says every row $x^\top$ of $X$
satisfies $x^\top A=0$, i.e.\ $x\in(\operatorname{col}A)^\perp\subseteq\R^r$, a
space of dimension $r-r_A$; the condition $BX=0$ says every column of $X$ lies in
$\ker B\subseteq\R^r$, of dimension $r-r_B$. Together they say
$X\in\ker B\otimes(\operatorname{col}A)^\perp$, i.e.\ $X=\sum_i u_iw_i^\top$
with $u_i\in\ker B$ and $w_i\in(\operatorname{col}A)^\perp$, a space of dimension
$(r-r_A)(r-r_B)$. The gauge tangent therefore has dimension
$r^2-(r-r_A)(r-r_B)$.

\emph{Kernel dimension.} Since $J(\dot A,\dot B)=s(\dot BA+B\dot A)$ and $s\ne0$,
\[
\operatorname{range}J
=\{\dot BA\}+\{B\dot A\}
=\big(\R^{d_{\mathrm{out}}}\!\otimes\operatorname{row}A\big)
+\big(\operatorname{col}B\otimes\R^{d_{\mathrm{in}}}\big),
\]
with summands of dimensions $d_{\mathrm{out}}r_A$ and $d_{\mathrm{in}}r_B$ and
intersection exactly $\operatorname{col}B\otimes\operatorname{row}A$, of
dimension $r_Ar_B$ (for subspaces $U\subseteq\R^m$, $V\subseteq\R^n$,
$(\R^m\otimes V)\cap(U\otimes\R^n)=U\otimes V$). Hence
$\operatorname{rank}J=d_{\mathrm{out}}r_A+d_{\mathrm{in}}r_B-r_Ar_B$ exactly and,
with $p=r(d_{\mathrm{in}}+d_{\mathrm{out}})$,
\[
\dim\ker J=p-\operatorname{rank}J
=d_{\mathrm{out}}(r-r_A)+d_{\mathrm{in}}(r-r_B)+r_Ar_B .
\]
Using the assumption $d_{\mathrm{out}},d_{\mathrm{in}}\ge r$ this is at least
$r(r-r_A)+r(r-r_B)+r_Ar_B=r^2+(r-r_A)(r-r_B)\ge r^2$, giving
\eqref{eq:kerbound}. Subtracting the gauge dimension,
\[
\dim\ker J-\big(r^2-(r-r_A)(r-r_B)\big)
=(r-r_A)(d_{\mathrm{out}}-r_B)+(r-r_B)(d_{\mathrm{in}}-r_A),
\]
which is zero at full rank $r_A=r_B=r$, and zero also for $r=d_{\mathrm{in}}=1$,
$d_{\mathrm{out}}=2$, $A=[1]$, $B=0$ (first term $0$, second term
$(1-0)(1-1)=0$), but equals $1$ at the point of Remark~\ref{rem:not-a-quotient}
($d_{\mathrm{in}}=2$). A rank-deficient point therefore \emph{may} have a kernel
strictly larger than the gauge tangent; it need not. Without
$r\le\min(d_{\mathrm{in}},d_{\mathrm{out}})$ the final inequality fails, e.g.\
$d_{\mathrm{in}}=d_{\mathrm{out}}=1$, $r=2$, $r_A=r_B=1$ gives $\dim\ker J=3<4$.

For the consequences, pick a nonzero $v\in\ker J(\phi)$ lying on a curve
$\phi(t)$ along which $T$ is constant: away from the origin
$r^2-(r-r_A)(r-r_B)\ge1$, so some $X$ gives a nonzero gauge tangent on the curve
$(e^{-tX}A,Be^{tX})$; at the origin $A=B=0$ take the straight line
$\phi(t)=(tU,0)$ with $U\ne0$, along which $BA\equiv0$ and $\phi''(0)=0$. In
either case $\Fr_a v=J^\top F_aJv=0$ for every group, hence
$(\Fr_a-\Fr_b)v=0$ with $v\ne0$: zero is an eigenvalue of the symmetric matrix
$\Fr_a-\Fr_b$, so $\lambda_{\min}(\Fr_a-\Fr_b)\le0$. Finally, $L_a$ and $L_b$ are
constant along $\phi(t)$, so $\Delta_{ab}(\phi(t))$ is constant and
\[
0=\frac{d^2}{dt^2}\Delta_{ab}(\phi(t))\Big|_{t=0}
=v^\top\Dab v+(g_a-g_b)^\top\phi''(0).
\]
With $g_a=g_b$ the second term vanishes (at the origin it vanishes regardless,
since $\phi''(0)=0$), leaving $v^\top\Dab v=0$ for a nonzero $v$, so $\Dab$ is not
positive definite.
\end{proof}

\begin{restatable}[The symmetry survives any map through $BA$]{corollary}{corGaugeG}
\label{cor:gauge-G}
Let $\phi=(A,B,m)$, where $m$ collects any further trainable parameters (for
DoRA, the magnitude vector). If $T(\phi)=G(BA,m)$ for any differentiable $G$,
then $T$ is constant along the curve of Proposition~\ref{prop:gauge}(i) with $m$
held fixed, and $v=(-XA,BX,0)\in\ker J(\phi)$, and likewise along the pure-$A$
line at the origin. This covers LoRA, rsLoRA and PiSSA ($G(W,m)=\theta_0+sW$,
no $m$) and also DoRA, whose adapted weight $m(W_0+BA)/\lVert W_0+BA\rVert_c$
depends on $(A,B)$ only through $BA$ wherever the normalization is defined. Since
the $(A,B)$ block of $J$ is $D_1G\cdot J_{BA}$, where $J_{BA}$ is the Jacobian
of $(A,B)\mapsto BA$, $\ker J\supseteq\ker J_{BA}\times\{0\}$:
under the same rank assumption, part~(i), the lower bound $\dim\ker J\ge r^2$ of
part~(ii), and part~(iii) hold verbatim; the exact kernel formula becomes a lower
bound.
\end{restatable}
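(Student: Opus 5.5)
The plan is to reduce everything to Proposition~\ref{prop:gauge} by a chain-rule factorization of the Jacobian. Write $\phi=(A,B,m)$ and $T(\phi)=G(P(A,B),m)$ with $P(A,B)=BA$. The chain rule gives
\[
J(\phi)=\big[\,D_1G\cdot J_{BA}\;\big|\;D_2G\,\big],
\]
where $J_{BA}(\dot A,\dot B)=\dot BA+B\dot A$ is the Jacobian of $P$. The $(A,B)$ block therefore factors through $J_{BA}$. Any $(\dot A,\dot B)\in\ker J_{BA}$, extended by $\dot m=0$, is annihilated by $J$, which gives the inclusion $\ker J\supseteq\ker J_{BA}\times\{0\}$.

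With the inclusion in hand, the remaining parts follow in order.

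For part~(i), the curve $(e^{-tX}A,Be^{tX},m)$ keeps $BA$, and hence $T$, constant, so its tangent $(-XA,BX,0)$ lies in $\ker J$. The same holds for the line $(tU,0,m)$ at $A=B=0$.

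For the dimension bound, $\ker J_{BA}$ is exactly the kernel computed in Proposition~\ref{prop:gauge}(ii). Since $J_{BA}$ is $J$ for $s=1$, and the kernel formula does not depend on $s\neq0$, the proposition applies directly. So $\dim\ker J\ge\dim\ker J_{BA}\ge r^2$ under $r\le\min(d_{\mathrm{in}},d_{\mathrm{out}})$. The exact formula becomes only a lower bound, because $D_1G$ may itself have a kernel, or the $m$-columns may combine with the $(A,B)$-columns to enlarge $\ker J$.

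For part~(iii), I will repeat the argument of Proposition~\ref{prop:gauge}(iii) verbatim.
\begin{itemize}
\item Pick a nonzero $v$ tangent to a curve $\phi(t)$ along which $T$ is constant.
\item Then $\Fr_a v=J^\top F_a J v=0$, so $\lambda_{\min}(\Fr_a-\Fr_b)\le0$.
\item $L_a$ and $L_b$ are constant along $\phi(t)$, so $0=v^\top\Dab v+(g_a-g_b)^\top\phi''(0)$.
\item Under $g_a=g_b$, this forces $v^\top\Dab v=0$.
\end{itemize}
The only ingredient used is that $T$ is constant along a curve through $\phi$ with nonzero velocity. That property is inherited from $P$ because $T=G\circ(P,m)$.

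The main point needing care is the DoRA instance. There $G(W,m)=m\,(W_0+W)/\|W_0+W\|_c$ is differentiable only where no column norm vanishes. So the corollary holds on the open set where the normalization is defined, which is what the phrase "wherever the normalization is defined" in the statement indicates. I will note that this set is open, so the gauge curve stays inside it for small $t$ and the second-derivative argument is legitimate there. I will also check that DoRA's dependence on $(A,B)$ is only through $BA$: with the magnitude $m$ held fixed, the column norm is a function of $W_0+BA$ alone, so the claim holds.
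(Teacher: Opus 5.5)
Your proposal is correct and follows the paper's proof essentially step for step: the chain-rule factorization of the $(A,B)$ block as $D_1G\cdot J_{BA}$, the reduction of the dimension bound to Proposition~\ref{prop:gauge}(ii) in the case $s=1$, and the observation that part~(iii) only needs a nonzero velocity along a curve of constant $T$. Your DoRA caveat also matches the paper's; the one refinement is that $BA$ is exactly constant along the gauge curve, so the curve stays in the domain of the normalization for every $t$, not only small $t$, and openness of that domain is needed only for differentiability at $\phi$.
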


\begin{proof}
If $T(\phi)=G(BA,m)$ then, with $m$ fixed along the curve,
$T(\phi(t))=G(B(t)A(t),m)=G(BA,m)=T(\phi)$ for all $t$, so
$J(\phi)v=\frac{d}{dt}T(\phi(t))|_{t=0}=0$; the same holds along the pure-$A$
line $(tU,0)$ at the origin. By the chain rule the $(A,B)$ block of $J$ is
$D_1G(BA,m)\,J_{BA}$ with $J_{BA}$ the Jacobian of $(A,B)\mapsto BA$, so
$\ker J\supseteq\ker J_{BA}\times\{0\}$, and
$\dim\ker J_{BA}$ is given exactly by \eqref{eq:kerbound} (the case $G=$ identity,
$s=1$); hence $\dim\ker J\ge r^2$ under the rank assumption. Part~(iii) used only
the existence of a nonzero $v$ on a curve of constant $T$, which we have. For DoRA,
$W=m\,(W_0+BA)/\lVert W_0+BA\rVert_c$ is a function of $BA$ alone at fixed $m$
and $W_0$, and $m$ is not moved by the curve, so the hypothesis holds wherever
the column norm is nonzero.
\end{proof}

\begin{remark}[The horizontal space is first-order only]
\label{rem:not-a-quotient}
Take
$T(a_1,a_2,b)=b\,(a_1,a_2)$ and $\Delta(w_1,w_2)=\tfrac12w_1^2-w_2$, so
$\Delta_{ab}(a_1,a_2,b)=\tfrac12b^2a_1^2-ba_2$. At $(1,0,0)$ the Jacobian is
$J=\big(\begin{smallmatrix}0&0&1\\0&0&0\end{smallmatrix}\big)$, so
$\ker J=\operatorname{span}\{e_{a_1},e_{a_2}\}$ and
$\mathcal{H}_\phi=\operatorname{span}\{e_b\}$; the gradient vanishes and
$\Dab=\big(\begin{smallmatrix}0&0&0\\0&0&-1\\0&-1&1\end{smallmatrix}\big)$, so
$\lambda^{\mathcal{H}}_{\min}(\Dab)=e_b^\top\Dab e_b=1>0$ and
Theorem~\ref{thm:impossible} applies. Indeed
$\Delta_{ab}(\phi+(0,0,t))-\Delta_{ab}(\phi)=\tfrac12t^2>0$. However
$\Delta_{ab}(\phi+(0,t,t))-\Delta_{ab}(\phi)=\tfrac12t^2-t^2=-\tfrac12t^2<0$,
and $(0,t,t)-(0,0,t)=(0,t,0)\in\ker J$. Two updates agreeing to first order in
predictor space therefore move the gap in opposite directions at second order,
so $\mathcal{H}_\phi$ cannot be read as a second-order quotient and no
conclusion transfers from a horizontal representative to an arbitrary one. The
point is the standard $B=0$ initialization, not an exotic configuration. Here $\dim\ker J=2$ while the gauge tangent is one-dimensional ($r=1$,
$r_A=1$, $r_B=0$, $d_{\mathrm{in}}=2$, $d_{\mathrm{out}}=1$, so the excess of
Proposition~\ref{prop:gauge}(ii) is $0+1=1$): the kernel is strictly larger than
the gauge tangent at this point.
\end{remark}

\begin{remark}[Why the restriction is not optional]
\label{rem:gauge-necessary}
Proposition~\ref{prop:gauge} shows the unrestricted hypotheses
``$\Fr_a-\Fr_b\succ0$'' and ``$\Dab\succ0$ with $g_a=g_b$'' are \emph{never}
satisfied for LoRA, rsLoRA or PiSSA at any $\phi$ with $A$ or $B$ of full rank.
Restricting to $\mathcal{H}_\phi=(\ker J)^\perp$ excises exactly the directions
responsible and leaves the conditions satisfiable: in the supplement we sample
random $(F_a,F_b)$ at a random $(A,B)$ and find
$\lambda_{\min}(\Fr_a-\Fr_b)\le0$ in every draw, while
$\lambda^{\mathcal{H}}_{\min}(\Fr_a-\Fr_b)>0$ occurs in roughly half of them.
Since $\ker J$ is precisely the set of directions that leave the predictor
unchanged to first order, $\mathcal{H}_\phi$ is a Euclidean choice of
representative for that first-order quotient, and quantifying over
$\delta\in\mathcal{H}_\phi$ is the weakest restriction that makes the hypotheses
attainable. It is not more than that: Remark~\ref{rem:not-a-quotient} shows the
identification fails at second order.
\end{remark}

\subsection{Necessity and sufficiency}

\thmImpossible*

\begin{proof}
By Proposition~\ref{prop:expansion} with $g_a=g_b$ the slope term vanishes, so
for $\delta\in\mathcal{H}_\phi$,
\[
\Delta_{ab}(\phi+\delta)-\Delta_{ab}(\phi)
=\tfrac12\delta^\top\Dab\delta+R_{ab}(\delta)
\;\ge\;\tfrac12\rho_{\min}\|\delta\|^2-\tfrac{M}{3}\|\delta\|^3 ,
\]
using $\delta^\top\Dab\delta\ge\lambda^{\mathcal{H}}_{\min}(\Dab)\|\delta\|^2$
for $\delta\in\mathcal{H}_\phi$ and
$|R_{ab}|\le\frac{M}{3}\|\delta\|^3$. Since
$\|\delta\|\le\varepsilon\le3\rho_{\min}/4M$,
\[
\tfrac{M}{3}\|\delta\|^3\le\tfrac{M}{3}\varepsilon\|\delta\|^2
\le\tfrac{M}{3}\cdot\tfrac{3\rho_{\min}}{4M}\|\delta\|^2
=\tfrac{\rho_{\min}}{4}\|\delta\|^2 ,
\]
so the increment is at least
$(\tfrac12-\tfrac14)\rho_{\min}\|\delta\|^2=\tfrac14\rho_{\min}\|\delta\|^2>0$
for $\delta\ne0$. (When $M=0$ the remainder vanishes identically, the constraint
$\|\delta\|\le3\rho_{\min}/4M$ is read as vacuous, and the increment is at least
$\tfrac12\rho_{\min}\|\delta\|^2$.) This is an increase of the \emph{signed} gap
$\Delta_{ab}=L_a-L_b$. If $\Delta_{ab}(\phi)\ge0$ then
$|\Delta_{ab}(\phi+\delta)|=\Delta_{ab}(\phi+\delta)>\Delta_{ab}(\phi)=|\Delta_{ab}(\phi)|$,
so the absolute gap widens; without that orientation an increase in
$\Delta_{ab}$ may reduce $|\Delta_{ab}|$.
\end{proof}

\begin{remark}[Both hypotheses are necessary]
\label{rem:thm1-hyps}
\emph{Slope matching.} Let the exact gap along a ray be
$\Delta_{ab}(\phi+t u)=t+\tfrac12t^2$ for a unit $u$, so $g_a-g_b$ has component
$1$ along $u$, $\Dab=1\succ0$ and $M=0$. Taking $t=-0.1$ gives
$\Delta_{ab}(\phi+tu)-\Delta_{ab}(\phi)=-0.095<0$: a nonzero update that
strictly \emph{decreases} the gap. Hence the conclusion, and with it
Theorem~\ref{thm:feasibility}(ii), fails without $g_a=g_b$.
\emph{Orientation.} If $\Delta_{ab}(\phi)=-1$ and the signed gap increases to
$-0.9$, the absolute gap has narrowed. ``Widening'' therefore requires
$\Delta_{ab}(\phi)\ge0$, which is a labelling convention on the pair and not an
additional assumption on the model.
\end{remark}

\corWeyl*

\begin{proof}
By Proposition~\ref{prop:defect}, $\Dab=(\Fr_a-\Fr_b)+(\Xi_a-\Xi_b)$ with both
terms symmetric. Compressing to $\mathcal{H}_\phi$ and applying Weyl's
inequality to the restrictions,
\[
\lambda^{\mathcal{H}}_{\min}(\Dab)
\;\ge\;\lambda^{\mathcal{H}}_{\min}(\Fr_a-\Fr_b)+\lambda^{\mathcal{H}}_{\min}(\Xi_a-\Xi_b)
\;\ge\;\lambda^{\mathcal{H}}_{\min}(\Fr_a-\Fr_b)-\eta^{\mathcal{H}}_{ab},
\]
positive under the stated hypothesis. Theorem~\ref{thm:impossible} additionally
requires $g_a=g_b$, which is why it is stated as a separate condition here. That
the restriction to $\mathcal{H}_\phi$ cannot be removed is
Proposition~\ref{prop:gauge}.
\end{proof}

\begin{remark}[$\Fr_a-\Fr_b$ is not the only lever]
\label{rem:not-only-lever}
Since $\Dab=(\Fr_a-\Fr_b)+(\Xi_a-\Xi_b)$, a procedure that altered the defect
would alter $\Dab$ without touching the reachable Fishers, and
Proposition~\ref{prop:defect} explicitly retains that term. We therefore do not
claim $\|\Fr_a-\Fr_b\|$ is the only available intervention, only that it is the
one our estimator acts on and the one measurable at foundation-model scale.
\end{remark}

\thmWorstcase*

\begin{proof}
Let $\Pi$ be the orthogonal projector onto $\mathcal{H}_\phi$ and
$S=\Pi\Dab\Pi$ the compression, which is symmetric. Since
$\delta\mapsto\delta^\top\Dab\delta$ is homogeneous of degree two and agrees with
$\delta^\top S\delta$ on $\mathcal{H}_\phi$, its maximum over
$\{\delta\in\mathcal{H}_\phi:\|\delta\|\le\varepsilon\}$ is
$\lambda_{\max}(S|_{\mathcal{H}})\varepsilon^2$, attained at $\varepsilon$ times a
corresponding eigenvector. Applying this to $\pm\Dab$ and using
$\|\Dab\|^{\mathcal{H}}_2
=\max\{\lambda_{\max}(S|_{\mathcal{H}}),-\lambda_{\min}(S|_{\mathcal{H}})\}$ gives
$\mathcal{Q}_{ab}(\varepsilon)=\frac12\|\Dab\|^{\mathcal{H}}_2\varepsilon^2$ with
attainment.
For $\mathcal{G}_{ab}$, apply the triangle inequality to
\eqref{eq:gap-expansion}: $|(g_a-g_b)^\top\delta|\le\|g_a-g_b\|\varepsilon$ by
Cauchy--Schwarz, the quadratic term is at most
$\frac12\|\Dab\|^{\mathcal{H}}_2\varepsilon^2$ on $\mathcal{H}_\phi$, and
$|R_{ab}|\le\frac{M}{3}\varepsilon^3$. When $g_a=g_b$ the slope term is absent
and the same two-sided argument gives
$|\mathcal{G}_{ab}(\varepsilon)-\frac12\|\Dab\|^{\mathcal{H}}_2\varepsilon^2|
\le\frac{M}{3}\varepsilon^3$.
\end{proof}

\begin{restatable}[Fisher matching controls the second-order channel up to the defect]{theorem}{thmSufficiency}
\label{thm:sufficiency}
If $\Fr_a=\Fr_b$ then for all $\|\delta\|\le\varepsilon\le\varepsilon_0$,
\[
\big|\Delta_{ab}(\phi+\delta)-\Delta_{ab}(\phi)-(g_a-g_b)^\top\delta\big|
\;\le\;\tfrac12\eta_{ab}\varepsilon^2+\tfrac{M}{3}\varepsilon^3 .
\]
In particular any $\delta\perp(g_a-g_b)$ holds the gap fixed to
$O(\eta_{ab}\varepsilon^2+\varepsilon^3)$.
\end{restatable}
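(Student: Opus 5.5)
The plan is to read the claim off the exact gap expansion of Proposition~\ref{prop:expansion} together with the defect decomposition of Proposition~\ref{prop:defect}. For $\|\delta\|\le\varepsilon\le\varepsilon_0$, \eqref{eq:gap-expansion} gives
\[
\Delta_{ab}(\phi+\delta)-\Delta_{ab}(\phi)-(g_a-g_b)^\top\delta=\tfrac12\delta^\top\Dab\delta+R_{ab}(\delta),
\qquad |R_{ab}(\delta)|\le\tfrac{M}{3}\|\delta\|^3 .
\]
Only \textbf{A1} is needed for this step. Under \textbf{A2}, \eqref{eq:master} writes $\Dab=(\Fr_a-\Fr_b)+(\Xi_a-\Xi_b)$. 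The hypothesis $\Fr_a=\Fr_b$ kills the first summand, so $\Dab=\Xi_a-\Xi_b$ exactly.

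Next I bound the quadratic term by the operator norm. Since $\Xi_a-\Xi_b$ is symmetric, the Rayleigh-quotient bound gives $|\delta^\top(\Xi_a-\Xi_b)\delta|\le\|\Xi_a-\Xi_b\|_2\|\delta\|^2=\eta_{ab}\|\delta\|^2\le\eta_{ab}\varepsilon^2$. Combining this with $|R_{ab}|\le\frac M3\varepsilon^3$ through the triangle inequality yields the displayed inequality. I use the unrestricted $\eta_{ab}$ rather than $\eta^{\mathcal H}_{ab}$ because $\delta$ ranges over all of $\R^p$, not just $\mathcal H_\phi$. The same argument restricted to $\delta\in\mathcal H_\phi$ would give the sharper constant $\eta^{\mathcal H}_{ab}$.

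For the ``in particular'' clause, take $\delta\perp(g_a-g_b)$. Then $(g_a-g_b)^\top\delta=0$, so the left side becomes $|\Delta_{ab}(\phi+\delta)-\Delta_{ab}(\phi)|$, which is at most $\frac12\eta_{ab}\varepsilon^2+\frac M3\varepsilon^3=O(\eta_{ab}\varepsilon^2+\varepsilon^3)$.

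There is no real obstacle. The only point needing care is that \textbf{A2} is used solely to justify the split $H_a=\Fr_a+\Xi_a$, and that split holds by the definition of $\Xi_a$ in Proposition~\ref{prop:defect}. I should also check that the hypothesis $\Fr_a=\Fr_b$ is taken at the base point $\phi$ only. This suffices, because the expansion uses the Hessians only at $\phi$, and all variation of $\Dab$ along the segment from $\phi$ to $\phi+\delta$ is already absorbed into the remainder $R_{ab}$.
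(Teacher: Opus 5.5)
Your proposal is correct and follows the paper's proof essentially line for line. Under $\Fr_a=\Fr_b$, Proposition~\ref{prop:defect} reduces $\Dab$ to $\Xi_a-\Xi_b$; you then bound $|\tfrac12\delta^\top\Dab\delta|$ by $\tfrac12\eta_{ab}\|\delta\|^2$ and combine this with the remainder bound in \eqref{eq:gap-expansion}. The paper leaves two points implicit that you state correctly: the unrestricted $\eta_{ab}$ is needed because $\delta$ is not confined to $\mathcal{H}_\phi$, and the hypothesis need only hold at the base point.
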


\begin{proof}
By Proposition~\ref{prop:defect}, $\Fr_a=\Fr_b$ gives $\Dab=\Xi_a-\Xi_b$, so
$|\frac12\delta^\top\Dab\delta|\le\frac12\eta_{ab}\|\delta\|^2$. Substituting
into \eqref{eq:gap-expansion} and bounding $|R_{ab}|\le\frac{M}{3}\varepsilon^3$
gives the display; taking $\delta\perp(g_a-g_b)$ removes the slope term.
\end{proof}

\begin{restatable}[Two feasibility results]{theorem}{thmFeasibility}
\label{thm:feasibility}
Define the second-order model
$\widehat\Delta_{ab}(\delta)=(g_a-g_b)^\top\delta+\tfrac12\delta^\top\Dab\delta$.
Since $\widehat\Delta_{ba}=-\widehat\Delta_{ab}$, fix an \emph{orientation}: let
$\mathcal{O}$ contain, for each unordered pair $\{a,b\}$, exactly one ordered pair
$(a,b)$ with $\Delta_{ab}(\phi)\ge0$ (ties broken arbitrarily), so that $a$ is
the currently worse-off group. Let
\[
\mathcal{S}_\varepsilon:=\big\{\delta\in\mathcal{H}_\phi:\ \|\delta\|\le\varepsilon,\
\widehat\Delta_{ab}(\delta)\le0\ \ \forall(a,b)\in\mathcal{O}\big\},
\]
the horizontal updates under which no oriented gap increases \emph{in the truncated quadratic model}. Fix pooling weights $\pi_a\ge0$ with $\sum_a\pi_a=1$, let $\bar g=\sum_a\pi_ag_a$ be the gradient of the pooled loss $\sum_a\pi_aL_a$, and call $\delta\in\mathcal{H}_\phi$ a \emph{pooled descent direction} if $\bar g^\top\delta<0$. (Quantifying over both
orders of every pair would instead force $\widehat\Delta_{ab}(\delta)=0$.)
\emph{(i)} If $\Dab=0$ for all pairs, $\mathcal{S}_\varepsilon$ is the
intersection of a ball with a closed convex polyhedral cone in
$\mathcal{H}_\phi$, and it contains a pooled descent direction iff
$-P_{\mathcal{H}}\bar g\notin\operatorname{cone}\{P_{\mathcal{H}}(g_a-g_b):(a,b)\in\mathcal{O}\}$,
which is a single LP.
\emph{(ii)} If $g_a=g_b$ for all pairs and
$\lambda^{\mathcal{H}}_{\min}(\Dab)>0$ for some $(a,b)\in\mathcal{O}$, then
$\mathcal{S}_\varepsilon=\{0\}$ for every $\varepsilon>0$; the same holds for the
exact gap for $\varepsilon$ as in Theorem~\ref{thm:impossible}.
\end{restatable}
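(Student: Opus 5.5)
Both parts reduce to elementary convex and spectral facts once the truncated quadratic model $\widehat\Delta_{ab}$ is specialized. I would treat them separately, since (i) removes the curvature term and (ii) removes the slope term.

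\emph{Part (i).} With $\Dab=0$ for every pair, $\widehat\Delta_{ab}(\delta)=(g_a-g_b)^\top\delta$. For $\delta\in\mathcal{H}_\phi$ this equals $(P_{\mathcal{H}}(g_a-g_b))^\top\delta$, because $P_{\mathcal{H}}$ is symmetric and fixes $\delta$.
\begin{itemize}
\item $\mathcal{S}_\varepsilon$ is the ball intersected with $K=\{\delta\in\mathcal{H}_\phi:\ w_{ab}^\top\delta\le0\ \forall(a,b)\in\mathcal{O}\}$, where $w_{ab}=P_{\mathcal{H}}(g_a-g_b)$. $K$ is a finite intersection of closed half-spaces through the origin, so it is a closed convex polyhedral cone.
\item Since $K$ is a cone, $\mathcal{S}_\varepsilon$ contains a pooled descent direction iff $K$ does, by rescaling into the ball. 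Also $\bar g^\top\delta=(P_{\mathcal{H}}\bar g)^\top\delta$ on $\mathcal{H}_\phi$.
\item The question is therefore whether some $\delta\in\mathcal{H}_\phi$ satisfies $w_{ab}^\top\delta\le0$ for all $(a,b)\in\mathcal{O}$ together with $(P_{\mathcal{H}}\bar g)^\top\delta<0$.
\item By Farkas' lemma applied within the Euclidean space $\mathcal{H}_\phi$, no such $\delta$ exists iff $-P_{\mathcal{H}}\bar g\in\operatorname{cone}\{w_{ab}\}$. Equivalently, no such $\delta$ exists iff $P_{\mathcal{H}}\bar g$ lies in the dual cone of $K$ under the sign convention.
\item I would verify the direction carefully. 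Write $c=P_{\mathcal{H}}\bar g$. If $-c=\sum\lambda_{ab}w_{ab}$ with $\lambda\ge0$, then $c^\top\delta=-\sum\lambda_{ab}w_{ab}^\top\delta\ge0$ on $K$. Conversely, if $-c\notin\operatorname{cone}\{w_{ab}\}$, a separating hyperplane in $\mathcal{H}_\phi$ produces the required $\delta$.
\item Membership in a finitely generated cone is a feasibility LP in the multipliers $\lambda\ge0$, which is the ``single LP'' claim.
\end{itemize}
The main obstacle is keeping all vectors inside $\mathcal{H}_\phi$ so that Farkas' lemma applies there rather than in $\R^p$; projecting every gradient first handles this.

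\emph{Part (ii).} With $g_a=g_b$ for all pairs, $\widehat\Delta_{ab}(\delta)=\tfrac12\delta^\top\Dab\delta$. Take the pair $(a,b)\in\mathcal{O}$ with $\lambda^{\mathcal{H}}_{\min}(\Dab)>0$. Any nonzero $\delta\in\mathcal{H}_\phi$ then gives $\widehat\Delta_{ab}(\delta)\ge\tfrac12\lambda^{\mathcal{H}}_{\min}(\Dab)\|\delta\|^2>0$, which violates the constraint for this pair. Hence $\mathcal{S}_\varepsilon\subseteq\{0\}$, and $0$ trivially belongs to it, so $\mathcal{S}_\varepsilon=\{0\}$ for every $\varepsilon>0$.

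For the exact gap, define $\mathcal{S}_\varepsilon$ with $\Delta_{ab}(\phi+\delta)-\Delta_{ab}(\phi)$ in place of $\widehat\Delta_{ab}$. Theorem~\ref{thm:impossible} gives an increment of at least $\tfrac14\rho_{\min}\|\delta\|^2>0$ whenever $0<\|\delta\|\le\min(\varepsilon_0,3\rho_{\min}/4M)$. The same exclusion then applies, and the exact-gap version of $\mathcal{S}_\varepsilon$ is also $\{0\}$.
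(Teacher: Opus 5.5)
Your proposal is correct and follows the paper's proof essentially step for step. In (i) you reduce the constraints to homogeneous half-spaces in $\mathcal{H}_\phi$, apply Farkas' lemma to the projected slope differences and $-P_{\mathcal{H}}\bar g$, and use rescaling to dispose of the ball; in (ii) the restricted minimum eigenvalue makes $\widehat\Delta_{ab}$ strictly positive on nonzero horizontal $\delta$, and Theorem~\ref{thm:impossible} supplies the exact-gap version. Your explicit check of both directions of the Farkas alternative is a small addition that the paper leaves implicit.
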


Stating $\mathcal{S}_\varepsilon$ through the truncated model is what makes (i)
exact; phrased on the true loss it would inherit the cubic remainder. The slope
hypothesis in (ii) cannot be dropped: the example after
Theorem~\ref{thm:impossible} has a definite $\Dab$ and a nonzero gap-decreasing
update. And by Proposition~\ref{prop:gauge}, ``$\mathcal{S}_\varepsilon=\{0\}$''
must be read on $\mathcal{H}_\phi$: raw parameters can always move along a gauge
orbit without changing the predictor or any gap. What \emph{exact Hessian} matching $\Dab=0$ buys is
therefore conditional but real: it converts the truncated quadratic
feasibility question into a linear one. Reachable-Fisher matching
$\Fr_a=\Fr_b$ does not by itself do this: it leaves $\Dab=\Xi_a-\Xi_b$, so the
quadratic term is bounded by $\tfrac12\eta_{ab}\varepsilon^2$
(Theorem~\ref{thm:sufficiency}) but not removed, and the exact constraint is
not convexified. Our method matches empirical reachable Fishers, not Hessians.

\begin{proof}
(i) With $\Dab=0$ for all pairs the quadratic term of the model
$\widehat\Delta_{ab}$ vanishes identically (no remainder is involved, since
$\mathcal{S}_\varepsilon$ is defined through the truncated model), and each
constraint reduces to $(g_a-g_b)^\top\delta\le0$, $(a,b)\in\mathcal{O}$, for $\delta\in\mathcal{H}_\phi$;
a finite intersection of homogeneous halfspaces with a subspace is a closed
convex polyhedral cone. Let $A$ have rows $(P_{\mathcal{H}}(g_a-g_b))^\top$, $(a,b)\in\mathcal{O}$, and
$c=-P_{\mathcal{H}}\bar g$. By Farkas' lemma exactly one
holds: there is $\delta$ with $A\delta\le0$ and $c^\top\delta>0$; or there is
$y\ge0$ with $A^\top y=c$, i.e.\ $c\in\operatorname{cone}\{P_{\mathcal{H}}(g_a-g_b):(a,b)\in\mathcal{O}\}$. Because
the constraint set is a cone, feasibility of the strict inequality is unaffected
by the norm bound after rescaling.

(ii) With $g_a=g_b$ the model reduces to
$\widehat\Delta_{ab}(\delta)=\frac12\delta^\top\Dab\delta
\ge\frac12\lambda^{\mathcal{H}}_{\min}(\Dab)\|\delta\|^2>0$ for every nonzero
$\delta\in\mathcal{H}_\phi$, so no such $\delta$ satisfies the constraint and
$\mathcal{S}_\varepsilon=\{0\}$ for every $\varepsilon>0$; no smallness condition
is needed because the model carries no remainder. For the exact gap,
Theorem~\ref{thm:impossible} gives the same conclusion on the stated ball. The
slope hypothesis is necessary: Remark~\ref{rem:thm1-hyps} exhibits a definite
$\Dab$ together with a nonzero gap-decreasing update. The restriction to
$\mathcal{H}_\phi$ is likewise necessary by Proposition~\ref{prop:gauge}, and
$\mathcal{S}_\varepsilon=\{0\}$ must be read as ``no nonzero \emph{horizontal
additive} update'': a step along a gauge orbit changes neither predictor nor
gap, and by Remark~\ref{rem:not-a-quotient} a non-horizontal representative of
the same first-order class may behave differently at second order.
\end{proof}

\subsection{Reachable versus full-model geometry}

Throughout, $\Rch_\phi:=\operatorname{range}J(\phi)\subseteq\R^P$ is the
reachable subspace and $P_\phi$ the orthogonal projector onto it.

\begin{restatable}[Projection and invariance]{lemma}{lemProjection}
\label{lem:projection}
$\Fr_a-\Fr_b=J^\top P_\phi(F_a-F_b)P_\phi J$: the subgroup geometry enters only
through the compression of $F_a-F_b$ onto $\Rch_\phi$. Under a
reparameterization $\phi=S(\psi)$ with invertible Jacobian $K$,
$\Fr_a\mapsto K^\top\Fr_aK$; hence $\tr\Fr_a$ and $\lambda_{\max}(\Fr_a)$ are
not invariant, the across-group coefficient of variation $\cv\{\tr\Fr_a\}_a$ is
invariant under isotropic $K=cI$ but \emph{not} under general $K$, and the
inertia of $\Fr_a-\Fr_b$ (its numbers of positive, negative and zero
eigenvalues) is invariant, as are the generalized eigenvalues of
$(\Fr_a,\Fr_b)$ whenever that pencil is regular (i.e.\
$\det(\Fr_a-\lambda\Fr_b)\not\equiv0$); by Proposition~\ref{prop:gauge} the
pencil is singular on $\ker J$, so regularity should be read on
$\mathcal{H}_\phi$.
\end{restatable}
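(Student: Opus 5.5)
The plan is to verify each claim of Lemma~\ref{lem:projection} directly from the definition $\Fr_a=J^\top F_aJ$ in \eqref{eq:reachable-fisher}, together with the chain rule for a reparameterization.

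\textbf{Projection identity.} Since $P_\phi$ is the orthogonal projector onto $\Rch_\phi=\operatorname{range}J$, we have $P_\phi J=J$, and by symmetry of $P_\phi$ also $J^\top P_\phi=J^\top$. Hence
\[
J^\top P_\phi(F_a-F_b)P_\phi J=J^\top(F_a-F_b)J=\Fr_a-\Fr_b,
\]
using linearity of the pullback.

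\textbf{Transformation law.} For $\phi=S(\psi)$ the composite map $T\circ S$ has Jacobian $JK$ by the chain rule. The full-model Fisher $F_a(T(S(\psi)))$ is evaluated at the same $\theta$, so the reachable Fisher in $\psi$-coordinates is $(JK)^\top F_a(JK)=K^\top\Fr_aK$.

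\textbf{Non-invariance claims.} Counterexamples suffice here.
\begin{itemize}
\item Take $K=cI$. Then $\tr\Fr_a\mapsto c^2\tr\Fr_a$ and $\lambda_{\max}\mapsto c^2\lambda_{\max}$, so neither is invariant when $c\ne\pm1$ and $\Fr_a\neq0$.
\item Under this isotropic $K$, every group trace is multiplied by the same factor $c^2$. The mean and standard deviation of $\{\tr\Fr_a\}_a$ therefore both scale by $c^2$, and the CV is unchanged.
\item For general $K$, reuse the example already given in the paper: $\Fr_a=\diag(1,0)$, $\Fr_b=\diag(0,1)$, $K=\diag(1,2)$. The traces go from $(1,1)$ to $(1,4)$, so the CV moves from $0$ to $0.6$ (population standard deviation $1.5$ over mean $2.5$).
\end{itemize}

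\textbf{Invariance claims.} Here $K^\top(\Fr_a-\Fr_b)K$ is a congruence of the symmetric matrix $\Fr_a-\Fr_b$ by the invertible $K$, so Sylvester's law of inertia gives invariance of the inertia.

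For the pencil, note that
\[
\det\!\big(K^\top(\Fr_a-\lambda\Fr_b)K\big)=\det(K)^2\det(\Fr_a-\lambda\Fr_b).
\]
Since $\det K\neq0$, the two characteristic polynomials share their roots. Regularity is also preserved: one is identically zero iff the other is. So the generalized eigenvalues coincide whenever the pencil is regular.

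\textbf{The singularity caveat.} By Proposition~\ref{prop:gauge}(iii) there is a nonzero $v\in\ker J$ with $\Fr_av=\Fr_bv=0$. Then $(\Fr_a-\lambda\Fr_b)v=0$ for every $\lambda$, so the determinant vanishes identically and the pencil is singular on the full space.

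This is why regularity should be read on $\mathcal{H}_\phi$. There one compresses both matrices by an orthonormal basis of $\mathcal{H}_\phi$. The reparameterization acts on this compressed pencil by a congruence as well, provided $K$ maps kernels to kernels, which it does because $\ker(JK)=K^{-1}\ker J$. Making this last step precise is the only delicate point. I would handle it by working with the quotient $\R^p/\ker J$, on which $K$ induces an invertible map and the pulled-back forms are nondegenerate, so that the Euclidean choice of complement does not matter.
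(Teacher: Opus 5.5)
Your proposal is correct and follows the paper's proof step for step: $P_\phi J=J$ for the projection identity, the chain rule $JK$ for the transformation law, the $K=cI$ and $\diag(1,0),\diag(0,1),K=\diag(1,2)$ examples, Sylvester's law for the inertia, and the gauge kernel of Proposition~\ref{prop:gauge} for the singularity of the full pencil. The differences are minor. Your identity $\det\big(K^\top(\Fr_a-\lambda\Fr_b)K\big)=\det(K)^2\det(\Fr_a-\lambda\Fr_b)$ is as short as the paper's eigenvector map $w=K^{-1}v$, and it also shows directly that regularity is preserved. Your quotient argument for the compressed pencil on $\mathcal{H}_\phi$ goes beyond the paper, which only says the statement should be read there; the argument is sound because $K$ induces an isomorphism $\R^p/\ker(JK)\to\R^p/\ker J$ under which the forms pull back. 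Drop your claim that the induced forms are nondegenerate, though: it fails whenever $F_a$ has a null direction inside $\Rch_\phi$, and the argument does not need it.
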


\begin{proof}
Every column of $J$ lies in $\Rch_\phi$, hence $P_\phi J=J$ and
$J^\top(F_a-F_b)J=(P_\phi J)^\top(F_a-F_b)(P_\phi J)
=J^\top P_\phi(F_a-F_b)P_\phi J$.

For the reparameterization claims, the chain rule applied to
\eqref{eq:reachable-fisher} gives $F^{\mathrm{R},(\psi)}_a=K^\top F^{\mathrm{R},(\phi)}_aK$.
Taking $K=cI$ scales every $\tr\Fr_a$ by $c^2$ without changing the predictor,
so the trace and $\lambda_{\max}$ are not invariant; note that this particular
$K$ leaves the across-group coefficient of variation
$\cv\{\tr\Fr_a\}_a$ unchanged, since it rescales all group traces by the same
factor. A non-isotropic $K$ is required to move the CV, and suffices: with
$p=2$, $\Fr_a=\diag(1,0)$ and $\Fr_b=\diag(0,1)$ the traces are $(1,1)$ and
$\cv=0$, whereas under $K=\diag(1,2)$ they become $(1,4)$ with $\cv=0.6$. Hence
the CV is invariant under isotropic rescaling but not under general
reparameterization.

The difference transforms as
$F^{\mathrm{R},(\psi)}_a-F^{\mathrm{R},(\psi)}_b=K^\top(F^{\mathrm{R},(\phi)}_a-F^{\mathrm{R},(\phi)}_b)K$, a
congruence by an invertible matrix, so its inertia is preserved by Sylvester's
law of inertia. Finally, if $F^{\mathrm{R},(\phi)}_av=\lambda F^{\mathrm{R},(\phi)}_bv$ then
$w=K^{-1}v$ satisfies $F^{\mathrm{R},(\psi)}_aw=\lambda F^{\mathrm{R},(\psi)}_bw$, so the
generalized eigenvalues of the pencil are invariant \emph{provided the pencil is
regular}. By Proposition~\ref{prop:gauge} both $\Fr_a$ and $\Fr_b$ annihilate
$\ker J$, so $\det(\Fr_a-\lambda\Fr_b)\equiv0$ on $\R^p$ and the pencil is
singular there; the statement should be read for the compressions to
$\mathcal{H}_\phi$, where regularity is generic.
\end{proof}

\thmOnlyreachable*

\begin{proof}
(i) Take $P=2$, $p=1$, $J=e_1$ and $F_a=\diag(1,c)$, $F_b=\diag(1,0)$. Both are
positive semidefinite, $\|F_a-F_b\|_2=\|\diag(0,c)\|_2=c$, and
$\Fr_a=e_1^\top F_ae_1=1=e_1^\top F_be_1=\Fr_b$; the disparity lies entirely in
$\Rch^\perp$, consistent with Lemma~\ref{lem:projection}. The consequence for
the gap is exactly Theorem~\ref{thm:sufficiency}, i.e.\ the second-order channel
is bounded by $\frac12\eta_{ab}\varepsilon^2$, not that the gap is fixed,
since the slope term is unconstrained by this construction.

(ii) Take $P=2$, $p=1$, $J=e_1$ and
\[
F_a=\diag(2,1),\qquad F_b=\diag(1,2)=QF_aQ^\top,\qquad
Q=\begin{pmatrix}0&1\\1&0\end{pmatrix}.
\]
$Q$ is orthogonal, so $F_a$ and $F_b$ are orthogonally similar and share the
spectrum $\{1,2\}$; every function of the eigenvalues alone therefore agrees.
Nevertheless $\Fr_a=2$, $\Fr_b=1$ and $\Fr_a-\Fr_b=1>0$. Here $p=1$ and $J$ is
injective, so $\ker J=\{0\}$, $\mathcal{H}_\phi=\R$, and no gauge restriction is
active; under the remaining hypotheses of Theorem~\ref{thm:impossible} (slope
matching and, via Corollary~\ref{cor:weyl}, $\eta^{\mathcal{H}}_{ab}<1$), the
forced-increase conclusion applies, while no summary depending only on the
individual spectra of $F_a$ and $F_b$ distinguishes the two groups (the joint
quantity $\|F_a-F_b\|_2=1$ does). For a factorized adapter the same
construction is read on $\mathcal{H}_\phi$. Padding both matrices with a common block embeds the construction in
any $P>p$.
\end{proof}

\subsection{Trace insufficiency and a matrix-discrepancy alternative}

\begin{restatable}[Trace decomposition]{lemma}{lemTraceDecomp}
\label{lem:trace-decomp}
With $\widehat F^{\mathrm{R},\mathrm{obs}}_a=\frac1{n_a}\sum_{i\in a}
\nabla_\phi\ell_i\nabla_\phi\ell_i^\top$, mean $\hat g_a$ and within-group
covariance $\widehat\Sigma_a$ (both with the $1/n_a$ convention),
$\tr\widehat F^{\mathrm{R},\mathrm{obs}}_a
=\frac1{n_a}\sum_{i\in a}\|\nabla_\phi\ell_i\|_2^2
=\|\hat g_a\|_2^2+\tr\widehat\Sigma_a$.
\end{restatable}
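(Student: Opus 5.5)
The plan is to use two elementary facts: the trace is linear, and the squared Euclidean norm of a vector equals the trace of its outer product. The proof then reduces to the classical bias--variance (parallel-axis) identity, applied entrywise to the per-example gradients $u_i=\nabla_\phi\ell_i$ of group $a$.

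\emph{First equality.} We use $\tr(uu^\top)=u^\top u=\|u\|_2^2$ for each $i$. Linearity of the trace over the finite average then gives
\[
\tr\widehat F^{\mathrm{R},\mathrm{obs}}_a=\frac1{n_a}\sum_{i\in a}\tr(u_iu_i^\top)=\frac1{n_a}\sum_{i\in a}\|u_i\|_2^2 .
\]

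\emph{Second equality.} We fix the conventions $\hat g_a=\frac1{n_a}\sum_{i\in a}u_i$ and $\widehat\Sigma_a=\frac1{n_a}\sum_{i\in a}(u_i-\hat g_a)(u_i-\hat g_a)^\top$, both with the $1/n_a$ normalization required by the statement. Expanding the outer product and using $\sum_{i\in a}(u_i-\hat g_a)=0$, the two cross terms $\frac1{n_a}\sum_i(u_i-\hat g_a)\hat g_a^\top$ and its transpose vanish. This leaves the matrix identity
\[
\widehat F^{\mathrm{R},\mathrm{obs}}_a=\hat g_a\hat g_a^\top+\widehat\Sigma_a .
\]
Taking traces gives $\|\hat g_a\|_2^2+\tr\widehat\Sigma_a$. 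The matrix identity is slightly stronger than the lemma needs, and it is worth recording because it is also what feeds the Gram computations of Proposition~\ref{prop:gram}.

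No step here is a genuine obstacle. The only point that needs care is the normalization. With the unbiased $1/(n_a-1)$ covariance, the identity picks up a factor $(n_a-1)/n_a$ on $\tr\widehat\Sigma_a$ and is no longer exact. This is exactly why the statement pins both estimators to the $1/n_a$ convention, and I will flag it explicitly in the proof.
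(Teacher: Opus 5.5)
Your proof is correct and is the paper's own argument: $\tr(u_iu_i^\top)=\|u_i\|_2^2$ plus linearity gives the first equality, and the parallel-axis identity $\widehat F^{\mathrm{R},\mathrm{obs}}_a=\hat g_a\hat g_a^\top+\widehat\Sigma_a$ (with the $1/n_a$ convention), followed by taking traces, gives the second. One small correction to an aside: Proposition~\ref{prop:gram} does not use this mean--covariance split, since it only expands $\|\widehat S_a-\widehat S_b\|_F^2$ by cyclicity of the trace, so drop that remark.
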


\begin{proof}
Write $v_i=\nabla_\phi\ell_i$. Since $\tr(v_iv_i^\top)=\|v_i\|_2^2$, linearity of
the trace gives the first equality. The parallel-axis identity
$\frac1{n_a}\sum_{i\in a}v_iv_i^\top=\hat g_a\hat g_a^\top+\widehat\Sigma_a$,
with $\widehat\Sigma_a$ using the $1/n_a$ convention, followed by taking traces
gives the second.
\end{proof}

\thmTrace*

\begin{proof}
(i) The implication is immediate, and the converse fails by the construction
in (ii), whose two matrices have equal traces but differ. Quantitatively,
$|\tr(\Fr_a-\Fr_b)|=|\langle I,\Fr_a-\Fr_b\rangle_F|
\le\|I\|_F\|\Fr_a-\Fr_b\|_F=\sqrt p\,\|\Fr_a-\Fr_b\|_F$ by Cauchy--Schwarz.

(ii) \emph{Upper bound.} Let $A,B\succeq0$ with $\tr A=\tr B=\tau$. Weyl's
inequality gives $\lambda_{\max}(A-B)\le\lambda_{\max}(A)-\lambda_{\min}(B)
\le\lambda_{\max}(A)\le\tr A=\tau$, using $\lambda_{\min}(B)\ge0$ and that the
eigenvalues of $A\succeq0$ are nonnegative so the largest is at most their sum.
Symmetrically $\lambda_{\max}(B-A)\le\tau$, and since $A-B$ is symmetric,
$\|A-B\|_2=\max\{\lambda_{\max}(A-B),\lambda_{\max}(B-A)\}\le\tau$.

\emph{Attainment.} For $\Fr_a=\diag(\tau,0,\dots,0)$ and
$\Fr_b=\diag(0,\tau,0,\dots,0)$ both traces equal $\tau$ and
$\Fr_a-\Fr_b=\diag(\tau,-\tau,0,\dots,0)$ has spectral norm $\tau$.

(iii) If $C:=\Fr_a-\Fr_b\succeq0$, its eigenvalues are nonnegative, so
$\|C\|_2=\lambda_{\max}(C)\le\sum_i\lambda_i(C)=\tr C$. If
$\tr\Fr_a=\tr\Fr_b$ as well, then $\tr C=0$ forces every eigenvalue to vanish,
so $C=0$.
\end{proof}

\corTraceExclude*

\begin{proof}
Write $C=\Fr_a-\Fr_b$. Since $\Fr_c v=J^\top F_cJv=0$ for
$v\in\ker J$, and $\Fr_c$ is symmetric, we have $\Fr_c=P_{\mathcal{H}}\Fr_cP_{\mathcal{H}}$
and $C$ vanishes on $\ker J$. If $\lambda^{\mathcal{H}}_{\min}(C)>0$, then $C\succeq0$ on
all of $\R^p$, and taking the trace in an orthonormal basis of
$\mathcal{H}_\phi$ extended by one of $\ker J$ gives
$\tr C=\sum_{i\le d_{\mathcal{H}}}u_i^\top Cu_i\ge d_{\mathcal{H}}\,
\lambda^{\mathcal{H}}_{\min}(C)$, i.e.\ $\lambda^{\mathcal{H}}_{\min}(C)\le\tr C/d_{\mathcal{H}}$
(this holds for any symmetric $C$ vanishing on $\ker J$, since the minimum of the
Rayleigh quotient on $\mathcal{H}_\phi$ is at most its average over an orthonormal
basis). Hence if $|\tr C|/d_{\mathcal{H}}\le\eta^{\mathcal{H}}_{ab}$ then
$\lambda^{\mathcal{H}}_{\min}(C)\le\eta^{\mathcal{H}}_{ab}$, violating the
hypothesis of Corollary~\ref{cor:weyl};
the same argument applies to $-C$ for the reverse orientation. Note that this
excludes only the \emph{sufficient} condition: $\Dab$ may still be positive
definite on $\mathcal{H}_\phi$ through the defect.
\end{proof}

\begin{remark}[Trace equality and the magnitude of the gap change]
\label{rem:trace-magnitude}
Write $C=\Fr_a-\Fr_b$. Theorem~\ref{thm:trace} concerns $C$, whereas Theorem~\ref{thm:worstcase} gives
$\mathcal{Q}_{ab}(\varepsilon)=\frac12\|\Dab\|^{\mathcal{H}}_2\varepsilon^2$,
\emph{not} a formula in $\|\Fr_a-\Fr_b\|_2$; the two differ by the defect. For
reachable Fishers the restricted and full spectral norms of $C$ coincide: since
$C=P_{\mathcal{H}}CP_{\mathcal{H}}$, for any unit $w$,
$|w^\top Cw|=|(P_{\mathcal{H}}w)^\top C(P_{\mathcal{H}}w)|\le
\|C\|^{\mathcal{H}}_2\|P_{\mathcal{H}}w\|^2\le\|C\|^{\mathcal{H}}_2$, so
$\tau_{\mathcal{H}}=\|C\|^{\mathcal{H}}_2=\|C\|_2$, which is at most the common
trace $\tau$ of Theorem~\ref{thm:trace}(ii). The horizontal restriction matters for $\Dab$ and the defect,
which do not vanish on $\ker J$, not for the reachable Fishers themselves. Since
$\big|\,\|\Dab\|^{\mathcal{H}}_2-\tau_{\mathcal{H}}\,\big|\le\eta^{\mathcal{H}}_{ab}$,
under slope matching Theorem~\ref{thm:worstcase} gives
$|\mathcal{G}_{ab}(\varepsilon)-\frac12\tau_{\mathcal{H}}\varepsilon^2|
\le\frac12\eta^{\mathcal{H}}_{ab}\varepsilon^2+\frac{M}{3}\varepsilon^3$: the
\emph{center} of this bound is $\frac12\tau_{\mathcal{H}}\varepsilon^2$, and it
is an exact value only if in addition $\eta^{\mathcal{H}}_{ab}=0$ and $M=0$
(an exactly quadratic gap). Equating the two is invalid: with
$\Fr_a-\Fr_b=\diag(\tau,-\tau)$ and $\Xi_a-\Xi_b=\diag(0,3\tau)$ one gets
$\Dab=\diag(\tau,2\tau)\succ0$, so the definiteness premise holds, yet
$\mathcal{Q}_{ab}(\varepsilon)=\tau\varepsilon^2\ne\frac12\tau\varepsilon^2$.
Note also that the extremal difference $\diag(\tau,-\tau,0,\dots)$ is itself
indefinite, so Theorem~\ref{thm:impossible} does not apply to it and no
forced-increase reading attaches to the extremal construction.
\end{remark}

\propGram*

\begin{proof}
Expand $\|\widehat S_a-\widehat S_b\|_F^2=\tr(\widehat S_a^2)-2\tr(\widehat S_a\widehat S_b)+\tr(\widehat S_b^2)$.
By cyclicity of the trace,
\[
\tr(\widehat S_a\widehat S_b)=\frac{1}{n_an_b}\tr\!\big(G_a^\top G_aG_b^\top G_b\big)
=\frac{1}{n_an_b}\tr\!\big(G_aG_b^\top(G_aG_b^\top)^\top\big)
=\frac{\|G_aG_b^\top\|_F^2}{n_an_b},
\]
and the same computation with $b=a$ gives
$\tr(\widehat S_a^2)=\|G_aG_a^\top\|_F^2/n_a^2$. Nothing uses $m_a=n_a$: the
normalization $1/n_a$ and the row count $m_a$ enter separately.

Two factors are used in this paper. \emph{Observed}: rows $\nabla_\phi\ell_i$ at
the observed labels ($m_a=n_a$) give the observed-label empirical reachable Fisher
$\widehat F^{\mathrm{R},\mathrm{obs}}_a$. \emph{Expected}: with $q_\theta(c\mid x)$
the predicted probability of class $c$ among $C$ classes, rows
$\sqrt{q_\theta(c\mid x_i)}\,\nabla_\phi\log q_\theta(c\mid x_i)$ over all
classes ($m_a=n_aC$), or $\sigma^{-1}\nabla_\phi\mu_\theta(x_i)$ for Gaussian
regression with predicted mean $\mu_\theta$ and variance $\sigma^2$ ($m_a=n_a$),
give the sample estimate $\widehat F^{\mathrm{R},\mathrm{exp}}_a$ of the
model-expected reachable Fisher~\eqref{eq:reachable-fisher}. For the expected
factor, $\frac1{n_a}G_a^\top G_a=\frac1{n_a}\sum_i\sum_cq_\theta(c\mid x_i)
\nabla_\phi\log q_\theta(c\mid x_i)\nabla_\phi\log q_\theta(c\mid x_i)^\top$, which
is the sample average over inputs of the per-input expected score outer product,
i.e.\ $\widehat F^{\mathrm{R},\mathrm{exp}}_a$; for Gaussian regression the score is
$\sigma^{-2}(y-\mu)\nabla_\phi\mu$ and its expectation over $y$ gives
$\sigma^{-2}\nabla_\phi\mu\nabla_\phi\mu^\top$. All three Gram blocks are
required. Forming $G_aG_a^\top$, $G_aG_b^\top$ and $G_bG_b^\top$ costs
$O\big((m_a^2+m_am_b+m_b^2)p\big)$ and memory
$O\big(m_a^2+m_am_b+m_b^2+(m_a{+}m_b)p\big)$; no $p\times p$ object appears. The
trace $\tr\widehat S_a=\frac1{n_a}\|G_a\|_F^2$ costs $O(m_ap)$. For
$m_a\asymp m_b\asymp m$ the discrepancy is a factor $\Theta(m)$ more expensive;
without balance the self-Gram terms dominate when one group is much larger. Both
are linear in $p$.
\end{proof}

\begin{corollary}[The Frobenius discrepancy bounds the second-order channel]
\label{cor:closure}
Let $\widehat{\mathcal{P}}(\phi)=\sum_{a<b}\|\widehat F^{\mathrm{R},\mathrm{obs}}_a
-\widehat F^{\mathrm{R},\mathrm{obs}}_b\|_F^2$ be computed by \eqref{eq:gram} from the
\emph{observed-label} per-example gradients actually used in training, and let
$\zeta_c=\|\widehat F^{\mathrm{R},\mathrm{obs}}_c-\Fr_c\|_2$ measure its departure from
the model-expected reachable Fisher of \eqref{eq:reachable-fisher}. Then for
$\delta\in\mathcal{H}_\phi$,
\[
\mathcal{G}_{ab}(\varepsilon)\;\le\;\|g_a-g_b\|\,\varepsilon
\;+\;\tfrac12\big(\sqrt{\widehat{\mathcal{P}}(\phi)}+\zeta_a+\zeta_b
+\eta^{\mathcal{H}}_{ab}\big)\varepsilon^2
\;+\;\tfrac{M}{3}\varepsilon^3 .
\]
Driving $\widehat{\mathcal{P}}\to0$ therefore bounds the second-order channel
only up to $\zeta_a+\zeta_b+\eta^{\mathcal{H}}_{ab}$, and does \emph{not} bound
the gap, because the $O(\varepsilon)$ slope term remains and generally dominates.
\end{corollary}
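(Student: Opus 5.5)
The proof will mirror Theorem~\ref{thm:worstcase}: start from the expansion \eqref{eq:gap-expansion} and bound its three terms separately, but now control the curvature term through $\widehat{\mathcal P}$ rather than through $\|\Dab\|^{\mathcal H}_2$. Fix $\delta\in\mathcal H_\phi$ with $\|\delta\|\le\varepsilon\le\varepsilon_0$. By Proposition~\ref{prop:expansion} and the triangle inequality,
\[
|\Delta_{ab}(\phi+\delta)-\Delta_{ab}(\phi)|
\le\|g_a-g_b\|\,\varepsilon+\tfrac12|\delta^\top\Dab\delta|+\tfrac{M}{3}\varepsilon^3 .
\]
Here Cauchy--Schwarz handles the slope term and \textbf{A1} handles the remainder. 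What remains is to show
\[
|\delta^\top\Dab\delta|\le\big(\sqrt{\widehat{\mathcal P}}+\zeta_a+\zeta_b+\eta^{\mathcal H}_{ab}\big)\|\delta\|^2
\]
for horizontal $\delta$. Taking the maximum over the ball then gives the stated bound on $\mathcal G_{ab}(\varepsilon)$.

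For the quadratic term, I will use \eqref{eq:master} and insert the empirical Fishers:
\[
\Dab=\big(\widehat F^{\mathrm{R},\mathrm{obs}}_a-\widehat F^{\mathrm{R},\mathrm{obs}}_b\big)
+\big(\Fr_a-\widehat F^{\mathrm{R},\mathrm{obs}}_a\big)
-\big(\Fr_b-\widehat F^{\mathrm{R},\mathrm{obs}}_b\big)
+(\Xi_a-\Xi_b).
\]
All four pieces are symmetric, so $|\delta^\top S\delta|\le\|S\|_2\|\delta\|^2$ applies to each. The two middle pieces contribute at most $\zeta_a\|\delta\|^2$ and $\zeta_b\|\delta\|^2$ by the definition of $\zeta_c$. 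The defect piece contributes at most $\eta^{\mathcal H}_{ab}\|\delta\|^2$ by the definition of the restricted norm, since $\delta\in\mathcal H_\phi$; this is the only place horizontality is used. The first piece is bounded by $\|\cdot\|_2\le\|\cdot\|_F$. Its squared Frobenius norm is a single summand of $\widehat{\mathcal P}(\phi)$, and every summand is nonnegative, so that norm is at most $\sqrt{\widehat{\mathcal P}(\phi)}$. Proposition~\ref{prop:gram} only guarantees that $\widehat{\mathcal P}$ is the quantity \eqref{eq:gram} actually evaluates, so no estimation enters here.

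I do not expect any step to be genuinely hard; the main care point is bookkeeping between restricted and unrestricted norms. $\zeta_c$ is an unrestricted operator norm, and the empirical Fisher need not vanish on $\ker J$, since observed-label gradients $J^\top\nabla_\theta\ell_i$ do annihilate $\ker J$ but this fact is not needed. The full norm dominates the restricted one, so using unrestricted bounds for those pieces is harmless. For the defect I must keep the restricted norm to obtain $\eta^{\mathcal H}_{ab}$ rather than $\eta_{ab}$.

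The final sentence of the statement is interpretive. Setting $\widehat{\mathcal P}=0$ leaves the residual $\tfrac12(\zeta_a+\zeta_b+\eta^{\mathcal H}_{ab})\varepsilon^2$ together with the linear term $\|g_a-g_b\|\varepsilon$. The Gaussian example given in the remarks on the main results (worst-case change $\varepsilon$ with every curvature quantity zero) shows that the slope term cannot be removed.
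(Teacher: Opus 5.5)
Your proof is correct and is essentially the paper's argument. The paper bounds $\|\Fr_a-\Fr_b\|_2\le\sqrt{\widehat{\mathcal P}}+\zeta_a+\zeta_b$ using the same triangle inequality and the same Frobenius-dominates-operator-norm step, adds $\eta^{\mathcal H}_{ab}$ to bound $\|\Dab\|^{\mathcal H}_2$, and substitutes into Theorem~\ref{thm:worstcase}; you instead inline that theorem's triangle-inequality step on the quadratic form. One small slip in an aside, which nothing in your argument depends on: since $u_i=J^\top\nabla_\theta\ell_i$, the observed-label empirical Fisher \emph{does} vanish on $\ker J$, whereas your sentence says it ``need not''.
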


\begin{proof}
Each summand of $\widehat{\mathcal{P}}$ is nonnegative, so
$\|\widehat F^{\mathrm{R},\mathrm{obs}}_a-\widehat F^{\mathrm{R},\mathrm{obs}}_b\|_2
\le\|\widehat F^{\mathrm{R},\mathrm{obs}}_a-\widehat F^{\mathrm{R},\mathrm{obs}}_b\|_F
\le\sqrt{\widehat{\mathcal{P}}(\phi)}$. The training statistic is the
observed-label empirical matrix, whereas Proposition~\ref{prop:defect} is stated
for the model-expected $\Fr_c$; two applications of the triangle inequality give
\[
\|\Fr_a-\Fr_b\|_2\;\le\;\sqrt{\widehat{\mathcal{P}}(\phi)}+\zeta_a+\zeta_b .
\]
Combining with $\|\Dab\|^{\mathcal{H}}_2\le\|\Fr_a-\Fr_b\|_2+\eta^{\mathcal{H}}_{ab}$
and substituting into the $\mathcal{G}_{ab}$ bound of
Theorem~\ref{thm:worstcase}, which retains the slope term, gives the display.
The $\zeta_c$ are not controlled by the penalty and do not vanish with batch
size in general: the observed-label empirical Fisher is neither the expected
Fisher nor an unbiased estimator of it \citep{kunstner2019limitations}, so
$\zeta_c$ contains a systematic component as well as sampling error.
\end{proof}

\begin{remark}[The slope term cannot be omitted]
\label{rem:slope-needed}
Take the fixed-variance Gaussian model $p_\phi(y)=\mathcal N(\phi,1)$ covered by
\textbf{A2}, evaluated at $\phi=0$, with the group \emph{distributions}
$Y_a\equiv-1$ and $Y_b\in\{-1,+1\}$ equiprobably. This specifies the laws, not
only their means, so that every term but the slope is exactly zero. The
observed-label gradient is $\nabla_\phi\ell=\phi-y$, so at $\phi=0$ the
observed-label empirical Fishers are $\E[y^2]=1$ for both groups, matching the
model Fisher $\Fr_a=\Fr_b=1$; hence $\widehat{\mathcal{P}}=\zeta_a=\zeta_b=0$.
Also $H_a=H_b=1$ so $\eta_{ab}=0$, and $M=0$. But the mean gradients are
$g_a=1$ and $g_b=0$, so $g_a-g_b=1$, the gap change is exactly $\delta$, and
$\mathcal{G}_{ab}(\varepsilon)=\varepsilon$. A bound omitting the slope term
would assert $\mathcal{G}_{ab}(\varepsilon)\le0$. This is why
Corollary~\ref{cor:closure} is a bound on the second-order channel only, and why
the Frobenius discrepancy of Section~\ref{sec:frobenius} is not presented as a gap guarantee.
\end{remark}

\subsection{Performance gaps}

\thmBayesfloor*

\begin{proof}
Decompose $R_a-R_b=(R_a^\star-R_b^\star)+(R_a-R_a^\star)-(R_b-R_b^\star)$. By
classification calibration, $0\le R_c-R_c^\star\le\psi^{-1}(\epsilon_c)$
\citep{bartlett2006convexity}, so the difference of the excess terms lies in
$[-\psi^{-1}(\epsilon_b),\psi^{-1}(\epsilon_a)]$ and has absolute value at most
$\max_c\psi^{-1}(\epsilon_c)$. The reverse triangle inequality applied to
$|R_a-R_b|$ and $\gamma_{ab}=|R_a^\star-R_b^\star|$ gives the two-sided display,
and rearranging with $|R_a-R_b|\ge0$ gives the lower bound. If
$\epsilon_a,\epsilon_b\to0$ along a sequence then $\psi^{-1}(\epsilon_c)\to0$,
since $\psi$ is continuous, convex and vanishes only at $0$, so
$|R_a-R_b|\to\gamma_{ab}$.
\end{proof}

The lower bound is $\max\{0,\gamma_{ab}-\max_c\psi^{-1}(\epsilon_c)\}$ and is
vacuous whenever some group carries excess risk at least $\gamma_{ab}$;
Remark~\ref{rem:not-a-floor} shows this is not an artifact of the bound, since
gaps strictly below $\gamma_{ab}$ are attainable by degrading the better-served
group. We therefore make no claim that $\gamma_{ab}$ lower-bounds the achievable
gap.

\begin{remark}[$\gamma_{ab}$ is not a lower bound]
\label{rem:not-a-floor}
A predictor with $R_a^\star=0.2$, $R_b^\star=0.1$ and $R_a=R_b=0.2$ achieves gap
$0<\gamma_{ab}=0.1$, consistent with the inequality because group $b$ then
carries excess risk $0.1$. Gaps below $\gamma_{ab}$ are therefore attainable,
but only by degrading the better-served group, that is, by levelling down. What
Theorem~\ref{thm:bayesfloor} rules out is closing the gap \emph{while} both
groups approach their Bayes risk.
\end{remark}

\begin{remark}[Balanced error and several groups]
\label{rem:balanced}
For binary labels, balanced error $\tfrac12(\mathrm{FNR}_a+\mathrm{FPR}_a)$ is the
$0$--$1$ risk of group $a$ under the reweighted distribution $\widetilde{\mathcal
D}_a$ that keeps the class-conditional laws of $\mathcal D_a$ and sets
$\Pr(Y{=}1)=\tfrac12$. Applying Theorem~\ref{thm:bayesfloor} to
$\widetilde{\mathcal D}_a$ and $\widetilde{\mathcal D}_b$, with $L_c$ the
class-balanced surrogate risk and $R_c^\star$ the balanced Bayes risk, gives the
same conclusion for balanced error; the proof is unchanged. For a maximum gap over
several groups, apply the theorem to each pair: $\max_{a,b}|R_a-R_b|$ converges
to $\max_{a,b}\gamma_{ab}$ when every excess risk vanishes. Multiclass balanced
accuracy would require a multiclass calibration inequality, and MAE a separate
result for conditional-median prediction; we prove neither.
\end{remark}

\begin{remark}[Consistency with the measurements]
\label{rem:predicts-null}
For binary classification Theorem~\ref{thm:bayesfloor} and
Remark~\ref{rem:balanced} suggest one mechanism for Table~\ref{tab:peft}: where
adaptation leaves the model close to its per-group Bayes risk, the gap is close
to the Bayes-risk difference regardless of curvature dispersion. The reported
gap ratios, however, use multiclass balanced accuracy (Fitzpatrick17k) and MAE (UTKFace, FHIBE face), which the theorem does not cover, so for those
entries the mechanism is an analogy, not an explanation, and for UTKFace we
offer none. We did not measure excess risks or Bayes risks, and alternative
explanations (estimation noise, the single seed, coordinate dependence of
the CV ratios, and the 15 non-independent method--task pairs behind
$\rho=-0.17$) are not excluded. The theory does identify where matching could
show an effect: where excess risk is large relative to the Bayes-risk
difference, i.e.\ under-fitted or shifted subgroups.
\end{remark}

%% file: methodology.tex
\section{Training objective}
\label{sec:method}
\label{sec:mitigation}

Section~\ref{sec:theory} gives sufficient conditions and diagnostic quantities;
it does not show that a successful adapter must equalize any trace or matrix
discrepancy. Theorem~\ref{thm:worstcase} shows that, for additive updates in the horizontal
subspace $\mathcal{H}_\phi$ of Section~\ref{sec:theory-setup}, the worst-case
\emph{second-order} term of the subgroup gap over a trust region of radius
$\varepsilon$ is exactly $\tfrac12\|\Dab\|^{\mathcal{H}}_2\varepsilon^2$, which
agrees with $\tfrac12\|\Fr_a-\Fr_b\|^{\mathcal{H}}_2\varepsilon^2$ to within the
defect $\eta^{\mathcal{H}}_{ab}$. A
natural target is therefore a \emph{matrix} discrepancy between reachable
Fishers, for that channel only, since the slope term $\|g_a-g_b\|\varepsilon$ dominates it for
small updates and is not addressed by any curvature statistic. This section
builds a trainable objective around that target. We first describe the
per-example quantity the implementation computes (the squared norm of the
loss gradient with respect to the parameters actually being trained, whose
subgroup mean is exactly a reachable-Fisher trace), then, in
Section~\ref{sec:frobenius}, the Frobenius discrepancy, a computable upper bound
on the operator norm that appears in the magnitude bound, which
Proposition~\ref{prop:gram} makes computable without forming any $p\times p$
matrix and which we use to audit the trained models.
Figure~\ref{fig:method-overview} summarizes the pipeline.

\begin{figure}[H]
\centering
\resizebox{\linewidth}{!}{%
\begin{tikzpicture}[
  node distance=5mm and 7mm,
  box/.style={draw,rounded corners=1.5pt,align=center,minimum height=8mm,
              inner xsep=6pt,fill=blue!4},
  stat/.style={draw,rounded corners=1.5pt,align=center,minimum height=8mm,
               inner xsep=6pt,fill=orange!9},
  arr/.style={-{Latex[length=2mm]},semithick}
]
\node[box] (batch) {group-balanced\\training batch};
\node[box,right=of batch] (grad) {per-example adapter gradient\\$u_i=\nabla_\phi\ell_i$};
\node[stat,right=of grad] (trace) {subgroup sensitivity\\$c_a=\operatorname{mean}_{i:a_i=a}\|u_i\|_2^2$};
\node[stat,right=of trace] (pen) {normalized Huber penalty\\with adaptive group weights};
\node[box,right=of pen] (step) {balanced gradient\\AdamW update};
\draw[arr] (batch) -- (grad);
\draw[arr] (grad) -- (trace);
\draw[arr] (trace) -- (pen);
\draw[arr] (pen) -- (step);
\end{tikzpicture}
}
\caption{Training pipeline. After training, the pairwise Gram discrepancy of Equation~\ref{eq:mit-gram} audits both arms (Section~\ref{sec:frobenius}). The task loss uses every row in the batch. The curvature penalty uses observed-label gradients with respect to the trainable adapter parameters and task head, aggregates their squared norms by subgroup, and combines the resulting penalty gradient with the task gradient.}
\label{fig:method-overview}
\end{figure}

\subsection{Model, groups, and trainable parameters}

Let $\mathcal D=\{(x_i,y_i,a_i)\}_{i=1}^N$ contain an input $x_i$, target $y_i$, and recorded subgroup label $a_i\in\{1,\ldots,K\}$. For each experiment, these labels define the $K$ subgroups whose sensitivities are matched during training.

The paired runs of Section~\ref{sec:mitigation-results} adapt the backbone with
LoRA, DoRA or PiSSA \citep{hu2021lora, liu2024dora, meng2024pissa}. LoRA and PiSSA
use $W=W_0+s\,BA$, with PiSSA initializing $(A,B)$ from the leading singular
components of $W_0$. For DoRA the adapted weight is
\begin{equation}
W=m\,\frac{W_0+BA}{\lVert W_0+BA\rVert_c},
\qquad
B\in\R^{d_{\mathrm{out}}\times r},\quad
A\in\R^{r\times d_{\mathrm{in}}},
\label{eq:mit-dora}
\end{equation}
where $W_0$ is frozen, $A$ and $B$ form the low-rank update, and $m$ is a trainable column-wise magnitude. The paired runs of Section~\ref{sec:mitigation-results} use rank $r=8$, scale $\alpha=16$ (so $s=\alpha/r$), and zero adapter dropout for all three adapters, with the task head trained in full; the diagnostic sweep uses the same LoRA-family settings, with VeRA at rank 256 (Appendix~\ref{app:repro}). We use $\phi$ to denote the complete set of trainable parameters, consisting of the adapter parameters and the task head. All gradients used by the proposed method are calculated with respect to $\phi$.

\subsection{Subgroup sensitivity from per-example gradients}

Let $\ell_i(\phi)=\ell(f_{\theta,\phi}(x_i),y_i)$ be the unreduced observed-label loss, where $\theta$ denotes the frozen parameters. For each example, the implementation computes
\begin{equation}
u_i=\nabla_\phi\ell_i(\phi),
\qquad
t_i=\lVert u_i\rVert_2^2.
\label{eq:mit-ti}
\end{equation}
For subgroup $a$, the corresponding observed-label empirical reachable Fisher
(in the notation of Section~\ref{sec:theory}; it is $p\times p$, not the
$P\times P$ full-model $F_a$) would be
\begin{equation}
\widehat F^{\mathrm{R},\mathrm{obs}}_a=\frac{1}{n_a}\sum_{i\in\mathcal B:a_i=a}u_i u_i^\top .
\label{eq:mit-full-fisher}
\end{equation}
Its trace measures total gradient sensitivity across all trainable
directions and gives one scalar comparable across subgroups, computable exactly
without constructing the full matrix. It is a \emph{necessary} matching
condition (equal matrices have equal traces), but by
Theorem~\ref{thm:trace} it is not sufficient, and its insufficiency is extremal:
among positive semidefinite matrices of equal trace $\tau$, the spectral
discrepancy can be as large as $\tau$. Lemma~\ref{lem:trace-decomp} adds that
the empirical trace decomposes as
$\tr\widehat F^{\mathrm{R},\mathrm{obs}}_a=\|\hat g_a\|_2^2+\tr\widehat\Sigma_a$, so a
trace penalty equalizes a \emph{sum} of a mean-gradient term and a within-group
dispersion term, and the two can trade off while the penalty reports success.
The mean-gradient term is related to, but does not determine, the slope channel
$g_a-g_b$ of Proposition~\ref{prop:expansion}: equal norms $\|\hat g_a\|=\|\hat g_b\|$
do not imply $\hat g_a=\hat g_b$. We therefore treat the trace statistic below as the base
estimator and the training signal, and
Section~\ref{sec:frobenius} complements it with a matrix discrepancy that controls the magnitude of the curvature channel and audits what the trace penalty achieved. By linearity of the trace, $\operatorname{tr}(u_i u_i^\top)=\lVert u_i\rVert_2^2$, so the average of the scalars $t_i$ is exactly $\tr\widehat F^{\mathrm{R},\mathrm{obs}}_a$. In contrast, explicitly storing $\widehat F^{\mathrm{R},\mathrm{obs}}_a$ costs $O(p^2)$ for $p=\dim(\phi)$. The ConvNeXt--DoRA runs contain approximately $1.58$ million trainable parameters; one dense FP32 subgroup matrix would therefore contain about $2.5\times10^{12}$ entries and require roughly $10$ TB. Our matrix-free computation follows the same principle as per-example-gradient and matrix--vector-product curvature methods, but it does not require a randomized trace estimator because the trace of each outer product has the closed form above \citep{pearlmutter1994fast, dangel2019backpack, hutchinson1989stochastic}. For the examples from subgroup $a$ in the current batch $\mathcal B$, we compute
\begin{equation}
c_a(\phi)=\frac{1}{n_a}\sum_{i\in\mathcal B:a_i=a}t_i,
\qquad
n_a=|\{i\in\mathcal B:a_i=a\}|.
\label{eq:mit-ca}
\end{equation}
Thus, $c_a$ measures how strongly the current examples from subgroup $a$ act on the parameters being adapted. A larger value indicates a stronger local response in the trainable parameter space. The proposed objective uses this signal to bring subgroup sensitivities closer together during training.

\subsection{Normalizing and penalizing subgroup differences}

The objective is motivated by Theorem~\ref{thm:impossible} rather than by the
diagnostic sweep: when slopes match and the reachable-Fisher difference is
positive definite on the horizontal subspace with restricted minimum eigenvalue
above the Hessian--Fisher defect, every small nonzero horizontal update increases
the signed loss gap, so reducing the discrepancy is one available intervention,
and the one that is measurable at scale. A trace penalty reaches that condition
only when the two groups' reachable Fishers are Loewner ordered
(Theorem~\ref{thm:trace}(iii)), which we do not measure; applied to every pair,
as here, it is a heuristic suggested by that special case rather than an
intervention the theory guarantees. It is not the only one: the defect is another lever
(Remark~\ref{rem:not-only-lever}), and none of the three conditions (slope
matching, a small defect, the horizontal-step restriction) is enforced by the procedure
below. 
The sweep of Section~\ref{sec:results} supplies a consistency check on the
ranking premise, but the motivation does not rest on it.
During training, we use the observed-label statistic $c_a$ because it is computed directly from the loss gradients used to update the model; Section~\ref{sec:method} keeps it distinct
from the expected-Fisher geometry used in evaluation.

Raw trace values vary by orders of magnitude across datasets, parameterizations, and training epochs, and can be much larger than the task loss. We therefore compare each subgroup with a detached within-batch reference. In the default method, the reference is the unweighted mean over the constrained subgroups present in the batch:
\begin{equation}
c_{\mathrm{ref}}=\operatorname{sg}\!\left[\frac{1}{|\mathcal{A}_B|}\sum_{a\in\mathcal{A}_B}c_a\right],
\qquad
e_a=\frac{c_a-c_{\mathrm{ref}}}{\max\{c_{\mathrm{ref}},\epsilon_{\mathrm{den}}\}},
\label{eq:mit-dev}
\end{equation}
where $\mathcal{A}_B$ is the set of constrained groups present in the batch, $\operatorname{sg}$ denotes stop-gradient, and $\epsilon_{\mathrm{den}}=10^{-8}$ is a fixed floor that keeps the ratio defined when every constrained group has zero statistic (the same floor is used in Equations~\ref{eq:mit-frob-reg} and~\ref{eq:mit-dual}). The quantity $e_a$ is the normalized version of $c_a$. It measures the difference between subgroup $a$ and the batch reference as a proportion of that reference. For example, $e_a=0.2$ means that $c_a$ is 20\% above the subgroup mean, while $e_a=-0.2$ means that it is 20\% below the mean. This normalization places all subgroup differences on a common scale, even when the raw trace values vary across datasets or training stages. The reference is detached so that it provides a fixed target during the current update.

We apply a Huber penalty to the normalized deviations,
\begin{equation}
\rho_\kappa(e)=
\begin{cases}
\tfrac12 e^2, & |e|\le\kappa,\\[2pt]
\kappa(|e|-\tfrac12\kappa), & |e|>\kappa,
\end{cases}
\qquad \kappa=1.
\label{eq:mit-huber}
\end{equation}
Here $\rho_\kappa$ denotes the Huber loss and $\kappa$ is its transition threshold \citep{huber1992robust} (we avoid $\delta$, which denotes an adapter update in Section~\ref{sec:theory}). The implementation uses $\kappa=1$. Since $e_a$ is a relative deviation, this threshold corresponds to an absolute difference equal to the reference value. The loss is quadratic when $|e_a|\le 1$, so small subgroup differences are corrected smoothly. It becomes linear when $|e_a|>1$, which limits the effect of an unusually large batch estimate. In this way, the penalty remains sensitive near the matching point without allowing one extreme subgroup value to dominate the update. With subgroup weights $\mu_a$, the regularizer is
\begin{equation}
R(\phi,\mu)=\sum_{a\in\mathcal{A}_B}\mu_a\rho_\kappa(e_a).
\label{eq:mit-regularizer}
\end{equation}

The default reference is the subgroup mean. The ablations replace it with the minimum subgroup trace or replace Huber with linear, absolute, Euclidean-norm, or one-sided hinge penalties. These alternatives are specified as variants of the same pipeline rather than as separate methods.

\subsection{A matrix discrepancy for auditing trained models}
\label{sec:frobenius}

Theorem~\ref{thm:trace} shows that trace equality cannot control the operator
norm that sets the worst-case curvature channel; Corollary~\ref{cor:closure}
shows that the Frobenius discrepancy does bound it, up to the estimator and
defect terms. It is one sufficient, computable matrix norm, not the only one, and
like any scalar norm it does not certify the definiteness condition of
Corollary~\ref{cor:weyl}. Let $G_a\in\R^{n_a\times p}$ collect the observed-label
per-example gradients $u_i$ of the constrained rows of subgroup $a$ in the
current batch, so $\widehat F^{\mathrm{R},\mathrm{obs}}_a=\frac1{n_a}G_a^\top G_a$. By
Proposition~\ref{prop:gram} the pairwise Frobenius discrepancy is available
without forming a $p\times p$ matrix:
\begin{equation}
d_{ab}^2
=\big\|\widehat F^{\mathrm{R},\mathrm{obs}}_a-\widehat F^{\mathrm{R},\mathrm{obs}}_b\big\|_F^2
=\frac{\|G_aG_a^\top\|_F^2}{n_a^2}
-\frac{2\|G_aG_b^\top\|_F^2}{n_an_b}
+\frac{\|G_bG_b^\top\|_F^2}{n_b^2}.
\label{eq:mit-gram}
\end{equation}
Only the $n\times n$ Gram matrices $G_aG_b^\top$ appear. For a balanced batch
with $n$ rows per group and $|\mathcal{A}_B|$ constrained groups, all pairs cost
$O(|\mathcal{A}_B|^2n^2p)$ time and $O(|\mathcal{A}_B|^2n^2+|\mathcal{A}_B|np)$ memory, against $O(|\mathcal{A}_B|np)$ for the
trace. As an operation count on the statistic alone this is a factor $\Theta(n)$
more work; with at most $64$ rows per group the Gram blocks are
at most $64\times64$, so the overhead is bounded by the per-group batch size on
an operation that already materializes $G_a$. No step of
\eqref{eq:mit-gram} depends on $p$ beyond the single matrix product, which is
why the discrepancy is affordable at the $1.58$M trainable parameters of the
ConvNeXt--DoRA runs, where a dense $\widehat F^{\mathrm{R},\mathrm{obs}}_a$ would need
roughly $10$ TB.

That count is not the wall-clock cost, because it omits what the two penalties
share. Both gradients are a sum of one Hessian-vector product per example and
differ only in the vector: $\sum_a(\partial R/\partial c_a)(2/n_a)\sum_{i\in a}
H_iu_{ai}$ for the trace, and $\sum_i H_iv_i$ with $v=(W+W^\top)G$,
$W=\partial R/\partial K$, for the Frobenius form. Each product can be formed and
discarded one example at a time, so neither keeps more than one second-order
graph alive, and the Frobenius penalty pays extra only for the detached $G$ and
$v$ and for the small Gram matrix. Table~\ref{tab:timing} measures single
training steps from a stored checkpoint at the training batch composition. The
shared per-example term dominates: the Frobenius penalty costs
$1.00\times$ the trace penalty on ConvNeXt--DoRA and $0.93\times$ on ViT--LoRA,
with identical peak memory, while both cost $23$--$372\times$ a step with no
penalty. We therefore use the trace during training for scalability and
simplicity rather than for speed, and reserve the Frobenius discrepancy for the
post-hoc audit, whose statistic alone costs $1$--$2\%$ of a penalized step.

\begin{table}[t]
\centering\small
\caption{Measured cost of one training step, median of 20 timed steps after 3
warm-up steps on UTKFace at the training batch
composition ($25$ examples per subgroup $\times\,5$ groups). The optimizer update is
excluded; it is identical in every row. NVIDIA RTX~5090, PyTorch
2.14.0+cu130, CUDA~13.0.}
\label{tab:timing}
\begin{tabular}{lrrr@{\hskip 1.1cm}rrr}
\toprule
& \multicolumn{3}{c}{ConvNeXt--DoRA ($p=1.58$M)} & \multicolumn{3}{c}{ViT--LoRA ($p=1.34$M)} \\
\cmidrule(lr){2-4}\cmidrule(lr){5-7}
Step & s/step & $\times$trace & peak MB & s/step & $\times$trace & peak MB \\
\midrule
no penalty            & 0.478   & ---  & 14{,}656 & 0.289 & ---  & 6{,}539 \\
trace penalty         & 177.63  & 1.00 & 14{,}668 & 6.562 & 1.00 & 6{,}549 \\
Frobenius penalty     & 176.98  & 1.00 & 14{,}668 & 6.093 & 0.93 & 6{,}549 \\

trace statistic only      & 2.953 & 0.02 & 2{,}568 & 1.556 & 0.24 & 2{,}195 \\
Frobenius statistic only  & 2.692 & 0.02 & 5{,}832 & 1.436 & 0.22 & 5{,}248 \\
\bottomrule
\end{tabular}
\end{table}


Normalization and robustification carry over unchanged from the trace penalty.
Writing $s_{ab}=\operatorname{sg}[\tfrac12(\tr\Fr_a+\tr\Fr_b)]$ for a detached
scale reference (which by Lemma~\ref{lem:projection} is the same
reparameterization-sensitive quantity the trace penalty already used, so no new
coordinate dependence is introduced), the normalized discrepancy and the
regularizer are
\begin{equation}
\tilde e_{ab}=\frac{d_{ab}}{\max\{s_{ab},\epsilon_{\mathrm{den}}\}},
\qquad
R_{\mathrm F}(\phi,\mu)=\sum_{a<b}\mu_{ab}\,\rho_\kappa(\tilde e_{ab}),
\label{eq:mit-frob-reg}
\end{equation}
with $\rho_\kappa$ the Huber loss of Equation~\ref{eq:mit-huber} and $\mu_{ab}$
pair weights updated by the same bounded exponentiated rule as
Equation~\ref{eq:mit-dual}, with $\nu_{ab}$ formed from the moving average of
$\tilde e_{ab}$ instead of the subgroup gap. The discrepancy $d_{ab}\ge0$ is
zero iff the two empirical reachable Fishers agree, and unlike a trace difference
it cannot hide directional differences through cancellation of eigenvalues of
opposite sign. It is \emph{not} one-sided: minimizing it may lower one group's
matrix, raise the other's, or move both, so matching can still be achieved by
increasing a subgroup's curvature; the task loss is what discourages the
degenerate versions of this. Corollary~\ref{cor:closure}
then gives the property the trace objective lacks, with all three gaps between
penalty and gap made explicit: writing
$\widehat{\mathcal{P}}(\phi)=\sum_{a<b}d_{ab}^2$ for the quantity actually
computed from observed-label gradients,
\[
\mathcal{G}_{ab}(\varepsilon)\le\|g_a-g_b\|\varepsilon
+\tfrac12\big(\sqrt{\widehat{\mathcal{P}}}+\zeta_a+\zeta_b+\eta^{\mathcal{H}}_{ab}\big)\varepsilon^2
+\tfrac{M}{3}\varepsilon^3
\]
for every pair and every horizontal update direction. Driving the penalty down
bounds the second-order channel \emph{up to} $\zeta_a+\zeta_b+\eta^{\mathcal{H}}_{ab}$;
it does \emph{not} bound the gap, because the $O(\varepsilon)$ slope term
survives and generally dominates. The $\zeta_c$ arise because this section
computes an observed-label empirical matrix while the theory is stated for the
model-expected reachable Fisher, and they carry a systematic component that
larger batches do not remove \citep{kunstner2019limitations}. This is why the
penalty is combined with the task loss and the balanced step below rather than
offered as a standalone guarantee.

Three practical caveats follow from the theory rather than from tuning. Zeroth,
by Proposition~\ref{prop:gauge} and Corollary~\ref{cor:gauge-G} the factorized
adapter, including DoRA, carries a $\mathrm{GL}(r)$ gauge symmetry. At a fixed
point, each reachable Fisher annihilates the at least $r^2$ gauge directions of
$\ker J$ per adapted weight (for $r\le\min(d_{\mathrm{in}},d_{\mathrm{out}})$, as
in all our runs). The scalar statistics are nevertheless \emph{not} constant
along a gauge orbit, because $J$ itself changes along it. In the scalar case
$T(a,b)=ab$ with full-model Fisher $f_c$,
$\Fr_c=f_c\big(\begin{smallmatrix}b^2&ab\\ab&a^2\end{smallmatrix}\big)$ and
$\tr\Fr_c=f_c(a^2+b^2)$: the gauge change $(a,b)\mapsto(ta,b/t)$ preserves the
predictor $ab$ but changes the trace. Traces, Frobenius discrepancies and the
penalty can therefore move along function-preserving directions, and a
comparison of two arms of the same adapter family can partly reflect different
factor scalings. The theorems are unaffected (they are stated in Euclidean
adapter coordinates), and the definiteness conditions of
Section~\ref{sec:theory} hold only for horizontal additive updates, whereas the
optimizer below is not projected onto $\mathcal{H}_\phi$; by
Remark~\ref{rem:not-a-quotient} a non-horizontal step can move the gap the other
way at second order, particularly at the rank-deficient $B=0$ initialization. The
audit below removes the $\mathrm{GL}(r)$ freedom by evaluating each checkpoint at
its \emph{balanced} representative. First,
$d_{ab}$ is a $p$-dimensional matrix discrepancy estimated from $n_a+n_b$
gradients, so it is rank-deficient whenever $n_a+n_b<p$; the estimate is then of
the discrepancy restricted to the span of the sampled gradients, and balanced
batches matter more for \eqref{eq:mit-gram} than for the trace. Second,
Lemma~\ref{lem:projection} implies $d_{ab}$ is not invariant to general
reparameterization, nor even to isotropic rescaling: under $K=cI$ it scales as
$d_{ab}\mapsto c^2d_{ab}$, and only the normalized $\tilde e_{ab}=d_{ab}/s_{ab}$
of Equation~\ref{eq:mit-frob-reg} is invariant there. It may therefore be
compared across subgroups within a fixed adapter but not across adapter
families; the inertia of $\Fr_a-\Fr_b$ is a coordinate-invariant alternative
that we do not report here. A training arm built on $R_{\mathrm F}$ should use the
identical sampler, initialization, optimizer, schedule and update budget as the
trace arm (Section~\ref{sec:method-arms}), so that their difference isolates the
statistic.

\paragraph{Audit protocol.} The audit measures what the trace penalty did to the
matrix it cannot control. For each of the 30 primary cells we load the final
\texttt{plain} and trace-matching checkpoints and, in evaluation mode, draw the
\emph{same} 20 group-balanced held-out batches of $n=64$ examples per subgroup for
both.

\emph{Gauge canonicalization.} Before measuring, every factorized layer is replaced
by its balanced representative: with the rank-$r$ SVD $BA=U\Sigma V^\top$, set
$B\leftarrow U\Sigma^{1/2}$ and $A\leftarrow\Sigma^{1/2}V^\top$, so that
$B^\top B=AA^\top$; any DoRA magnitude is left unchanged. This preserves the
predictor, and balanced factorizations of a full-rank product differ only by an
orthogonal $G\in\mathrm O(r)$, which acts on the adapter coordinates as an
orthogonal map; traces and Frobenius norms of the reachable Fishers are invariant
under it. The audit statistics are therefore invariant to the $\mathrm{GL}(r)$
gauge, and \texttt{validate\_exact.py} checks this on a small network
(Appendix~\ref{app:exact}). We also report the unbalanced values.

\emph{Statistics.} Let $u_{ai}$ be the observed-label per-example gradient of
example $i$ of group $a$ with respect to the trained parameters, $K$ their Gram
matrix, and $s_{ab}=\tfrac12(\tr\widehat F^{\mathrm{R},\mathrm{obs}}_a
+\tr\widehat F^{\mathrm{R},\mathrm{obs}}_b)$. For every subgroup pair we compute
the trace gap, the plug-in Frobenius discrepancy $d_{ab}^2$ of
\eqref{eq:mit-gram}, and the unbiased estimator of the squared discrepancy
\[
\widehat d^{\,2}_{ab,\mathrm U}
=\frac{\sum_{i\ne j}(u_{ai}^\top u_{aj})^2}{n_a(n_a-1)}
+\frac{\sum_{i\ne j}(u_{bi}^\top u_{bj})^2}{n_b(n_b-1)}
-\frac{2\sum_{i,j}(u_{ai}^\top u_{bj})^2}{n_an_b},
\]
whose expectation, for independent draws, is the squared Frobenius distance
between the two groups' population observed-label reachable Fishers (the plug-in value adds a positive per-example noise term that
can differ between arms). It can be negative in finite samples; we report the
normalized \emph{squared} value $\widehat d^{\,2}_{ab,\mathrm U}/s_{ab}^2$ directly,
without clipping or taking a square root. The nonzero eigenvalues of
$\widehat F^{\mathrm{R},\mathrm{obs}}_a-\widehat F^{\mathrm{R},\mathrm{obs}}_b=G^\top DG$,
with $D=\operatorname{diag}(n_a^{-1}\mathbf 1,-n_b^{-1}\mathbf 1)$, equal those of
the small matrix $K^{1/2}DK^{1/2}$, so the operator norm on the span of the
sampled gradients costs one $(n_a{+}n_b)$-dimensional eigendecomposition. From
the same quantities we obtain the surrogate-loss gap from the per-example losses.

\emph{Analysis.} All quantities are compared between the two arms of a cell only.
The primary audit statistic, fixed before measurement, is
$\widehat d^{\,2}_{ab,\mathrm U}/s_{ab}^2$ averaged over pairs; the normalized
trace gap is the manipulation check, the operator-to-Frobenius ratio measures how tight the Frobenius bound is, and the surrogate-loss gap connects the audit to the loss channel of Proposition~\ref{prop:expansion}.
Cells are compared pairwise by win counts, median paired changes with a cell bootstrap,
and sign tests; across cells we report Spearman correlations of the change in
trace gap with the change in Frobenius discrepancy, and of the change in
Frobenius discrepancy with the changes in surrogate-loss and performance gaps. All
are read descriptively, like the rest of Section~\ref{sec:mitigation-results}.
\texttt{measure\_frobenius.py} implements the audit and checks every identity
above against dense matrices.

\subsection{Adaptive subgroup weights}

A uniform penalty would apply the same strength to every subgroup throughout training. We instead adjust the subgroup weights so that groups with consistently high sensitivity receive greater emphasis. This follows the general idea of group-specific weighting used in constrained fairness and group-robust optimization \citep{agarwal2018reductions, sagawa2019distributionally}.

Batch estimates of $c_a$ can vary from one step to the next. We therefore maintain an exponential moving average $\bar c_a$ for each subgroup. The current batch is used to update the model parameters, while the moving averages determine how the subgroup weights change. This allows the weights to respond to persistent differences rather than to a single batch.

Let $\bar c_{\mathrm{ref}}$ be the mean reference computed from the moving averages, matching the reference used by the penalty. The relative gap and exponentiated update are
\begin{equation}
\nu_a=\operatorname{clip}\!\left(
\frac{\bar c_a-\bar c_{\mathrm{ref}}}{\max\{\bar c_{\mathrm{ref}},\epsilon_{\mathrm{den}}\}},-\nu_{\max},\nu_{\max}\right),
\qquad
\log\mu\leftarrow\Pi_\tau(\log\mu+\eta_\mu\nu).
\label{eq:mit-dual}
\end{equation}
The relative gap $\nu_a$ (we avoid $\gamma$, which denotes a Bayes-risk difference in Section~\ref{sec:theory}) compares the smoothed sensitivity of subgroup $a$ with the smoothed subgroup mean. If a subgroup remains above the mean, $\nu_a$ is positive and its weight increases. If it remains below the mean, its weight decreases. Therefore, the penalty does not treat all subgroup deviations at a fixed rate; it places more emphasis on groups whose high sensitivity persists over training.

The update is performed in log-weight space, following exponentiated or mirror-ascent weighting \citep{beck2003mirror}. The operator $\Pi_\tau$ centers the log-weights and clamps them to $[-\tau,\tau]$, and $\mu=\operatorname{softmax}(\log\mu)$ converts them into non-negative weights that sum to one. These two operations prevent the weighting from becoming concentrated on only one subgroup.

We use moving-average coefficient $0.9$, an epoch-level log-weight rate of $0.3$, gap bound $\nu_{\max}=1$, and trust-region bound $\tau=3$. The per-step value of $\eta_\mu$ is derived from the number of batches in an epoch, making the rate comparable across datasets and batch sizes.

\subsection{Why the task and mitigation terms are combined}

For one training step, the intended update combines the pooled task loss with the curvature regularizer:
\begin{equation}
\mathcal L_{\mathrm{step}}(\phi)
=\mathcal L_{\mathrm{task}}(\phi)+\omega R(\phi,\mu).
\label{eq:mit-objective}
\end{equation}
Neither term is sufficient alone. The pooled task loss is required to learn the predictor, but it constrains only average fit: two parameter settings can have similar pooled loss while having very different subgroup gradient norms, and groups that generate larger gradients can dominate its update even in a count-balanced batch. This is precisely the degree of freedom targeted by $R$. Conversely, $R$ alone says nothing about fitting $y$. It is minimized whenever the subgroup statistics match, including degenerate
solutions that are uniformly insensitive and inaccurate; with a symmetric mean
reference the trace form can also match by raising below-mean traces. The
Frobenius form of Equation~\ref{eq:mit-gram} shares this property (it can be
reduced by raising either group's matrix) and differs from the trace form
only in that equal traces no longer imply a zero penalty. The task loss therefore anchors predictive utility, while the mitigation term selects among useful descent directions by discouraging unequal subgroup sensitivity.

The two gradients are computed separately. Differentiating $t_i=\lVert\nabla_\phi\ell_i\rVert^2$ introduces Hessian--vector products, and its gradient can be much larger than the ordinary task gradient even when the penalty value is small. In an uncapped development run, the penalty gradient was about 15 times the task-gradient norm; clipping only the summed gradient then effectively removed the task direction and balanced accuracy remained at chance ($0.500$). We therefore set
\begin{equation}
\omega=\begin{cases}\min\!\left(1,
\beta\dfrac{\lVert g_{\mathrm{task}}\rVert_2}
{\lVert g_{\mathrm{pen}}\rVert_2}\right), & g_{\mathrm{pen}}\ne0,\\[4pt]
1, & g_{\mathrm{pen}}=0,\end{cases}
\qquad
g=g_{\mathrm{task}}+\omega g_{\mathrm{pen}},
\qquad \beta=0.5.
\label{eq:mit-balance}
\end{equation}
(We write $\omega$ rather than $s$ to avoid a clash with the adapter scale of
Proposition~\ref{prop:defect}.)
This caps the curvature-gradient norm at one half of the task-gradient norm while preserving the relative subgroup weights. It also gives a direct first-order safeguard. For $\beta<1$,
\begin{equation}
g_{\mathrm{task}}^\top g
\ge (1-\beta)\lVert g_{\mathrm{task}}\rVert_2^2>0,
\label{eq:mit-descent-safeguard}
\end{equation}
unless the task gradient is zero. Hence the negative combined gradient cannot reverse the task descent direction before AdamW preconditioning; the mitigation term can rotate that direction, but not overwhelm it. The combined gradient is subsequently clipped to norm $1$ before the AdamW update. This mechanism explains why the two terms can operate together; whether they improve held-out utility or fairness remains an empirical question rather than a consequence of the bound.

The image models require chunked computation. The implementation first computes the trace values without retaining a second-order graph, uses them to obtain $c_a$ and the derivative of $R$ with respect to each $c_a$, and then recomputes the weighted trace in small chunks. Because $c_a$ is a mean, each example receives the detached weight $(\partial R/\partial c_a)/n_a$. Summing the chunk gradients therefore gives the same penalty gradient as a full-batch computation; chunking changes peak memory, not the objective. In the reported chunked runs, every constrained row in the batch contributes to the statistic. The task loss also uses every row, including rows outside the constraint set.

\subsection{Balanced batches and comparison arms}
\label{sec:method-arms}

Ordinary shuffled batches can contain very few examples from a rare group, making $c_a$ far noisier for that group than for common groups. The group-balanced sampler draws the same number of examples from every constrained subgroup, with replacement when necessary. If unconstrained categories exist, an additional slice from those categories is included for the task loss. Because this sampler changes how often examples are seen, an epoch is a fixed number of balanced draws rather than one pass through the dataset.

The protocol specifies training arms sharing a sampler, schedule, seed, and model
initialization. The \texttt{plain} arm uses only the pooled task loss. The
\texttt{curvature} arm uses the trace statistic of Equation~\ref{eq:mit-ca}.
Holding the sampler fixed across arms isolates the effect of the subgroup
statistic from the effect of oversampling. The pairwise discrepancy of
Equation~\ref{eq:mit-gram} is used as the matrix audit of these arms; it could
also replace the trace statistic in a \texttt{frobenius} training arm, which
would isolate the effect of penalizing a matrix discrepancy rather than a trace
(Theorem~\ref{thm:trace}, Corollary~\ref{cor:closure}).

\paragraph{What this submission reports.} Section~\ref{sec:mitigation-results}
compares the \texttt{plain} and \texttt{curvature} (Huber trace) arms on a
30-cell grid and audits both with the Frobenius discrepancy.
Theorem~\ref{thm:bayesfloor} suggests that any matching
intervention should move measured gaps most where excess risk is large relative
to the Bayes-risk difference (under-fitted or shifted subgroups) rather
than in the in-distribution settings evaluated here.

Paired runs use AdamW with learning rate $5\times10^{-4}$, no weight decay,
gradient clipping at $1$, and seed 12345. They are scheduled for at most 60
epochs with patience 10, and the retained checkpoint is selected by the pooled
validation metric rather than by a fairness gap. Batch size is tuned per cell as
a hyperparameter, over values between 64 and 1024; the selected value is 64--512
for 29 of the 30 cells and 1024 for UTKFace ViT--DoRA, and every selected value
is reported in Table~\ref{tab:primary-cells} of Appendix~\ref{app:mitigation}.

\subsection{Post-training evaluation}

Post-training evaluation is necessary because the training objective establishes only an optimization-space intervention. It serves four separate checks. First, the matrix audit of Section~\ref{sec:frobenius} tests on held-out batches whether the quantity acted on during training (the trace) and the matrix discrepancy it cannot control actually became more similar across groups. Second, task performance, both its mean and its worst subgroup, tests the failure mode in which traces match because useful learning collapsed. Third, direct outcome metrics such as subgroup performance range, equal-opportunity difference, and average-odds difference test fairness in prediction space; these cannot be inferred from $R$. Fourth, comparison with the \texttt{plain}
arm, which shares the balanced sampler, determines whether a change is
attributable to curvature matching rather than to oversampling alone.
Section~\ref{sec:mitigation-results}
reports these checks for the \texttt{plain} and \texttt{curvature}
arms (the outcome metrics partly in Appendix~\ref{app:mitigation}); a
\texttt{frobenius} training arm is a natural extension.

The training penalty uses the observed-label trace because that is the quantity
implemented by the differentiable loss-gradient path. Evaluation also computes
the expected-Fisher trace of Equation~\ref{eq:trace-estimator}, as an
out-of-objective check: agreement would show that the intervention transfers from
the empirical proxy to the model-distribution geometry studied in the diagnostic
sweep, whereas disagreement reveals proxy mismatch rather than hiding it behind a
shared estimator. A smaller curvature disparity is treated as evidence that the
optimization-space target changed, not as proof that prediction-level
performance gaps improved.

%% file: app_mitigation.tex
\section{Trace-matching experiment: details}
\label{app:mitigation}

\paragraph{Design and analysis.}
\begin{itemize}[leftmargin=*,itemsep=1pt,topsep=2pt]
    \item \emph{Cells.} Each cell is one task, encoder and adapter, trained once
    with group-balanced task-loss training (P) and once with the Huber-penalized
    trace-matching objective (H).
    \item \emph{Primary analysis.} All \mvPn{} cells, each at the batch size
    selected for it as a hyperparameter (Section~\ref{sec:method-arms}).  
    Both
    arms of a pair always share that batch size, which is what makes the pair a
    controlled comparison; the arms are never allowed to differ in it.

    \item \emph{Batch size} Batch size is tuned per cell as a hyperparameter, over values between 64 and 1024; the selected value is 64--512 for 29 of the 30 cells and 1024 for UTKFace ViT--DoRA, and every selected value is reported in Table~\ref{tab:primary-cells}.
    
    \item \emph{Sign convention.} Throughout,
    $R_{\mathrm{gap}}=100\,(G_{\mathrm P}-G_{\mathrm H})/G_{\mathrm P}$, where $G$ is the
    test-split best--worst gap; positive favours trace matching.
    \item \emph{Uncertainty.} The \mvPn{} cells share tasks, encoders and adapters,
    so every interval and test below is descriptive.
    \begin{itemize}[leftmargin=*,itemsep=0pt,topsep=0pt]
        \item Cell bootstrap of the median (10{,}000 replicates, seed 0):
        $[\mvPlo,\,\mvPhi]\%$.
        \item Task-blocked bootstrap (resample tasks, then cells within tasks):
        $[\mvBlo,\,\mvBhi]\%$.
        \item Two-sided sign test $p=\mvPsignp$; Wilcoxon test on
        $\log(G_{\mathrm H}/G_{\mathrm P})$ $p=\mvPwilp$.
    \end{itemize}
    \item \emph{Utility quadrants.}
    \begin{itemize}[leftmargin=*,itemsep=0pt,topsep=0pt]
        \item Gap narrows, utility not worse: \mvQboth.
        \item Gap narrows, utility slightly worse: \mvQgaponly.
        \item Gap not narrower, utility not worse: \mvQutilonly.
        \item Worst utility changes: $\mvUclsworst$~pp balanced accuracy
        (\mvUclsworstcell) and $+\mvUregworst$~years MAE (\mvUregworstcell).
    \end{itemize}
    \item \emph{Equalized rates (secondary, classification only).} EOD falls in
    \mvPeodw{} of the \mvPeodn{} classification cells and AOD in \mvPaodw{} of
    \mvPaodn{}. Appendix~\ref{app:eod} explains why the theory makes no
    prediction for these criteria.
    \texttt{mitigation/primary\_analysis/summary.csv}, the same file that
    produces Section~\ref{sec:mitigation-results}
    .
    \item \emph{Mechanism correlation.} The Fisher-CV correlation is reported
        descriptively, without a $p$-value: $\rho=+0.23$ between the relative changes in CV and gap, and the CV decreased in
    25 of 30 cells. The per-cell values, plain and trace matching, are
    in Table~\ref{tab:mit-cv} and in the supplementary file
    \texttt{mechanism\_cv.csv}, recomputed from the stored checkpoints by
    \texttt{mitigation/mechanism\_cv.py}.
    \item \emph{Matrix audit.} Table~\ref{tab:mit-audit} gives the per-cell
    change in the normalized trace gap, unbiased squared Frobenius discrepancy,
    operator norm and surrogate-loss gap, from
    \texttt{mitigation/audit/fisher\_audit.csv} and
    \texttt{loss\_gaps.csv}.
\end{itemize}

\paragraph{Selection of the penalty form.}
Before fixing the primary intervention, we conducted a fixed-batch development
sweep at batch size 256 across all datasets. We compared the mean-referenced
Huber penalty in Equation~\eqref{eq:mit-regularizer} with an $\ell_1$ penalty,
a squared $\ell_2$ penalty, a one-sided squared-hinge penalty, a variant using
the minimum subgroup trace rather than the mean as the reference, and a
raw-deviation variant. The best individual alternative was
dataset dependent: $\ell_1$ produced the smallest best--worst task-performance
gap on some datasets, whereas squared $\ell_2$ performed best on others. Huber
was the only penalty form that consistently reduced the gap relative to the
plain arm across all datasets in this sweep. We therefore fixed Huber for the
primary paired experiment. This choice provides a robust compromise between
the two behaviors: with $\kappa=1$, the Huber loss is quadratic for
$|e_a|\leq 1$, like a squared $\ell_2$ penalty near the matching point, and
linear for $|e_a|>1$, like an $\ell_1$ penalty for large deviations. The
fixed-batch sweep was used to select the penalty form; the subsequently
reported primary comparison fixes Huber and selects batch size separately for
each dataset--encoder--adapter cell as described in
Section~\ref{sec:method-arms}.

\begin{table}[ht]
\caption{Primary analysis by task and adapter, each cell at its selected batch
size. Image rows pool the three encoders; each diabetes row is its single
FT-Transformer cell, so its counts are out of one. Gap wins count cells with
$R_{\mathrm{gap}}>0$. Plain and trace gaps are means over the cells in the row.
``Gap and utility'' counts cells that narrow the gap without a loss of pooled
utility. $\Delta$ utility is the mean change in balanced accuracy (percentage
points, higher is better) or MAE (years, lower is better).}

\label{tab:mit-settings}
\centering
\small
\resizebox{\linewidth}{!}{%
\input{tables/mitigation_settings_rgap_e1}
}
\end{table}

\begin{figure}[ht]
\centering
\includegraphics[width=\linewidth]{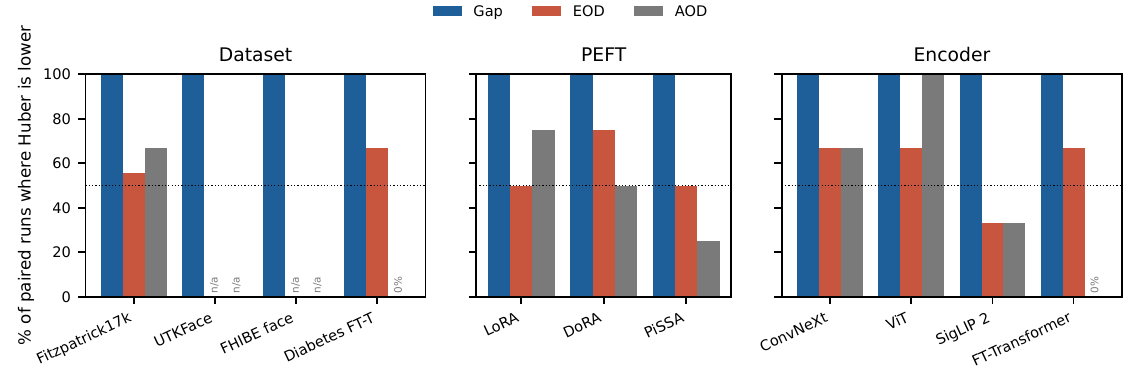}
\caption{Share of the 30 cells, each at its selected batch size, in which trace
matching lowers the gap, EOD or AOD relative to the plain arm, by dataset,
adapter and encoder. EOD and AOD are undefined for the regression tasks (n/a);
0\% marks a measured value that never favoured trace matching. EOD and AOD count
the 12 classification cells only. The dotted line marks 50\%.}
\label{fig:mit-winrates}
\end{figure}

\begin{figure}[ht]
\includegraphics[width=0.9\linewidth]{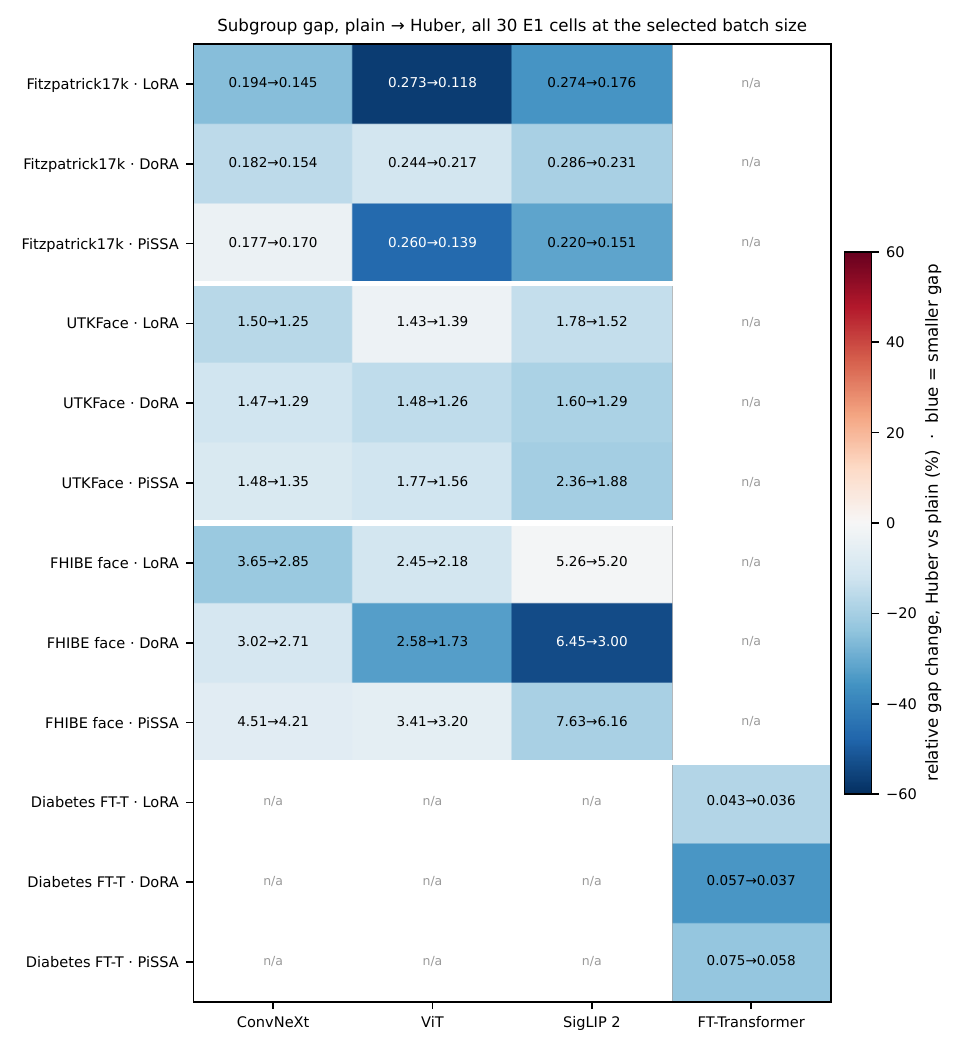}
\caption{Best--worst gap, plain$\to$trace matching, for all 30 cells, each at
its selected batch size. Colour is $R_{\mathrm{gap}}$; blue means trace matching
narrows the gap. The image encoders do not apply to diabetes and the
FT-Transformer does not apply to the image tasks (n/a).}

\label{fig:mit-heatmap}
\end{figure}

\begin{figure}[h]
\centering
\includegraphics[width=\linewidth,trim=0 0 0 19.5,clip]{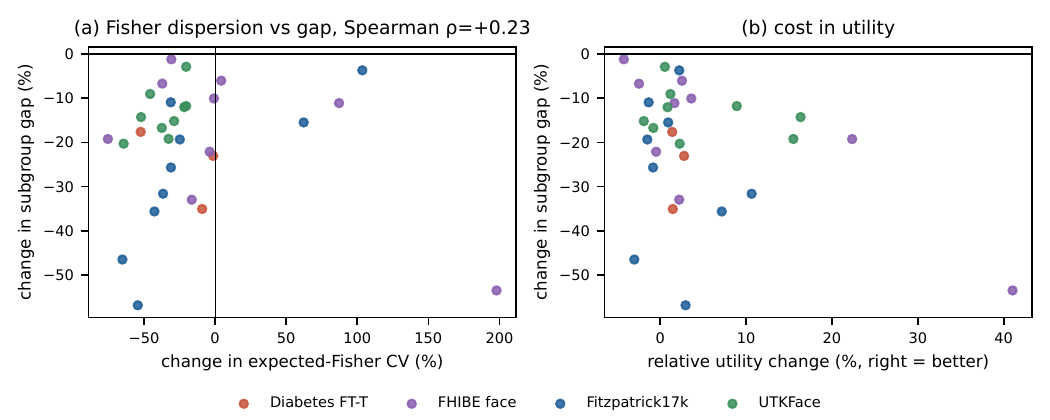}
\caption{Primary cells, from the experiment pipeline. The axes use a
\emph{change} convention, $\Delta G=100(G_{\mathrm H}-G_{\mathrm P})/G_{\mathrm P}=-R_{\mathrm{gap}}$
and likewise for the CV, so negative values are reductions. Left: $\Delta$CV
against $\Delta G$ over all 30 cells (Spearman $\rho=+0.23$; no $p$-value is
reported because the cells share tasks, encoders and adapters). Right: relative
change in pooled utility (right is better) against $\Delta G$. The per-cell CV
values are in Table~\ref{tab:mit-cv}.}
\label{fig:mit-mechanism}
\end{figure}

Batch size is a per-cell hyperparameter rather than a
fixed constant, so it is reported in Table~\ref{tab:primary-cells}; both arms
of a pair always share it, which is what makes the pair a controlled
comparison. Figures~\ref{fig:primary-overview-fitzpatrick17k}--\ref{fig:primary-overview-diabetes}
draw the same 30 cells, one figure per dataset, each panel one encoder $\times$ PEFT cell.

\clearpage
\input{tables/primary_cells}
\begin{figure}[p]
\centering
\includegraphics[width=\textwidth]{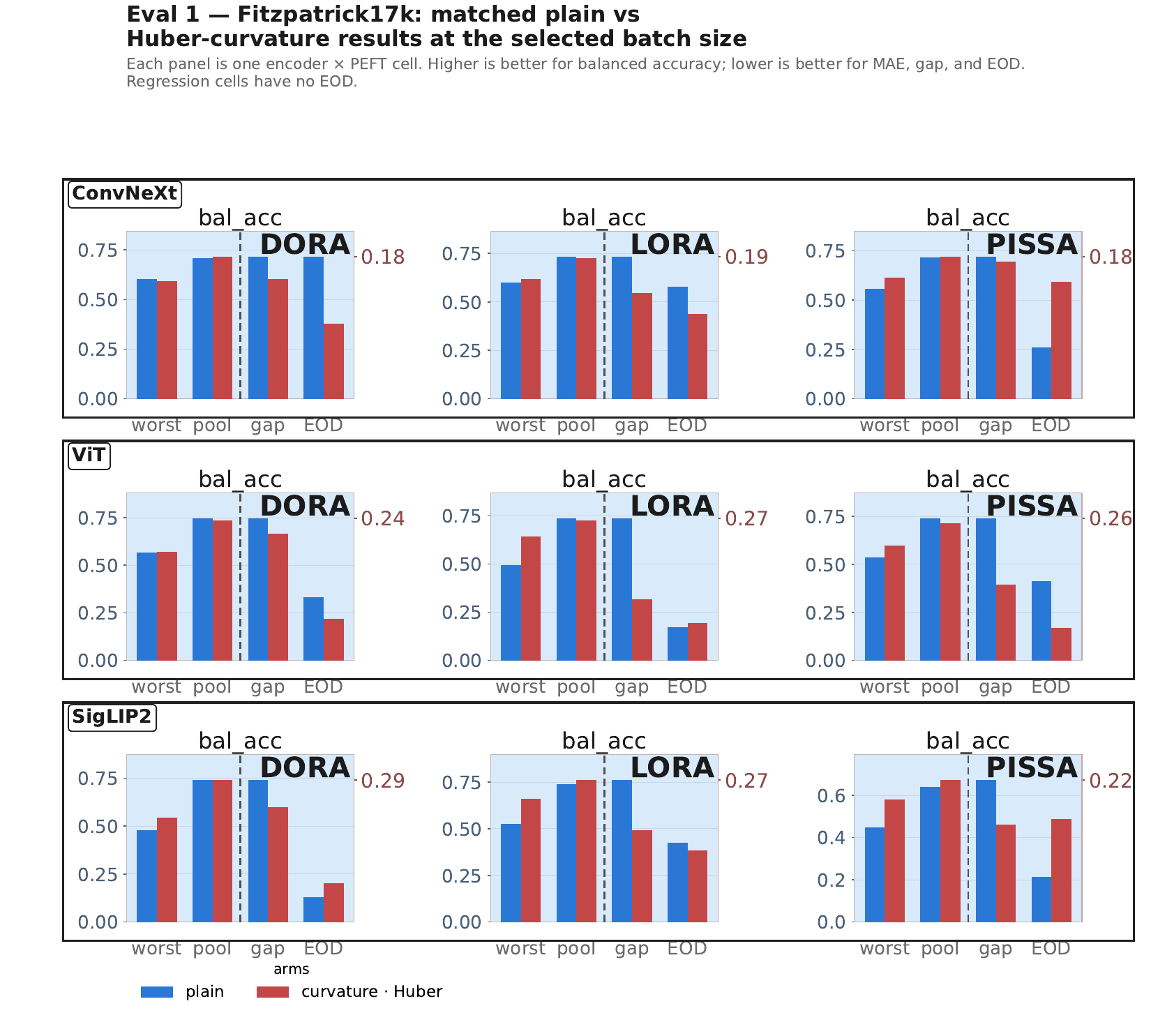}
\caption{Fitzpatrick17k primary cells: matched plain versus Huber-curvature
results at the selected batch size. Rows are the three encoders, columns the
three PEFT methods. Higher is better for balanced accuracy; lower is better
for gap and EOD.}
\label{fig:primary-overview-fitzpatrick17k}
\end{figure}

\begin{figure}[p]
\centering
\includegraphics[width=\textwidth]{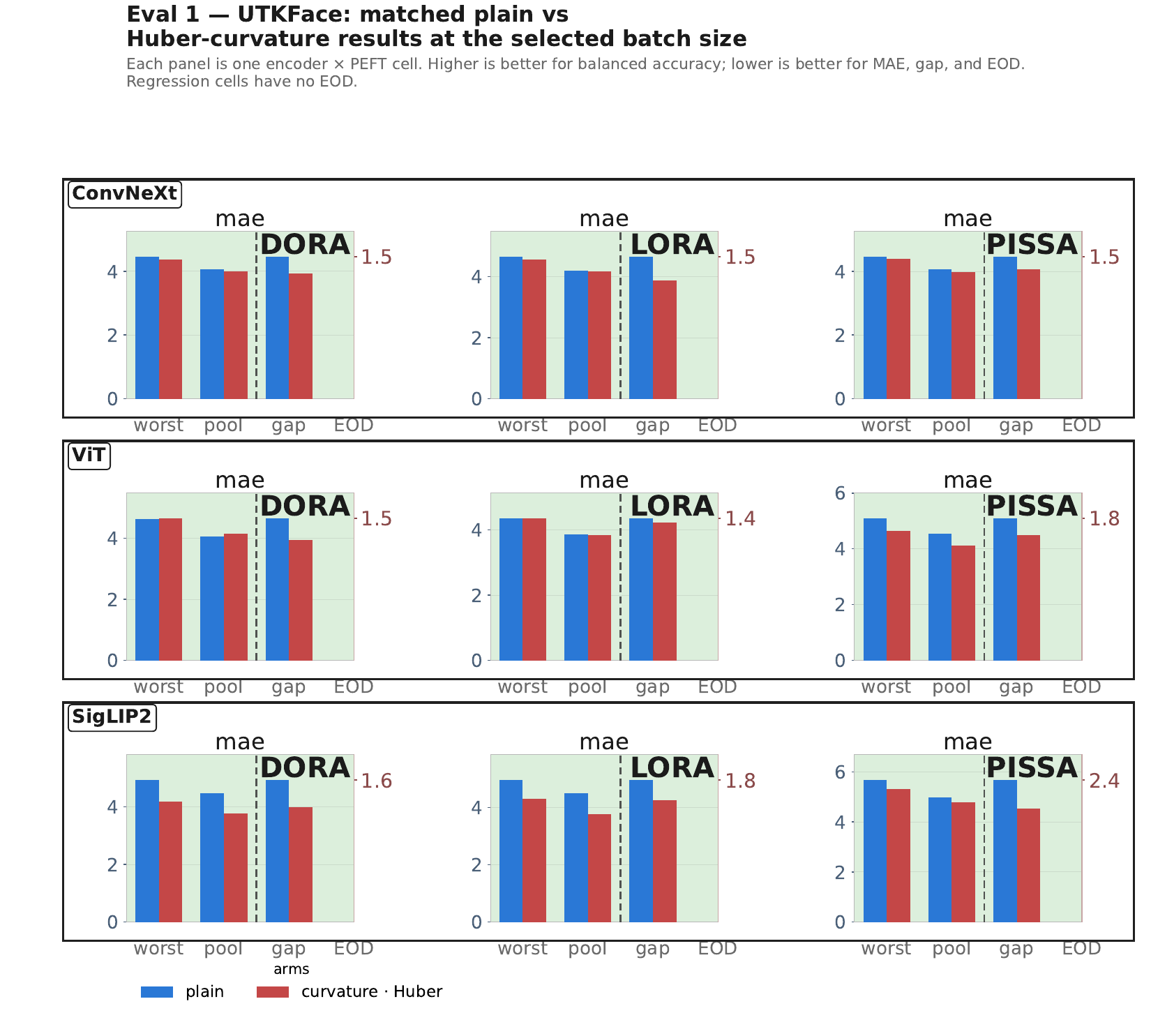}
\caption{UTKFace primary cells: matched plain versus Huber-curvature results
at the selected batch size. Rows are the three encoders, columns the three
PEFT methods. Higher is better for balanced accuracy; lower is better for
MAE and gap. UTKFace is a regression task, so EOD is undefined.}
\label{fig:primary-overview-utkface}
\end{figure}

\begin{figure}[p]
\centering
\includegraphics[width=\textwidth]{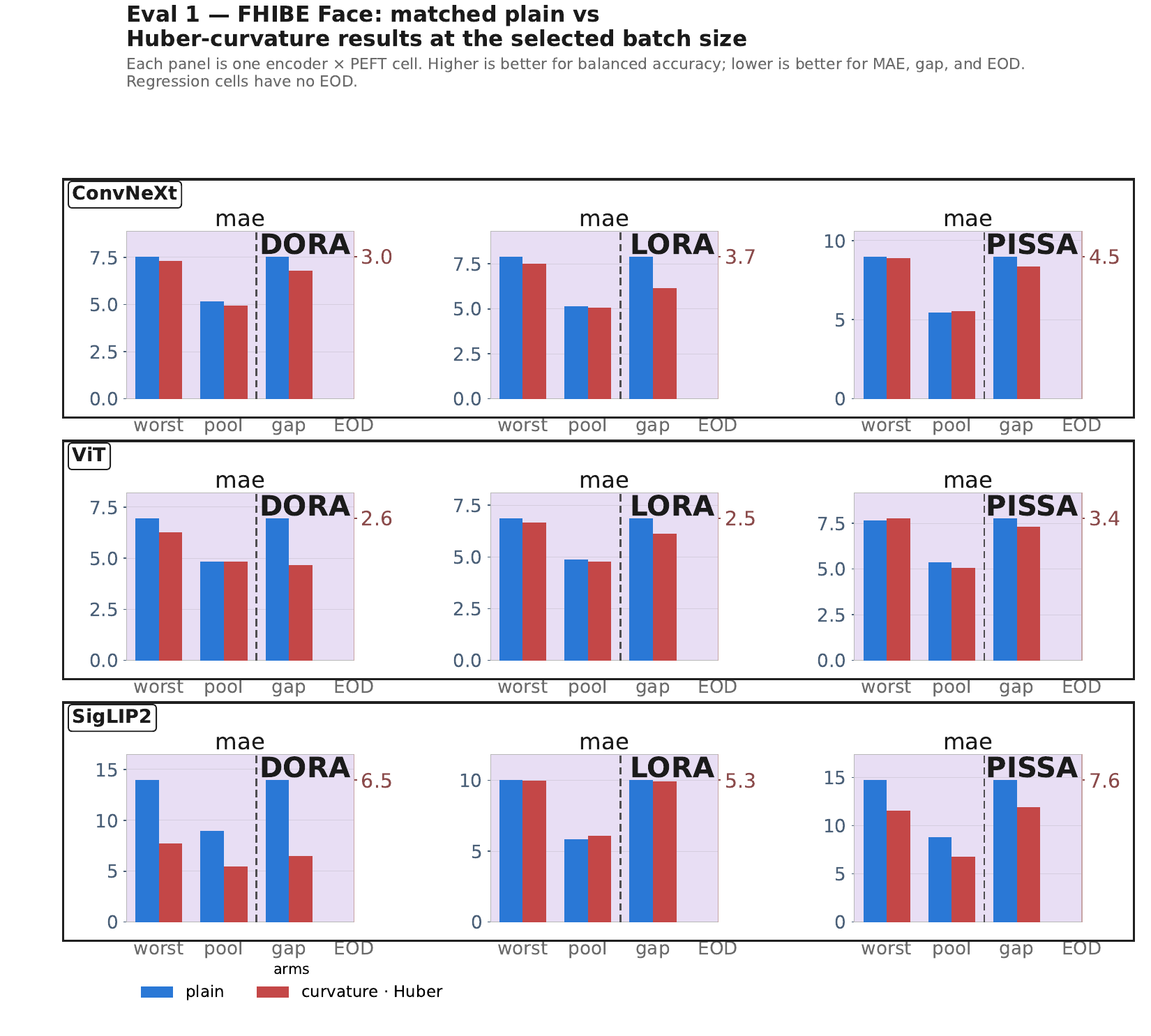}
\caption{FHIBE Face primary cells: matched plain versus Huber-curvature
results at the selected batch size. Rows are the three encoders, columns the
three PEFT methods. Higher is better for balanced accuracy; lower is better
for MAE and gap. FHIBE Face is a regression task, so EOD is undefined.}
\label{fig:primary-overview-fhibe-face}
\end{figure}

\begin{figure}
\centering
\includegraphics[width=\textwidth]{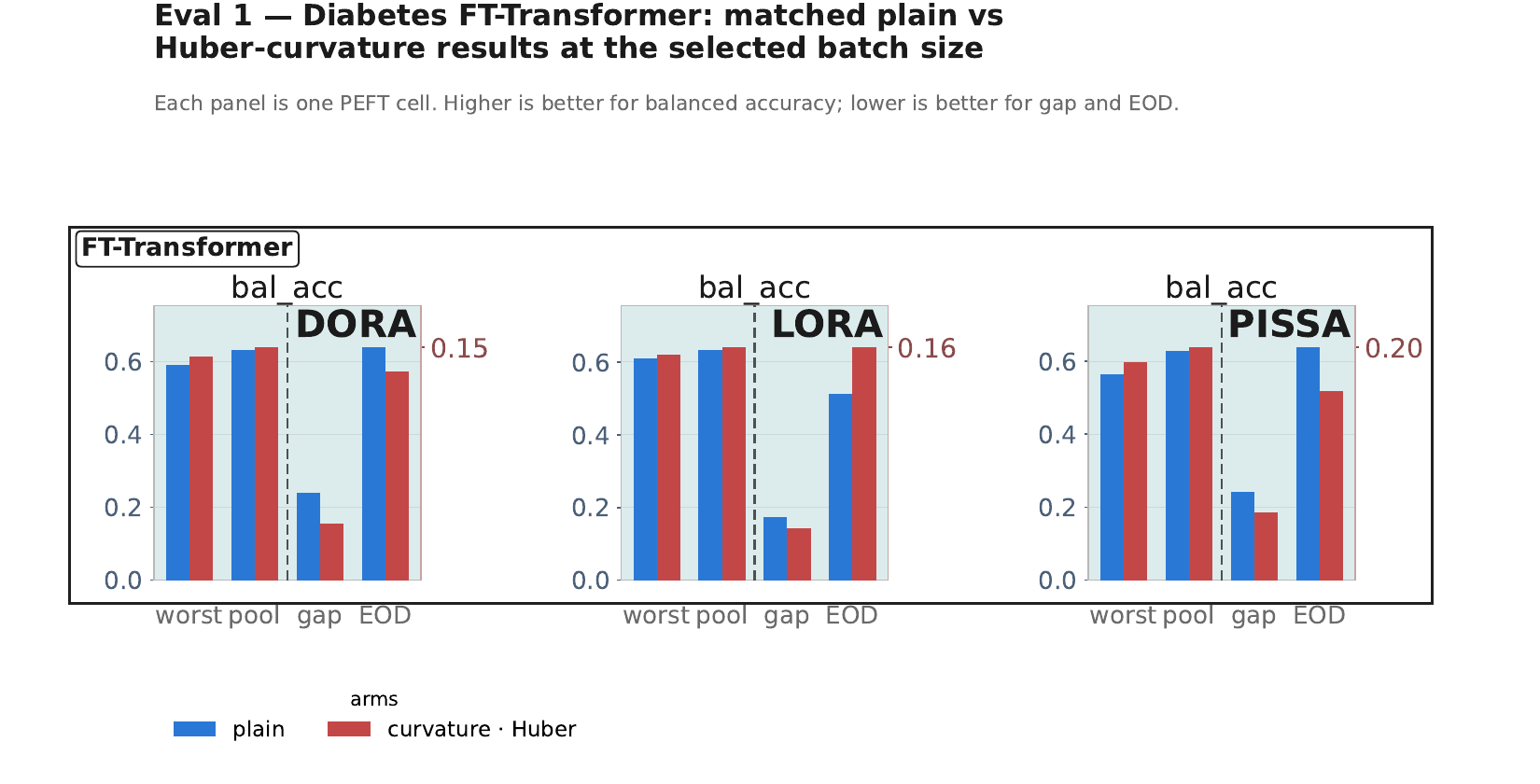}
\caption{Diabetes FT-Transformer primary cells: matched plain versus
Huber-curvature results at the selected batch size, one panel per PEFT
method. Higher is better for balanced accuracy; lower is better for gap and
EOD.}
\label{fig:primary-overview-diabetes}
\end{figure}

\subsection{Cost of adapter versus full-model penalty scope}
\label{app:scope}

The penalty differentiates each per-example loss with respect to the adapter and
head only (\emph{adapter} scope). The alternative differentiates with respect to
every parameter, including the frozen backbone (\emph{full} scope).
\begin{itemize}[leftmargin=*,itemsep=1pt,topsep=2pt]
    \item \emph{Setup.} LoRA, DoRA and PiSSA on SigLIP~2 (375M parameters) and
    ViT (87M parameters), on FHIBE face and UTKFace, with the Huber penalty and
    identical hyperparameters, so scope is the only difference within a row.
    \item \emph{Time.} Full scope is slower in all 12 pairs, by 41--62\%. The
    increase is smallest for DoRA (41--43\%) and largest for PiSSA (59--62\%),
    and it is stable across the two datasets, which differ in size by roughly
    a factor of two.
    \item \emph{Memory.} Full scope raises peak allocation from
    $6.4$--$6.9$\,GB to $13.5$--$14.2$\,GB for LoRA and PiSSA, roughly doubling
    it, and from $12.7$--$13.0$\,GB to $20.6$--$21.1$\,GB for DoRA. The penalty
    must retain activations for the frozen backbone under full scope, so the
    increase is in the backbone's activation memory rather than in parameter
    state; it is therefore independent of dataset size.
    \item \emph{Utilization.} Full scope also runs at higher GPU utilization
    ($94.5$--$97.7\%$ against $89.9$--$95.6\%$), so the extra time is additional
    work rather than a stall.
    \item \emph{Scope of this comparison.} These runs measure cost only. They do
    not test whether full scope changes subgroup gaps or task quality, so the
    choice of adapter scope in the main experiments rests on cost together with
    the identifiability argument of Lemma~\ref{lem:projection}, not on a
    measured quality comparison.
\end{itemize}

\begin{table}[ht]
\caption{Measured cost of adapter versus full penalty scope on a primary evaluation device NVIDIA RTX 5090. Each entry is the
mean $\pm$ standard deviation over repeated timed runs. Train time is seconds
per run; $\Delta t$ is the increase from adapter to full scope. Peak memory is
the maximum allocated during the run, and is independent of the dataset because
the batch composition is fixed, so the two datasets repeat the same value.
GPU utilization is the mean over the run. This table reports cost only; it makes
no quality comparison between the two scopes.}
\label{tab:scope-time}
\centering
\small
\resizebox{\linewidth}{!}{%
\begin{tabular}{llrrrrrrr}
\toprule
 &  &  & \multicolumn{3}{c}{train time (s)} & \multicolumn{2}{c}{peak mem (GB)} & GPU util (\%) \\
\cmidrule(lr){4-6}\cmidrule(lr){7-8}\cmidrule(lr){9-9}
Encoder & Dataset & PEFT & adapter & full & $\Delta t$ & adapter & full & adapter / full \\
\midrule
ViT      & FHIBE face & LoRA  & $63.30\pm0.21$  & $101.30\pm0.34$ & $+60\%$ & 6.42  & 13.55 & $92.2$ / $95.1$ \\
         &            & DoRA  & $87.41\pm0.32$  & $123.74\pm0.58$ & $+42\%$ & 12.69 & 20.55 & $90.7$ / $94.8$ \\
         &            & PiSSA & $62.55\pm0.44$  & $100.67\pm0.32$ & $+61\%$ & 6.39  & 13.53 & $92.2$ / $95.0$ \\
\cmidrule(l){2-9}
         & UTKFace    & LoRA  & $155.35\pm0.49$ & $248.97\pm0.37$ & $+60\%$ & 6.42  & 13.55 & $95.6$ / $97.3$ \\
         &            & DoRA  & $212.89\pm0.52$ & $304.80\pm0.57$ & $+43\%$ & 12.69 & 20.55 & $92.8$ / $96.2$ \\
         &            & PiSSA & $152.04\pm0.50$ & $246.57\pm0.32$ & $+62\%$ & 6.39  & 13.53 & $95.4$ / $97.7$ \\
\midrule
SigLIP 2 & FHIBE face & LoRA  & $68.78\pm0.49$  & $108.37\pm0.66$ & $+58\%$ & 6.88  & 14.16 & $92.6$ / $95.0$ \\
         &            & DoRA  & $93.30\pm0.60$  & $131.24\pm0.49$ & $+41\%$ & 12.97 & 21.13 & $89.9$ / $94.5$ \\
         &            & PiSSA & $67.41\pm0.43$  & $107.33\pm0.50$ & $+59\%$ & 6.84  & 14.14 & $92.7$ / $95.2$ \\
\cmidrule(l){2-9}
         & UTKFace    & LoRA  & $168.15\pm0.64$ & $266.36\pm0.28$ & $+58\%$ & 6.88  & 14.16 & $95.4$ / $97.6$ \\
         &            & DoRA  & $227.19\pm0.26$ & $323.24\pm0.33$ & $+42\%$ & 12.97 & 21.13 & $93.3$ / $96.4$ \\
         &            & PiSSA & $164.25\pm0.44$ & $263.99\pm0.27$ & $+61\%$ & 6.84  & 14.14 & $95.1$ / $97.6$ \\
\bottomrule
\end{tabular}
}
\end{table}
\begin{table}[ht]
\caption{Measured cost of adapter versus full penalty scope, on a second
evaluation device (NVIDIA GB10). Each entry is the mean $\pm$ standard
deviation over the three training epochs. Train time is
seconds per epoch; $\Delta t$ is the increase from adapter to full scope. Peak
memory is the maximum allocated during the run, and is independent of the
dataset because the batch composition is fixed, so the two datasets repeat the
same value. GPU utilization is the mean over the run. This table reports cost
only; it makes no quality comparison between the two scopes.}
\label{tab:scope-time-device2}
\centering
\small
\resizebox{\linewidth}{!}{%
\begin{tabular}{llrrrrrrr}
\toprule
 &  &  & \multicolumn{3}{c}{train time (s)} & \multicolumn{2}{c}{peak mem (GB)} & GPU util (\%) \\
\cmidrule(lr){4-6}\cmidrule(lr){7-8}\cmidrule(lr){9-9}
Encoder & Dataset & PEFT & adapter & full & $\Delta t$ & adapter & full & adapter / full \\
\midrule
ViT      & FHIBE face & LoRA  & $377.83\pm0.30$  & $594.81\pm0.68$  & $+57\%$ & 6.42  & 13.55 & $93.9$ / $94.3$ \\
         &            & DoRA  & $542.34\pm0.87$  & $757.80\pm0.76$  & $+40\%$ & 12.69 & 20.55 & $94.1$ / $94.3$ \\
         &            & PiSSA & $344.20\pm0.72$  & $590.89\pm0.75$  & $+72\%$ & 6.39  & 13.53 & $93.6$ / $94.5$ \\
\cmidrule(l){2-9}
         & UTKFace    & LoRA  & $940.16\pm0.64$  & $1477.05\pm0.49$ & $+57\%$ & 6.42  & 13.55 & $94.6$ / $94.9$ \\
         &            & DoRA  & $1350.09\pm0.69$ & $1883.82\pm1.25$ & $+40\%$ & 12.69 & 20.55 & $94.5$ / $94.6$ \\
         &            & PiSSA & $853.01\pm0.52$  & $1462.13\pm0.70$ & $+71\%$ & 6.39  & 13.53 & $94.3$ / $94.8$ \\
\midrule
SigLIP 2 & FHIBE face & LoRA  & $421.83\pm0.46$  & $657.00\pm0.88$  & $+56\%$ & 6.88  & 14.16 & $93.8$ / $94.6$ \\
         &            & DoRA  & $589.28\pm0.72$  & $818.11\pm0.93$  & $+39\%$ & 12.97 & 21.13 & $94.0$ / $94.4$ \\
         &            & PiSSA & $387.44\pm0.77$  & $652.53\pm0.93$  & $+68\%$ & 6.84  & 14.14 & $93.7$ / $94.4$ \\
\cmidrule(l){2-9}
         & UTKFace    & LoRA  & $1048.41\pm0.57$ & $1656.39\pm0.29$ & $+58\%$ & 6.88  & 14.16 & $94.5$ / $94.9$ \\
         &            & DoRA  & $1456.91\pm0.37$ & $2038.55\pm0.60$ & $+40\%$ & 12.97 & 21.13 & $94.5$ / $94.7$ \\
         &            & PiSSA & $961.78\pm0.64$  & $1630.53\pm0.69$ & $+70\%$ & 6.84  & 14.14 & $94.3$ / $94.8$ \\
\bottomrule
\end{tabular}
}
\end{table}

\clearpage

\input{tables/mit_audit_cells}

\input{tables/mit_cv_cells}

%% file: tables/mitigation_settings_rgap_e1.tex
\begin{tabular}{llrrrrrr}
\toprule
Task & PEFT & gap wins & plain gap & trace gap & median $R_{\mathrm{gap}}$ (\%) & gap and utility & $\Delta$ utility \\
\midrule
Fitzpatrick17k & LoRA & \textbf{3/3} & 0.247 & 0.146 & $+36$ & 1/3 & $+0.1$ \\
Fitzpatrick17k & DoRA & \textbf{3/3} & 0.238 & 0.201 & $+15$ & 2/3 & $-0.0$ \\
Fitzpatrick17k & PiSSA & \textbf{3/3} & 0.219 & 0.153 & $+32$ & 2/3 & $+0.5$ \\
\midrule
UTKFace (MAE, yr) & LoRA & \textbf{3/3} & 1.57 & 1.39 & $+14$ & 3/3 & $-0.26$ \\
UTKFace (MAE, yr) & DoRA & \textbf{3/3} & 1.52 & 1.28 & $+15$ & 2/3 & $-0.22$ \\
UTKFace (MAE, yr) & PiSSA & \textbf{3/3} & 1.87 & 1.59 & $+12$ & 3/3 & $-0.23$ \\
\midrule
FHIBE face (MAE, yr) & LoRA & \textbf{3/3} & 3.79 & 3.41 & $+11$ & 2/3 & $+0.03$ \\
FHIBE face (MAE, yr) & DoRA & \textbf{3/3} & 4.02 & 2.48 & $+33$ & 3/3 & $-1.24$ \\
FHIBE face (MAE, yr) & PiSSA & \textbf{3/3} & 5.19 & 4.53 & $+7$ & 2/3 & $-0.73$ \\
\midrule
Diabetes FT-T & LoRA & \textbf{1/1} & 0.043 & 0.036 & $+18$ & 1/1 & $+0.7$ \\
Diabetes FT-T & DoRA & \textbf{1/1} & 0.057 & 0.037 & $+35$ & 1/1 & $+0.6$ \\
Diabetes FT-T & PiSSA & \textbf{1/1} & 0.075 & 0.058 & $+23$ & 1/1 & $+0.9$ \\
\bottomrule
\end{tabular}

%% file: tables/primary_cells.tex
\begingroup
\scriptsize
\setlength{\tabcolsep}{2.2pt}
\begin{longtable}{@{}lllcllll@{}}
\caption{Primary Eval~1 results for all 30 cells. P and H denote plain and
Huber-curvature training. BS is the selected batch size shared by the two arms
of each pair. Bracketed values are percentage reductions, so positive values
favor Huber. Balanced accuracy is reported for classification and MAE for
regression; regression EOD is undefined (\textemdash).}
\label{tab:primary-cells}\\
\toprule
Dataset & Encoder & PEFT & BS & Worst P$\to$H & Pooled P$\to$H &
Gap P$\to$H [drop] & EOD P$\to$H [drop] \\
\midrule
\endfirsthead

\multicolumn{8}{c}{\tablename\ \thetable\ (continued)}\\
\toprule
Dataset & Encoder & PEFT & BS & Worst P$\to$H & Pooled P$\to$H &
Gap P$\to$H [drop] & EOD P$\to$H [drop] \\
\midrule
\endhead

\midrule
\multicolumn{8}{r}{Continued on next page}\\
\endfoot

\bottomrule
\endlastfoot

Fitzpatrick17k & ConvNeXt & DoRA & 256 &
\ensuremath{0.603\!\to\!0.594} &
\ensuremath{0.708\!\to\!0.716} &
\ensuremath{0.182\!\to\!0.154}\;[\ensuremath{15.5\%}] &
\ensuremath{0.182\!\to\!0.096}\;[\ensuremath{47.4\%}] \\

Fitzpatrick17k & ConvNeXt & LoRA & 256 &
\ensuremath{0.598\!\to\!0.617} &
\ensuremath{0.733\!\to\!0.725} &
\ensuremath{0.194\!\to\!0.145}\;[\ensuremath{25.7\%}] &
\ensuremath{0.153\!\to\!0.116}\;[\ensuremath{24.1\%}] \\

Fitzpatrick17k & ConvNeXt & PiSSA & 256 &
\ensuremath{0.557\!\to\!0.613} &
\ensuremath{0.716\!\to\!0.721} &
\ensuremath{0.177\!\to\!0.170}\;[\ensuremath{3.7\%}] &
\ensuremath{0.064\!\to\!0.145}\;[\ensuremath{-127.7\%}] \\

Fitzpatrick17k & ViT & DoRA & 128 &
\ensuremath{0.566\!\to\!0.570} &
\ensuremath{0.747\!\to\!0.737} &
\ensuremath{0.244\!\to\!0.217}\;[\ensuremath{10.9\%}] &
\ensuremath{0.108\!\to\!0.072}\;[\ensuremath{34.0\%}] \\

Fitzpatrick17k & ViT & LoRA & 128 &
\ensuremath{0.496\!\to\!0.644} &
\ensuremath{0.738\!\to\!0.727} &
\ensuremath{0.273\!\to\!0.118}\;[\ensuremath{56.9\%}] &
\ensuremath{0.063\!\to\!0.072}\;[\ensuremath{-13.8\%}] \\

Fitzpatrick17k & ViT & PiSSA & 128 &
\ensuremath{0.534\!\to\!0.598} &
\ensuremath{0.740\!\to\!0.715} &
\ensuremath{0.260\!\to\!0.139}\;[\ensuremath{46.5\%}] &
\ensuremath{0.145\!\to\!0.059}\;[\ensuremath{59.6\%}] \\

Fitzpatrick17k & SigLIP~2 & DoRA & 256 &
\ensuremath{0.479\!\to\!0.543} &
\ensuremath{0.739\!\to\!0.741} &
\ensuremath{0.286\!\to\!0.231}\;[\ensuremath{19.3\%}] &
\ensuremath{0.050\!\to\!0.078}\;[\ensuremath{-56.8\%}] \\

Fitzpatrick17k & SigLIP~2 & LoRA & 256 &
\ensuremath{0.524\!\to\!0.660} &
\ensuremath{0.739\!\to\!0.762} &
\ensuremath{0.274\!\to\!0.176}\;[\ensuremath{35.6\%}] &
\ensuremath{0.153\!\to\!0.138}\;[\ensuremath{9.6\%}] \\

Fitzpatrick17k & SigLIP~2 & PiSSA & 512 &
\ensuremath{0.448\!\to\!0.580} &
\ensuremath{0.639\!\to\!0.673} &
\ensuremath{0.220\!\to\!0.151}\;[\ensuremath{31.6\%}] &
\ensuremath{0.070\!\to\!0.160}\;[\ensuremath{-127.2\%}] \\

\addlinespace[2pt]

UTKFace & ConvNeXt & DoRA & 64 &
\ensuremath{4.46\!\to\!4.37} &
\ensuremath{4.06\!\to\!4.01} &
\ensuremath{1.47\!\to\!1.29}\;[\ensuremath{12.0\%}] &
\textemdash \\

UTKFace & ConvNeXt & LoRA & 128 &
\ensuremath{4.66\!\to\!4.56} &
\ensuremath{4.20\!\to\!4.17} &
\ensuremath{1.50\!\to\!1.25}\;[\ensuremath{16.7\%}] &
\textemdash \\

UTKFace & ConvNeXt & PiSSA & 64 &
\ensuremath{4.46\!\to\!4.40} &
\ensuremath{4.06\!\to\!3.98} &
\ensuremath{1.48\!\to\!1.35}\;[\ensuremath{9.0\%}] &
\textemdash \\

UTKFace & ViT & DoRA & 1024 &
\ensuremath{4.63\!\to\!4.65} &
\ensuremath{4.06\!\to\!4.15} &
\ensuremath{1.48\!\to\!1.26}\;[\ensuremath{15.2\%}] &
\textemdash \\

UTKFace & ViT & LoRA & 64 &
\ensuremath{4.36\!\to\!4.34} &
\ensuremath{3.86\!\to\!3.84} &
\ensuremath{1.43\!\to\!1.39}\;[\ensuremath{2.9\%}] &
\textemdash \\

UTKFace & ViT & PiSSA & 256 &
\ensuremath{5.09\!\to\!4.65} &
\ensuremath{4.55\!\to\!4.13} &
\ensuremath{1.77\!\to\!1.56}\;[\ensuremath{11.8\%}] &
\textemdash \\

UTKFace & SigLIP~2 & DoRA & 128 &
\ensuremath{4.94\!\to\!4.19} &
\ensuremath{4.47\!\to\!3.77} &
\ensuremath{1.60\!\to\!1.29}\;[\ensuremath{19.2\%}] &
\textemdash \\

UTKFace & SigLIP~2 & LoRA & 128 &
\ensuremath{4.97\!\to\!4.29} &
\ensuremath{4.49\!\to\!3.77} &
\ensuremath{1.78\!\to\!1.52}\;[\ensuremath{14.3\%}] &
\textemdash \\

UTKFace & SigLIP~2 & PiSSA & 128 &
\ensuremath{5.68\!\to\!5.32} &
\ensuremath{4.97\!\to\!4.78} &
\ensuremath{2.36\!\to\!1.88}\;[\ensuremath{20.3\%}] &
\textemdash \\

\addlinespace[2pt]

FHIBE Face & ConvNeXt & DoRA & 128 &
\ensuremath{7.54\!\to\!7.32} &
\ensuremath{5.16\!\to\!4.94} &
\ensuremath{3.02\!\to\!2.71}\;[\ensuremath{10.1\%}] &
\textemdash \\

FHIBE Face & ConvNeXt & LoRA & 128 &
\ensuremath{7.90\!\to\!7.50} &
\ensuremath{5.15\!\to\!5.06} &
\ensuremath{3.65\!\to\!2.85}\;[\ensuremath{22.1\%}] &
\textemdash \\

FHIBE Face & ConvNeXt & PiSSA & 256 &
\ensuremath{8.98\!\to\!8.88} &
\ensuremath{5.47\!\to\!5.55} &
\ensuremath{4.51\!\to\!4.21}\;[\ensuremath{6.7\%}] &
\textemdash \\

FHIBE Face & ViT & DoRA & 256 &
\ensuremath{6.95\!\to\!6.27} &
\ensuremath{4.84\!\to\!4.83} &
\ensuremath{2.58\!\to\!1.73}\;[\ensuremath{32.9\%}] &
\textemdash \\

FHIBE Face & ViT & LoRA & 128 &
\ensuremath{6.87\!\to\!6.68} &
\ensuremath{4.86\!\to\!4.78} &
\ensuremath{2.45\!\to\!2.18}\;[\ensuremath{11.1\%}] &
\textemdash \\

FHIBE Face & ViT & PiSSA & 128 &
\ensuremath{7.64\!\to\!7.77} &
\ensuremath{5.36\!\to\!5.05} &
\ensuremath{3.41\!\to\!3.20}\;[\ensuremath{6.0\%}] &
\textemdash \\

FHIBE Face & SigLIP~2 & DoRA & 64 &
\ensuremath{13.96\!\to\!7.74} &
\ensuremath{8.94\!\to\!5.46} &
\ensuremath{6.45\!\to\!3.00}\;[\ensuremath{53.5\%}] &
\textemdash \\

FHIBE Face & SigLIP~2 & LoRA & 256 &
\ensuremath{10.03\!\to\!10.00} &
\ensuremath{5.82\!\to\!6.08} &
\ensuremath{5.26\!\to\!5.20}\;[\ensuremath{1.2\%}] &
\textemdash \\

FHIBE Face & SigLIP~2 & PiSSA & 128 &
\ensuremath{14.74\!\to\!11.56} &
\ensuremath{8.78\!\to\!6.81} &
\ensuremath{7.63\!\to\!6.16}\;[\ensuremath{19.2\%}] &
\textemdash \\

\addlinespace[2pt]

Diabetes FT-T & FT-T & DoRA & 128 &
\ensuremath{0.591\!\to\!0.615} &
\ensuremath{0.633\!\to\!0.639} &
\ensuremath{0.057\!\to\!0.037}\;[\ensuremath{35.1\%}] &
\ensuremath{0.151\!\to\!0.135}\;[\ensuremath{10.3\%}] \\

Diabetes FT-T & FT-T & LoRA & 256 &
\ensuremath{0.610\!\to\!0.621} &
\ensuremath{0.633\!\to\!0.640} &
\ensuremath{0.043\!\to\!0.036}\;[\ensuremath{17.6\%}] &
\ensuremath{0.127\!\to\!0.159}\;[\ensuremath{-25.1\%}] \\

Diabetes FT-T & FT-T & PiSSA & 128 &
\ensuremath{0.565\!\to\!0.597} &
\ensuremath{0.629\!\to\!0.638} &
\ensuremath{0.075\!\to\!0.058}\;[\ensuremath{23.0\%}] &
\ensuremath{0.198\!\to\!0.161}\;[\ensuremath{18.8\%}] \\

\end{longtable}
\endgroup

%% file: tables/mit_audit_cells.tex
\begin{longtable}{lllrrrr}
\caption{Post-training audit, per cell. Each entry is the change from the plain arm to trace matching, in percent, so negative favours trace matching; bold marks a reduction. Trace gap, Frobenius discrepancy and operator norm are normalised by $s_{ab}$ and averaged over subgroup pairs and held-out batches; the loss gap is the best--worst subgroup gap in the training loss.}\label{tab:mit-audit}\\
\toprule
Task & Encoder & Adapter & trace gap & Frobenius & operator norm & loss gap \\
\midrule
\endfirsthead
\toprule
Task & Encoder & Adapter & trace gap & Frobenius & operator norm & loss gap \\
\midrule
\endhead
\bottomrule
\endfoot
Fitzpatrick17k & ConvNeXt & DoRA & \textbf{-25} & +98 & +1 & \textbf{-49} \\
Fitzpatrick17k & ConvNeXt & LoRA & \textbf{-31} & \textbf{-84} & \textbf{-4} & \textbf{-40} \\
Fitzpatrick17k & ConvNeXt & PiSSA & +20 & \textbf{-41} & +10 & +19 \\
Fitzpatrick17k & SigLIP~2 & DoRA & +4 & \textbf{-18} & \textbf{-3} & \textbf{-33} \\
Fitzpatrick17k & SigLIP~2 & LoRA & +20 & +66 & \textbf{-4} & \textbf{-16} \\
Fitzpatrick17k & SigLIP~2 & PiSSA & \textbf{-17} & \textbf{-25} & +19 & \textbf{-32} \\
Fitzpatrick17k & ViT & DoRA & \textbf{-27} & +36 & \textbf{-14} & \textbf{-51} \\
Fitzpatrick17k & ViT & LoRA & \textbf{-9} & +102 & \textbf{-18} & \textbf{-33} \\
Fitzpatrick17k & ViT & PiSSA & +3 & +75 & \textbf{-2} & \textbf{-51} \\
UTKFace & ConvNeXt & DoRA & \textbf{-9} & \textbf{-25} & \textbf{-8} & \textbf{-4} \\
UTKFace & ConvNeXt & LoRA & \textbf{-12} & \textbf{-48} & \textbf{-6} & \textbf{-22} \\
UTKFace & ConvNeXt & PiSSA & \textbf{-24} & \textbf{-41} & \textbf{-14} & \textbf{-28} \\
UTKFace & SigLIP~2 & DoRA & \textbf{-20} & +204 & \textbf{-12} & \textbf{-53} \\
UTKFace & SigLIP~2 & LoRA & \textbf{-9} & +103 & \textbf{-7} & \textbf{-37} \\
UTKFace & SigLIP~2 & PiSSA & \textbf{-20} & \textbf{-4} & \textbf{-23} & \textbf{-24} \\
UTKFace & ViT & DoRA & +3 & \textbf{-20} & +2 & +8 \\
UTKFace & ViT & LoRA & \textbf{-20} & \textbf{-8} & \textbf{-19} & \textbf{-3} \\
UTKFace & ViT & PiSSA & +3 & +26 & +7 & \textbf{-20} \\
FHIBE face & ConvNeXt & DoRA & \textbf{-13} & +17 & \textbf{-27} & \textbf{-13} \\
FHIBE face & ConvNeXt & LoRA & \textbf{-14} & +7 & \textbf{-14} & \textbf{-31} \\
FHIBE face & ConvNeXt & PiSSA & +0 & \textbf{-14} & \textbf{-21} & \textbf{-8} \\
FHIBE face & SigLIP~2 & DoRA & +42 & \textbf{-38} & \textbf{-2} & \textbf{-63} \\
FHIBE face & SigLIP~2 & LoRA & +16 & +26 & +23 & +8 \\
FHIBE face & SigLIP~2 & PiSSA & \textbf{-12} & \textbf{-25} & \textbf{-55} & \textbf{-41} \\
FHIBE face & ViT & DoRA & \textbf{-21} & \textbf{-10} & \textbf{-20} & \textbf{-30} \\
FHIBE face & ViT & LoRA & +17 & +19 & +11 & \textbf{-16} \\
FHIBE face & ViT & PiSSA & \textbf{-8} & +3 & \textbf{-23} & \textbf{-15} \\
Diabetes FT-T & FT-T & DoRA & \textbf{-38} & \textbf{-5} & \textbf{-26} & \textbf{-18} \\
Diabetes FT-T & FT-T & LoRA & \textbf{-46} & \textbf{-5} & \textbf{-28} & \textbf{-20} \\
Diabetes FT-T & FT-T & PiSSA & \textbf{-33} & +40 & \textbf{-31} & \textbf{-16} \\
\end{longtable}

%% file: tables/mit_cv_cells.tex
\begin{longtable}{lllrrr}
\caption{Expected-Fisher trace coefficient of variation across the constrained subgroups, per cell, recomputed from the stored checkpoints on the test split by \texttt{mitigation/mechanism\_cv.py}. $\Delta$CV is the change from the plain arm to trace matching in percent, so negative means trace matching made the subgroup traces more even; bold marks a reduction. These are the per-cell values behind Figure~\ref{fig:mit-mechanism}.}\label{tab:mit-cv}\\
\toprule
Task & Encoder & Adapter & CV plain & CV matched & $\Delta$CV \\
\midrule
\endfirsthead
\toprule
Task & Encoder & Adapter & CV plain & CV matched & $\Delta$CV \\
\midrule
\endhead
\bottomrule
\endfoot
Fitzpatrick17k & ConvNeXt & DoRA & 0.095 & 0.154 & +62 \\
Fitzpatrick17k & ConvNeXt & LoRA & 0.206 & 0.142 & \textbf{-31} \\
Fitzpatrick17k & ConvNeXt & PiSSA & 0.161 & 0.327 & +104 \\
Fitzpatrick17k & SigLIP~2 & DoRA & 0.164 & 0.123 & \textbf{-25} \\
Fitzpatrick17k & SigLIP~2 & LoRA & 0.165 & 0.095 & \textbf{-43} \\
Fitzpatrick17k & SigLIP~2 & PiSSA & 0.139 & 0.088 & \textbf{-37} \\
Fitzpatrick17k & ViT & DoRA & 0.315 & 0.217 & \textbf{-31} \\
Fitzpatrick17k & ViT & LoRA & 0.414 & 0.189 & \textbf{-54} \\
Fitzpatrick17k & ViT & PiSSA & 0.358 & 0.125 & \textbf{-65} \\
UTKFace & ConvNeXt & DoRA & 0.157 & 0.123 & \textbf{-22} \\
UTKFace & ConvNeXt & LoRA & 0.213 & 0.133 & \textbf{-37} \\
UTKFace & ConvNeXt & PiSSA & 0.197 & 0.107 & \textbf{-46} \\
UTKFace & SigLIP~2 & DoRA & 0.232 & 0.156 & \textbf{-33} \\
UTKFace & SigLIP~2 & LoRA & 0.189 & 0.091 & \textbf{-52} \\
UTKFace & SigLIP~2 & PiSSA & 0.352 & 0.125 & \textbf{-64} \\
UTKFace & ViT & DoRA & 0.138 & 0.098 & \textbf{-29} \\
UTKFace & ViT & LoRA & 0.110 & 0.088 & \textbf{-20} \\
UTKFace & ViT & PiSSA & 0.168 & 0.134 & \textbf{-20} \\
FHIBE face & ConvNeXt & DoRA & 0.224 & 0.222 & \textbf{-1} \\
FHIBE face & ConvNeXt & LoRA & 0.258 & 0.248 & \textbf{-4} \\
FHIBE face & ConvNeXt & PiSSA & 0.247 & 0.155 & \textbf{-37} \\
FHIBE face & SigLIP~2 & DoRA & 0.115 & 0.342 & +198 \\
FHIBE face & SigLIP~2 & LoRA & 0.278 & 0.192 & \textbf{-31} \\
FHIBE face & SigLIP~2 & PiSSA & 0.740 & 0.181 & \textbf{-75} \\
FHIBE face & ViT & DoRA & 0.208 & 0.174 & \textbf{-16} \\
FHIBE face & ViT & LoRA & 0.082 & 0.153 & +87 \\
FHIBE face & ViT & PiSSA & 0.155 & 0.161 & +4 \\
Diabetes FT-T & FT-T & DoRA & 0.107 & 0.097 & \textbf{-9} \\
Diabetes FT-T & FT-T & LoRA & 0.107 & 0.051 & \textbf{-52} \\
Diabetes FT-T & FT-T & PiSSA & 0.102 & 0.101 & \textbf{-1} \\
\end{longtable}

%% file: app_exact.tex
\section{Exact verification on a small adapted network}
\label{app:exact}

The deep-model experiments cannot form Hessians, so they cannot check the
theorems' quantities directly. Here we do so on a network small enough that every
matrix is formed exactly, to show that the results are operational rather than
only formal. The script \texttt{validate\_exact.py} regenerates every number.

\paragraph{Setup.} Inputs $x\in\R^4$ pass through $\tanh(Wx)$ with a rank-2 LoRA
weight $W=W_0+BA$ ($W_0\in\R^{6\times4}$ frozen), then a frozen linear head with
three classes and softmax cross-entropy; the trainable parameters are
$\phi=(A,B)$, $p=20$. Two subgroups of 400 inputs each have shifted input means,
and labels come from a fixed teacher, so the model is misspecified. The
evaluation point is reached by 300 steps of full-batch gradient descent on the
pooled loss from a full-rank initialization. All computations are in float64 with
exact Hessians, model-expected reachable Fishers and adapter Jacobians. The data
are synthetic and generated by the script.

\begin{table}[!h]
\caption{Exact checks on the small network (\texttt{validate\_exact.py}). $S$ is
the part of $\mathcal{H}_\phi$ orthogonal to $g_a-g_b$, on which the slope term
vanishes for every update (the one-step constraint), and $S_+$ the span of the
positive eigenvectors of $\Dab$ restricted to $S$.}
\label{tab:exact}
\centering
\small
\begin{tabular}{@{}p{3.6cm}p{9.6cm}@{}}
\toprule
Check & Result \\
\midrule
Gauge (Prop.~\ref{prop:gauge}) & $\dim\ker J=4=r^2$; $\|\Fr_cv\|\le10^{-15}$ for $v\in\ker J$; the unrestricted $\lambda_{\min}(\Fr_a-\Fr_b)=-4.45<0$. \\
Expansion (Prop.~\ref{prop:expansion}) & over 8 random horizontal directions, the remainder after the slope and curvature terms scales as $\varepsilon^{3.05}$ (median log--log slope). \\
Worst case (Thm.~\ref{thm:worstcase}) & along the dominant restricted eigenvector (eigenvalue $-3.13$, so $\|\Dab\|^{\mathcal{H}}_2=3.13$), the second-order change matches $\mathcal{Q}_{ab}(\varepsilon)=\tfrac12\|\Dab\|^{\mathcal{H}}_2\varepsilon^2$ in magnitude to within 34.6\%, 5.8\%, 2.2\%, 1.2\% and 0.6\% at $\varepsilon=0.2,0.1,0.05,0.025,0.0125$. \\
Sign on $S$ & $\Dab$ restricted to $S$ is indefinite (eigenvalues in $[-2.01,1.27]$); the sign of the actual gap change matched $\operatorname{sign}(\delta^\top\Dab\delta)$ in 216 of 216 random directions at $\varepsilon=0.01$. \\
Forced increase on $S_+$ (Thm.~\ref{thm:impossible}) & $\dim S_+=6$, $\rho_{\min}=0.0105$; all 1,200 sampled steps increased the gap, by at least $2.79\times\tfrac14\rho_{\min}\|\delta\|^2$. \\
Certificate (Cor.~\ref{cor:weyl}) & on $S_+$, $\lambda_{\min}(\Fr_a-\Fr_b)=-0.100$ against a defect of $0.171$: the reachable-Fisher certificate fails although $\Dab$ is positive definite there; on $\mathcal{H}_\phi$, $\eta^{\mathcal{H}}_{ab}/\|\Fr_a-\Fr_b\|^{\mathcal{H}}_2=0.66$. \\
Defect control & with logits linear in the trainable parameters, $\max|H_c-\Fr_c|=2\times10^{-16}$. \\
Audit identities & Gram-identity Frobenius, trace, and the operator norm from $K^{1/2}DK^{1/2}$ match the dense matrices to relative error $\le3\times10^{-15}$. \\
Balanced representative & a random $\mathrm{GL}(2)$ gauge change leaves predictions unchanged ($2\times10^{-15}$) but moves the trace gap from $-9.38$ to $-8.16$ and the Frobenius discrepancy from $11.42$ to $12.85$; after balancing, both copies give $-10.50$ and $12.34$. \\
\bottomrule
\end{tabular}
\end{table}

\paragraph{Reading.} Table~\ref{tab:exact} shows the local theory working as
stated on an actual nonlinear adapted network. The second-order expansion has a
cubic remainder, Theorem~\ref{thm:worstcase}'s worst-case term is attained, and
once the slope term is removed the sign of the curvature term predicts the sign
of the gap change. On a subspace where $\Dab$ is positive definite, every sampled
step increases the gap, as Theorem~\ref{thm:impossible} requires (the theorem is
stated for slope matching on all of $\mathcal{H}_\phi$; here the slope term
vanishes on the subspace itself, and the same proof applies). Two results
are cautionary and consistent with the paper's qualifications. First, the
reachable-Fisher certificate of Corollary~\ref{cor:weyl} fails on the very subspace where $\Dab$ is definite, and not marginally: there $\lambda_{\min}(\Fr_a-\Fr_b)=-0.100$ is negative, so no bound on the defect could restore the certificate, and the definiteness of $\Dab$ on $S_+$ is carried by the defect itself. On $\mathcal{H}_\phi$ the defect is of comparable size to the Fisher difference ($\eta^{\mathcal{H}}_{ab}$ is $0.66$ of it). The defect term in the guarantees is not a formality.
Second, trace and Frobenius statistics change
along a function-preserving gauge orbit and are made invariant by the balanced
representative used in the audit (Appendix~\ref{sec:frobenius}). This is one
small network and one evaluation point; it shows the statements are checkable and
behave as proved, not how large the defect is in the deep models.